\documentclass[11pt]{article}

\usepackage[margin=1.1in]{geometry}

\addtolength{\oddsidemargin}{-.5in}
\addtolength{\evensidemargin}{-.5in}
\addtolength{\textwidth}{1in}
\addtolength{\topmargin}{-.5in}
\addtolength{\textheight}{0.5in}


\usepackage[english]{babel}
\usepackage{xspace}
\usepackage{algorithm,algorithmicx}
\usepackage{tcolorbox}
\usepackage{tikz}

\usepackage{framed}
\usepackage{algpseudocode}
\usepackage{amsthm,nccmath}
\usepackage{amsmath,bm}
\usepackage[normalem]{ulem}
\usepackage{bigints}
\usepackage{amssymb}
\usepackage{caption}
\usepackage{graphicx}
\usepackage{cite}
\usepackage{enumerate}
\usepackage{multirow}
\usepackage{hyperref}
\usepackage{booktabs}
\usepackage{hyperref}
\usepackage{url}
\usepackage{color, colortbl}
\usepackage{graphicx}
\usepackage{amsmath,amssymb, bm}
\usepackage{algorithm}
\usepackage{algpseudocode}
\usepackage{xcolor}
\usepackage{flushend}
\usepackage{subfigure}
\usepackage[none]{hyphenat}
\usepackage{enumitem}
\usepackage[bottom]{footmisc}
\usepackage{hyperref}
\hypersetup{
	colorlinks   = true, 
	urlcolor     = blue, 
	linkcolor    = blue, 
	citecolor    = blue   
}



\newcommand{\xio}{\bar{\xi}}

\newcommand{\Ebb}{\mathbb{E}}


\newcommand{\R}{\mathbb{R}}

\newcommand{\aname}{\textsf{FedDRO}}


\usepackage{amsmath,amsfonts,bm}









\def\eqref#1{equation~\ref{#1}}









\def\1{\bm{1}}










\DeclareMathAlphabet{\mathsfit}{\encodingdefault}{\sfdefault}{m}{sl}
\SetMathAlphabet{\mathsfit}{bold}{\encodingdefault}{\sfdefault}{bx}{n}













\usepackage{float}
\usepackage{mdframed,lipsum}
\usepackage{epstopdf}
\usepackage{footnote}
\makesavenoteenv{tabular}
\makesavenoteenv{table}
\usepackage{bbm}
\usepackage{amsthm}
\usepackage[makeroom]{cancel}
\usepackage{stmaryrd}
\usepackage{amsmath,bm}
\usepackage{amssymb}
\usepackage{mathtools, cuted}
\usepackage{kantlipsum,setspace}


\theoremstyle{plain}
\newtheorem{theorem}{Theorem}[section]
\newtheorem{lem}[theorem]{Lemma}
\newtheorem*{lem*}{Lemma}

\newtheorem{cor}{Corollary}
\newtheorem*{cor*}{Corollary}

\theoremstyle{definition}
\newtheorem{defn}{Definition}[section]
\newtheorem*{defn*}{Definition}
\newtheorem{assump}{Assumption}

\theoremstyle{remark}
\newtheorem{rem}{Remark}
\newtheorem*{rem*}{Remark}

\pagestyle{plain}

\title{{\bf \aname: Federated Compositional Optimization for Distributionally  Robust Learning}}
\date{} 
\author{\large Prashant Khanduri$^{\dagger, \ast}$, Chengyin Li$^\dagger$, Rafi Ibn Sultan$^\dagger$, Yao Qiang$^\dagger$, \\
	\large Joerg Kliewer$^\ddagger$, and Dongxiao Zhu$^\dagger$  \\[.5cm]
	\small $^{\dagger}$Department  of Computer Science, \\
	\small Wayne State University, MI, USA\\
	\small $^{\ddagger}$Department of Electrical and Computer Engineering,\\
	\small New Jersey Institute of Technology, NJ, USA\\
\small $^\ast$Email:  \texttt{khanduri.prashant@wayne.edu}} 

 \begin{document}

\maketitle

\begin{abstract}
Recently, compositional optimization (CO) has gained popularity because of its applications in distributionally robust optimization (DRO) and many other machine learning problems. Large-scale and distributed availability of data demands the development of efficient federated learning (FL) algorithms for solving CO problems. Developing FL algorithms for CO is particularly challenging because of the compositional nature of the objective. Moreover, current state-of-the-art methods to solve such problems rely on large batch gradients (depending on the solution accuracy) not feasible for most practical settings. To address these challenges, in this work, we propose efficient FedAvg-type algorithms for solving non-convex CO in the FL setting. We first establish that vanilla FedAvg is not suitable to solve distributed CO problems because of the data heterogeneity in the compositional objective at each client which leads to the amplification of bias in the local compositional gradient estimates. To this end, we propose a novel FL framework \aname~that utilizes the DRO problem structure to design a communication strategy that allows FedAvg to control the bias in the estimation of the compositional gradient. A key novelty of our work is to develop solution accuracy-independent algorithms that do not require large batch gradients (and function evaluations) for solving federated CO problems. We establish $\mathcal{O}(\epsilon^{-2})$ sample and $\mathcal{O}(\epsilon^{-3/2})$ communication complexity in the FL setting while achieving linear speedup with the number of clients. We corroborate our theoretical findings with empirical studies on large-scale DRO problems.
\end{abstract}

\section{Introduction}
\label{sec: Intro}
Compositional optimization (CO) problems deal with the minimization of the composition of functions. A standard CO problem takes the form 
\begin{align}
\label{Eq: Basic_CompositeOpt}
  \min_{x \in \R^d} f(g(x))~~\text{with}~~g(x) \coloneqq \Ebb_{\zeta \sim \mathcal{D}_g}[ g(x ; \zeta)],
\end{align}
where $x \in  \R^d$ is the optimization variable, $f: \R^{d_g} \to \R$ and $g : \R^d \to \R^{d_g}$ are smooth functions, and $\zeta \sim \mathcal{D}_g$ represents a stochastic sample of $g(\cdot)$  from distribution $\mathcal{D}_g$. CO finds applications in a broad range of machine learning applications, including but not limited to distributionally robust optimization (DRO) \cite{qi2022stochastic}, meta-learning \cite{finn2017model}, phase retrieval \cite{duchi2019solving}, portfolio optimization \cite{shapiro2021lectures}, and reinforcement learning \cite{wang2017stochastic}. 

In this work, we focus on a more challenging version of the CO problem (\ref{Eq: Basic_CompositeOpt}) that often arises in the DRO formulation \cite{haddadpour2022learning}. Specifically, the problems that jointly minimize the summation of a compositional and a non-compositional objective. 
DRO has recently garnered significant attention from the research community because of its capability of handling noisy labels \cite{chen2022learning}, training fair machine learning models \cite{qi2022stochastic}, imbalanced \cite{qi2020attentional} and adversarial data \cite{chen2018robust}. A standard approach to solve DRO is to utilize primal-dual algorithms \cite{nemirovski2009robust} that are inherently slow because of a large number of stochastic constraints. The CO formulation enables the development of faster (dual-free) primal-only DRO algorithms \cite{haddadpour2022learning}. The majority of existing works to solve CO problems consider a centralized setting wherein all the data samples are available on a single server. However, modern large-scale machine-learning applications are characterized by the distributed collection of data by multiple clients \cite{kairouz2021advances}. This necessitates the development of distributed algorithms to solve the DRO problem.

Federated learning (FL) is a distributed learning paradigm that allows clients to solve a joint problem in collaboration with a server while keeping the data of each client private \cite{Mcmahan_PMLR_2017}. The clients act as computing units where within each communication round, the clients perform multiple updates while the server orchestrates the parameter sharing among clients. 
  Numerous FL algorithms exist in the literature to tackle standard (non-compositional) optimization problems \cite{li2019convergence,li2020federated,praneeth2019scaffold, Sharma_Arxiv_2019,zhang2021fedpd, khanduri2021stem,karimireddy2020mime}. However, there is a lack of efficient implementations when it comes to distributed CO problems. The major challenges in developing FL algorithms for solving the CO problem are:  \\{$\bf{[C1]}$:} The compositional structure of the problem leads to {\em biased} stochastic gradient estimates and this bias is amplified during local updates, which makes the theoretical analysis of the gradient-based algorithms intractable \cite{chen2021solving}. \\
{$\bf{[C2]}$:} Typically, data distribution at each client is different, referred to as data heterogeneity. Heterogeneously distributed compositional objective results in {\em client drift} during local updates that lead to divergence of federated CO algorithms. This is in sharp contrast to the standard FedAvg for non-CO objectives where client drift can be controlled during the local updates \cite{praneeth2019scaffold}.  
 \\
{$\bf{[C3]}$:} A majority of algorithms for solving CO rely on accuracy-dependent large batch gradients where the batch size depends on the desired solution accuracy, which is not practical from an implementation point of view \cite{huang2021compositional,haddadpour2022learning,guo2022fedx}. 

\noindent
These challenges naturally lead to the following question:
\begin{mdframed}
\centering
{\em Can we develop FL algorithms that tackle $[\mathbf{C1}]-[\mathbf{C3}]$ to solve CO in a distributed setting?}
\end{mdframed}
In this work, we address the above question and develop a novel FL algorithm to solve typical versions of the CO problem that arise in DRO (Section \ref{sec: Problem}). The major contributions of our work are:
\begin{itemize}[leftmargin=*]
\item We for the first time present a negative result that establishes that the vanilla FedAvg (customized to CO) is {\bf incapable of solving} the CO problems as it leads to bias amplification during the local updates. This shows that additional communication/processing is required by FedAvg to mitigate the bias in the local gradient estimation. 
    \item We develop \aname, a novel FL algorithm for solving problems with both {\bf compositional and non compositional non-convex objectives} at the same time. To the best of our knowledge, such an algorithm has been absent from the open literature so far. Importantly, \aname~addresses the above-mentioned challenges by developing several key innovations in the algorithm design.
    \begin{itemize}[leftmargin=*]
        \item \aname~addresses ${\bf [C1]}$ by designing a {\bf communication strategy} that utilizes the specific problem structure resulting from the DRO formulation and allows us to control the gradient bias. Specifically, \aname~utilizes the fact that the compositional functions $g(\cdot)$ are often {\bf low-dimensional embeddings} in the DRO formulation (see Examples in Section \ref{Subsec: Examples}) and can be shared without incurring significant communication costs.  
\item To address $\bf{[C2]}$, we {\bf design the local updates} at each client so that the client drift is bounded. Our analysis captures the effect of data heterogeneity on the performance of \aname.
\item[--] To address $\bf{[C3]}$, we utilize a {\bf hybrid momentum-based estimator} to learn the compositional embedding and combine it with a stochastic gradient (SG) estimator to conduct the local updates. This construction allows us to circumvent the need to compute large accuracy-dependent batch sizes for computing the gradients and the compositional function evaluations.
    \end{itemize}
\item We establish the {\bf convergence} of \aname~and show that to achieve an $\epsilon$-stationary point, \aname~requires $\mathcal{O}(\epsilon^{-2})$ samples while achieving {\bf linear speed-up} with the number of clients, i.e., requiring $\mathcal{O}(K^{-1}\epsilon^{-2})$ samples per client. Moreover, \aname~requires sharing of $\mathcal{O}(\epsilon^{-3/2})$ high-dimensional parameters and $\mathcal{O}(K^{-1}\epsilon^{-2})$ low dimensional embeddings per client.   
\end{itemize}

\paragraph{Notations.} The expected value of a random variable (r.v) $X$ is denoted by $\Ebb[X]$. Conditioned on an event $\mathcal{F}$ the expectation of a r.v $X$ is denoted by $\Ebb[X | \mathcal{F}]$. We denote by $\R$ (resp. $\R^d$) the real line (resp. the $d$ dimensional Euclidean space). We denote by $[K]  \coloneqq \{ 1,\ldots K\}$. The notation $\| \cdot \|$ defines a standard $\ell_2$-norm. For a set $B$, $|B|$ denotes the cardinality of $B$. We use $\xi \sim \mathcal{D}_h$ and $\zeta \sim \mathcal{D}_g$ to denote the stochastic samples of functions $h(\cdot)$ and $g(\cdot)$ from distributions $\mathcal{D}_h$ and $\mathcal{D}_g$, respectively. A batch of samples from $h(\cdot)$ (resp. $g(\cdot)$) is denoted by $b_h$ (resp. $b_g$). Moreover, joint samples of $h(\cdot)$ and $g(\cdot)$ are denoted by $\bar{\xi} = \{ b_h, b_g\}$. We represent by $\bar{x}$ the empirical average of a sequence of vectors $\{ x_k\}_{k = 1}^K$.

\section{Problem}
\label{sec: Problem}
In this work, we focus on a general version of the CO problem defined in (\ref{Eq: Basic_CompositeOpt}). We consider the following problem that often arises in DRO (see Section \ref{Subsec: Examples}) in a distributed setting with $K$ clients 
\begin{align}
\label{Eq: Fl_Prob}
 \inf_{x \in \R^d}  \Big\{  \Phi(x)  \coloneqq        h(x) + f(g(x)) \Big\}~~\text{with $h(x)   \coloneqq  \frac{1}{K} \sum_{k = 1}^K  h_k(x)$ \& $g(x) \coloneqq \frac{1}{K}  \sum_{k = 1}^K   g_k(x)$},
\end{align}
where each client $k \in [K]$ has access to the local functions $h_k : \R^d \to \R$ and $g_k: \R^d \to \R^{d_g}$ while $f(\cdot)$ is same as (\ref{Eq: Basic_CompositeOpt}). The local functions $h_k(\cdot)$ and $g_k(\cdot)$ at 
each client $k \in [K]$ are:
$h_k(x) \! = \!\Ebb_{\xi_k \sim \mathcal{D}_{h_k}} [h_k(x ; \xi_{k})]$ and $g_k(x) \! = \! \Ebb_{\zeta_k \sim \mathcal{D}_{g_k}} [g_k(x ; \zeta_{k})]$
and where $\xi_{k} \sim \mathcal{D}_{h_k}$ (resp. $\zeta_{k} \sim \mathcal{D}_{g_k}$) represents a sample of $h_k(\cdot)$ (resp. $g_k(\cdot)$) from distribution $\mathcal{D}_{h_k}$ (resp. $\mathcal{D}_{g_k}$). Moreover, the data at each client is heterogeneous, i.e., $\mathcal{D}_{h_k} \neq \mathcal{D}_{h_\ell}$ and $\mathcal{D}_{g_k} \neq \mathcal{D}_{g_\ell}$ for $k \neq \ell$ and $k, \ell \in [K]$. 

In comparison to the basic CO in (\ref{Eq: Basic_CompositeOpt}), (\ref{Eq: Fl_Prob}) is significantly challenging, first, because of the presence of both compositional and non-compositional objectives and second, because of the distributed nature of the compositional function $g(\cdot)$. 

\begin{rem}[Comparison to \cite{huang2021compositional} and \cite{gao2022convergence}]
\label{Rem: Comparison_Huang}
   Note that formulation  (\ref{Eq: Fl_Prob}) is significantly different than the setting considered in \cite{huang2021compositional,gao2022convergence}. Specifically, our formulation considers a practical setting where the compositional functions are distributed across agents, i.e., the function is $g = {1}/{K}  \sum_{k = 1}^K \! g_k(x)$. In contrast,  \cite{huang2021compositional,gao2022convergence} consider a setting with objective $\frac{1}{k}\sum_{k = 1}^K f_k(g_k(\cdot))$, note here that the compositional function is local to each agent. This implies that algorithms developed in \cite{huang2021compositional,gao2022convergence} cannot solve problem (\ref{Eq: Fl_Prob}). Importantly, problem (\ref{Eq: Fl_Prob}) models realistic FL training settings while being more challenging compared to \cite{huang2021compositional,gao2022convergence} since in (\ref{Eq: Fl_Prob}) the data heterogeneity of the inner problem also plays a role in the convergence of the FL algorithm. Please see the discussion in Appendix \ref{App: Comparison} for more details. 
\end{rem}

\subsection{Examples: CO reformulation of DRO problems}
\label{Subsec: Examples}
In this section, we discuss different DRO formulations that can be efficiently solved using CO \cite{haddadpour2022learning}. DRO problem with a set of $m$ training samples denoted as $\{ \zeta_i\}_{i = 1}^m$ is  
\begin{align}
\label{Eq: GeneralDRO}
   \min_{x \in \R^d} \max_{ \mathbf{p} \in P_m }\sum_{i = 1}^m p_i \ell(x ; \zeta_i) - \lambda D_{\ast}(\mathbf{p}, \mathbf{1}/m)  
\end{align}
where $x \in \R^d$ is the model parameter, $P_m \coloneqq \{\mathbf{p} \in \R^m: \sum_{i = 1}^m p_i = 1, p_i \geq 0\}$ is $m$-dimensional simplex, $D_{\ast}(\mathbf{p}, \mathbf{1}/m)$ is a divergence metric that measures distance between $\mathbf{p}$ and uniform probability $\mathbf{1}/m \in \R^m$, and $\ell(x, \zeta_i)$ denotes the loss on sample $\zeta_i$, $\rho$ is a constraint parameter, and $\lambda$ is a hyperparameter. Next, we discuss two popular reformulations of (\ref{Eq: GeneralDRO}) in the form of CO problems. \vspace{2mm}\\     
\textbf{DRO with KL-Divergence.}
 Problem (\ref{Eq: GeneralDRO}) is referred to as a KL-regularized DRO when the distance metric $D_\ast (\mathbf{p}, \mathbf{1}/m)$ is the KL-Divergence, i.e., we have 
 $D_\ast (\mathbf{p}, \mathbf{1}/m) = D_{\text{KL}} (\mathbf{p}, \mathbf{1}/m)$ with 
 $D_{\text{KL}} (\mathbf{p}, \mathbf{1}/m) \coloneqq \sum_{i = 1}^m p_i \log (p_i m)$. For this case, an equivalent reformulation of (\ref{Eq: GeneralDRO}) is
 \begin{align}
 \label{eq: DRO_KL}
      \min_{x \in \R^d}   \log \Big( \frac{1}{m} \sum_{i = 1}^m \exp\Big( \frac{\ell(x; \zeta_i)}{\lambda} \Big) \Big),
 \end{align}
which is a CO with $g(x)  = 1/m \sum_{i = 1}^m \exp(\ell(x; \zeta_i)/ \lambda)$, $f(g(x)) = \log(g(x))$ and $h(x) = 0$.\vspace{2mm}\\ 
\textbf{DRO with \texorpdfstring{$\chi^2$} ~- Divergence.}
 Similar to KL-regularized DRO, (\ref{Eq: GeneralDRO}) is referred to as a $\chi^2$-regularized DRO when $D_\ast (\mathbf{p}, \mathbf{1}/m)$ is the $\chi^2$-Divergence, i.e., we have 
 $D_\ast (\mathbf{p}, \mathbf{1}/m) = D_{\chi^2} (\mathbf{p}, \mathbf{1}/m)$ with 
 $D_{\chi^2} (\mathbf{p}, \mathbf{1}/m) \coloneqq m/2\sum_{i = 1}^m (p_i - 1/m)^2$. For this case, an equivalent reformulation of (\ref{Eq: GeneralDRO}) is
\begin{align}
 \label{eq: DRO_Chi}
      \min_{x \in \R^d}   - \frac{1}{2 \lambda m} \sum_{i = 1}^m \big( \ell(x; \zeta_i) \big)^2 + \frac{1}{2 \lambda} \Big( \frac{1}{m} \sum_{i = 1}^m \ell(x; \zeta_i) \Big)^2
 \end{align}
which is again a CO with $g(x) = 1/m \sum_{i = 1}^m \ell(x ; \zeta_i )$, $f(g(x))= g(x)^2/2 \lambda$ and $h(x) = - \frac{1}{2 \lambda m} \sum_{i = 1}^m \big( \ell(x; \zeta_i) \big)^2$.

Note that both (\ref{eq: DRO_KL}) and (\ref{eq: DRO_Chi}) 
can be equivalently restated in the practical FL setting 
of (\ref{Eq: Fl_Prob}) if the overall samples are shared across multiple clients with each client having access to a subset of samples.

\paragraph{Related work.}
Please see Table \ref{tab:table1} for a comparison of current approaches to solve CO problems in distributed settings. For a detailed review of centralized and distributed non-convex CO and DRO problems, please see Appendix \ref{App: Related Work}. Here, we point out some drawbacks of the current approaches to solving federated CO problems:
\begin{itemize}[leftmargin=*]
    \item[--] None of the current works guarantee linear speedup with the number of clients \cite{huang2021compositional,haddadpour2022learning,tarzanagh2022fednest,gao2022convergence}.    
    \item[--] Utilize complicated multi-loop algorithms with momentum or VR-based updates \cite{tarzanagh2022fednest} that sometime require computation of large batch size gradients \cite{haddadpour2022learning} to guarantee convergence. Such algorithms are not preferred in practical implementations.
    \item[--] Consider a restricted setting where the compositional objective is not distributed among nodes \cite{huang2021compositional, gao2022convergence}. Importantly, the algorithms developed therein cannot solve the problem considered in our work (see Appendix \ref{App: Comparison}).
\end{itemize}
Our work addresses all these issues and develops, \aname, the first simple SGD-based FL algorithm to tackle CO problems with the distributed compositional objective. Please see Table \ref{tab:table1} for a comparison of the above works.

\begin{table}[t]
\scriptsize
  \begin{center}
    \caption{Comparison with the existing works. Here, CO-ND refers to CO with a non-distributed compositional part (see Remark \ref{Rem: Comparison_Huang}). CO + Non-CO refers to problems with both CO and Non-CO objectives. VR refers to variance reduction.  (I) and (O) refers to the inner and outer loop, respectively.\\
    $^\ast$Theoretical guarantees for GCIVR exist only for the finite sample setting with $m$ total network-wide samples. }
    \label{tab:table1}
\renewcommand{\arraystretch}{1}
    \resizebox{\textwidth}{!}{\begin{tabular}{|c|c|c|c|c|}
      \toprule 
    \bf ALGORITHM & \bf SETTING &\bf UPDATE & \bf BATCH-SIZES &\bf CONVERGENCE  \\ 
            \hline
             \hline
      {ComFedL \cite{huang2021compositional} } & 
 CO-ND   & 
      SGD & $\mathcal{O}(\epsilon^{-2})$ & $\mathcal{O}(\epsilon^{-4})$   \\ \hline  
      {Local-SCGDM \cite{gao2022convergence} } & 
       CO-ND &  Momentum SGD & $\mathcal{O}(1)$ &  $\mathcal{O}(\epsilon^{-2})$ \\ \hline      
      
     {FedNest \cite{tarzanagh2022fednest} } &  Bilevel & VR & $\mathcal{O}(1)$ & $\mathcal{O}(\epsilon^{-2})$ \\ \hline
      {GCIVR$^\ast$ \cite{haddadpour2022learning}} & 
      CO + Non-CO & VR &  $\sqrt{m}~ \text{(I)}, m ~\text{(O)}$
 &  $\mathcal{O}(\min \{\sqrt{m} \epsilon^{-1} ,\epsilon^{-1.5} \})$ \\ \hline
 \cellcolor{blue!10}   {\aname ~(Ours)} &
  \cellcolor{blue!10}   CO + Non-CO & \cellcolor{blue!10} SGD & \cellcolor{blue!10} $\mathcal{O}(1)$ & \cellcolor{blue!10} $\mathcal{O}(K^{-1} \epsilon^{-2})$\\ 
      \hline
      \hline  
      \end{tabular} }
  \end{center}
\end{table}
 
 \section{Preliminaries}
\label{Sec: Prelim}
In this section, we introduce the assumptions, definitions, and preliminary lemmas.  

\begin{defn}[Lipschitzness]
\label{Def: Lip}
    For all $x_1, x_2 \in \R^d$, a differentiable function $\Phi: \R^d \to \R$ is: Lipschitz smooth if $\| \nabla \Phi(x_1) - \nabla \Phi(x_2) \| \leq L_\Phi \|x_1 - x_2 \|$
   for some $L_\Phi > 0$; Lipschitz if $\| \Phi(x_1) -  \Phi(x_2) \| \leq B_\Phi \|x_1 - x_2 \|$ for some $B_\Phi > 0$ and; Mean-Squared Lipschitz if $\Ebb_{{\xi}}\|  \Phi(x_1; \xi) -  \Phi(x_2 ; \xi) \|^2 \leq B_\Phi^2 \|x_1 - x_2 \|^2$ for some $B_\Phi > 0$.
\end{defn} 
We make the following assumptions on the local and global functions in problem (\ref{Eq: Fl_Prob}). 
\begin{assump}[Lipschitzness]
\label{Ass: Lip}
The following holds 
\begin{enumerate}[leftmargin=*]
    \item The functions $f(\cdot)$, $h_k(\cdot)$, $g_k(\cdot)$ for all $k \in [K]$ are differentiable and Lipschitz-smooth with constants $L_f, L_h, L_g > 0$, respectively.
    \item The function $f(\cdot)$ is Lipschitz with constant $B_f > 0$  and $g_k(\cdot)$ is mean-squared Lipschitz for all $k \in [K]$ with constant $B_g > 0$.
\end{enumerate}
\end{assump}
Next, we introduce the variance and heterogeneity assumptions.

\begin{assump}[Unbiased Gradient and Bounded Variance]
\label{Ass: BoundedVar} The stochastic gradients and function evaluations of the local functions at each client are unbiased and have bounded variance, i.e.,
\begin{align*}
\begin{split}
   &  \mathbb{E}_{\xi_k} [\nabla h_k(x; \xi_k)]   = \nabla h_k(x) ,~
   \mathbb{E}_{\zeta_k } [\nabla g_k(x; \zeta_k)] = \nabla g_k(x), ~ 
  \mathbb{E}_{\zeta_k} [  g_k(x; \zeta_k)]  =  g_k(x), \\
  &   \qquad  \qquad  \qquad   \mathbb{E}_{\zeta_k } [\nabla g_k(x; \zeta_k) \nabla f(y)] = \nabla g_k(x) \nabla f(y)
  \end{split}
\end{align*}
\begin{align*}
\text{and} \qquad \qquad & \mathbb{E}_{\xi_k}\| \nabla h_k(x ; \xi_k) - \nabla h_k(x)\|^2   \leq \sigma^2_h   , \\
&   \mathbb{E}_{\zeta_k }\| \nabla g_k(x ; \zeta_k) - \nabla g_k(x)\|^2   \leq \sigma^2_g ,~
  \mathbb{E}_{\zeta_k }\|  g_k(x ; \zeta_k) -  g_k(x)\|^2  \leq \sigma^2_g, 
\end{align*}
for some $\sigma_h, \sigma_g > 0$ and for all $x \in \R^d$ and $k \in [K]$. 
 \end{assump}

\begin{assump}
[Bounded Heterogeneity]
\label{Ass: BoundedHetero}
The heterogeneity $h_k(\cdot)$ and $g_k(\cdot)$ is characterized as
\begin{align*}
  \sup_{x \in \R^d}  \| \nabla h_k(x) - \nabla h(x) \|^2  \leq \Delta_h^2 ~~\text{and}~~
   \sup_{x \in \R^d}     \| \nabla g_k(x) - \nabla g(x) \|^2 \leq \Delta_g^2,
\end{align*}
for some $\Delta_h, \Delta_g > 0$ for all $k \in [K]$.
\end{assump}
A few comments regarding the assumptions are in order. We note that the above assumptions are commonplace in the context of non-convex CO problems. Specifically, Assumption \ref{Ass: Lip} is required to establish Lipschitz smoothness of the $\Phi(\cdot)$ (see Lemma \ref{Lem: Lip}) and is standard in the analyses of CO problems \cite{wang2017stochastic,chen2021solving,khanduri2023proximal}. Assumption \ref{Ass: BoundedVar} captures the effect of stochasticity in the gradient and function evaluations of the CO problem while Assumption \ref{Ass: BoundedHetero} characterizes the data heterogeneity among clients. We note that these assumptions are standard and have been utilized in the past to establish the convergence of many FL non-CO algorithms \cite{Yu_Jin_Arxiv_2019linear,praneeth2019scaffold,khanduri2021stem,zhang2021fedpd, Woodworth_Minibatch_Arxiv_2020}.

\begin{lem}[Lipschitzness of $\Phi$]
\label{Lem: Lip}
Under Assumption \ref{Ass: Lip} the compositional function, $\Phi(\cdot)$, defined in (\ref{Eq: Fl_Prob}) is Lipschitz smooth with constant: $L_\Phi \coloneqq L_h + B_f L_g + B_g^2 L_f > 0$.
\end{lem}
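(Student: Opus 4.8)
The plan is to compute $\nabla\Phi$ via the chain rule and then bound the gradient difference by a standard add-and-subtract argument, reading off exactly the three constants $L_h$, $B_f L_g$, and $B_g^2 L_f$. First I would record two preliminary facts that let us pass from the per-client assumptions to the averaged functions $h = \frac1K\sum_k h_k$ and $g = \frac1K\sum_k g_k$. Since Lipschitz smoothness is preserved under convex combinations (by the triangle inequality applied to $\nabla h(x_1)-\nabla h(x_2) = \frac1K\sum_k(\nabla h_k(x_1)-\nabla h_k(x_2))$, and similarly for $g$), Assumption~\ref{Ass: Lip}(1) gives that $h$ is $L_h$-smooth and $g$ is $L_g$-smooth. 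Next, the Lipschitzness of $f$ in Assumption~\ref{Ass: Lip}(2) yields the operator-norm bound $\|\nabla f(y)\| \le B_f$ for all $y$, while the mean-squared Lipschitzness of each $g_k$ gives, via Jensen, $\|g_k(x_1)-g_k(x_2)\| \le B_g\|x_1-x_2\|$ and hence the Jacobian bound $\|\nabla g(x)\| \le B_g$ in operator norm (again averaging preserves the constant). These are the only ingredients needed.

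With these in hand, the chain rule gives
\begin{align*}
\nabla\Phi(x) = \nabla h(x) + \nabla g(x)\,\nabla f(g(x)),
\end{align*}
where $\nabla g(x)$ denotes the Jacobian of $g$. Subtracting the gradients at $x_1$ and $x_2$ and using the triangle inequality splits the estimate into the non-compositional piece, bounded directly by $\|\nabla h(x_1)-\nabla h(x_2)\| \le L_h\|x_1-x_2\|$, and the compositional piece $\nabla g(x_1)\nabla f(g(x_1)) - \nabla g(x_2)\nabla f(g(x_2))$. The key manipulation is to add and subtract the cross term $\nabla g(x_1)\nabla f(g(x_2))$, writing the compositional piece as
\begin{align*}
\nabla g(x_1)\big[\nabla f(g(x_1)) - \nabla f(g(x_2))\big] + \big[\nabla g(x_1)-\nabla g(x_2)\big]\nabla f(g(x_2)).
\end{align*}

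For the first bracketed term I would use the Jacobian bound $\|\nabla g(x_1)\|\le B_g$, the $L_f$-smoothness of $f$, and then the $B_g$-Lipschitzness of $g$, giving $\|\nabla g(x_1)\|\,\|\nabla f(g(x_1))-\nabla f(g(x_2))\| \le B_g\cdot L_f\,\|g(x_1)-g(x_2)\| \le B_g^2 L_f\|x_1-x_2\|$. For the second term I would use the $L_g$-smoothness of $g$ together with $\|\nabla f(g(x_2))\|\le B_f$, giving $\le B_f L_g\|x_1-x_2\|$. Summing the three contributions produces $L_\Phi = L_h + B_f L_g + B_g^2 L_f$. The only genuinely delicate point, rather than an obstacle, is bookkeeping the matrix--vector product correctly with the operator norm and justifying that the mean-squared Lipschitz hypothesis on $g_k$ (a statement about the stochastic samples) does indeed yield the deterministic Jacobian bound $\|\nabla g(x)\|\le B_g$ for the averaged function; once that translation is made, the remaining steps are routine norm inequalities.
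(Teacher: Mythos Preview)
Your argument is correct and is exactly the standard proof of this fact. The paper itself states Lemma~\ref{Lem: Lip} without proof (treating it as a known preliminary), so there is no paper proof to compare against; your chain-rule plus add-and-subtract decomposition, together with the observation that mean-squared Lipschitzness of $g_k(\cdot;\zeta)$ implies (via Jensen) deterministic $B_g$-Lipschitzness of $g_k$ and hence of $g$, is precisely how one would expect the authors to justify the constant $L_\Phi = L_h + B_f L_g + B_g^2 L_f$.
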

Lemma \ref{Lem: Lip} establishes Lipschitz smoothness (Definition \ref{Def: Lip}) of the compositional function $\Phi(\cdot)$. In general, $\Phi(\cdot)$ is a non-convex function, and therefore, we cannot expect to globally solve (\ref{Eq: Fl_Prob}). We instead rely on finding approximate stationary points of $\Phi(\cdot)$ defined next.

\begin{defn}[$\epsilon$-stationary point]
\label{Def: StationaryPt}
A point $x$ generated by a stochastic algorithm is an $\epsilon$-stationary point of a differentiable function $\Phi(\cdot)$ if $\Ebb \| \nabla \Phi(x) \|^2 \leq \epsilon$, where the expectation is taken with respect to the stochasticity of the algorithm. 
\end{defn}

\begin{defn}[Sample and Communication Complexity]
\label{Def: Comp}
    The sample complexity is defined as the total number of (stochastic) gradient and function evaluations required to achieve an $\epsilon$-stationary solution. Similarly, communication complexity is defined as the total communication rounds between the clients and the server required to achieve an $\epsilon$-stationary solution. 
\end{defn}

\section{Federated non-convex CO algorithms}
\label{sec: Federated}
 
In this section, we first establish the incapability of vanilla FedAvg to solve CO problems in general. Then, we design a communication-efficient FL algorithm to solve the non-convex CO problem. 

\subsection{Candidate FedAvg algorithms}
\label{subsec: VanillaFedAvg}

\begin{algorithm}[t]
\caption{Vanilla FedAvg for non-convex CO} \label{Algo: FedAvg}
\begin{algorithmic}[1]
\State{\textbf{Input}: Parameters: $\{\eta^t\}_{t=0}^{T-1}$, $I$  }
\State{\textbf{Initialize}: $x^0_k = \bar{x}^0$, $y^0_k = \bar{y}^0$ }
\For{$t = 0$ to $T - 1$}
\For{$k = 1$ to $K$}
\State{ {\begin{minipage}{ 0.76\textwidth}  
$\texttt{Update:}    
\begin{cases}
\text{Compute $\nabla   \Phi_k(x_k^t)$ using (\ref{eq: Deterministic_Grad})}\\
 x^{t+1}_k =  x^t_k - \eta^t \nabla   \Phi_k(x_k^t) \\
  {y}_k^{t + 1} = g_k(x_k^{t+1}) 
	\end{cases}
$\end{minipage}}}  
\If{$t + 1~ \text{mod}~ I = 0$}
\State{
{\begin{minipage}
{0.72\textwidth}$   \texttt{[Case I] Share:}\begin{cases}
 x^{t+1}_k \! = \bar{x}^{t+1}
\end{cases}$\\
$\texttt{[Case II] Share}\!: \begin{cases}\!
 x^{t+1}_k \! = \bar{x}^{t+1}\\
 {y}_k^{t + 1} \! =\! g_k(\bar{x}^{t+1}) \\
 y^{t+1}_k  \!= \!\bar{y}^{t+1}
\end{cases}$
\end{minipage}}
}
\EndIf
\EndFor
\EndFor
 \end{algorithmic}
\end{algorithm}

In this section, we show that vanilla FedAvg is not suitable for solving federated CO problems of form (\ref{Eq: Fl_Prob}). To establish this, we consider a simple deterministic setting with $h(x) = 0$. For this setting, the local gradients of $\Phi(\cdot)$ are estimated as
\begin{align}
\label{eq: Deterministic_Grad}
    \nabla \Phi_k(x) = \nabla g_k(x_k) \nabla f(y_k),
\end{align}
where the sequence $y_k$ represents the local estimate of the inner function $g(x)$. To solve the above problem in a federated setup, we consider two candidate versions of FedAvg described in Case I and II of Algorithm \ref{Algo: FedAvg}. Similar to vanilla FedAvg, each agent performs multiple local updates within each communication round (see Step 5 of Algorithm \ref{Algo: FedAvg}). Moreover, since $g(x) \coloneqq 1/k \sum_{k = 1}^k g_k(x)$ with each agent $k \in [K]$ having access to only the local copy $g_k(\cdot)$, estimating $g(\cdot)$ locally within each communication round is not feasible. Therefore, each agent utilizes $y_k = g_k(x)$ as the local estimate of the inner function $g(\cdot)$. For communication, we consider two protocols. In the first setting, after $I$ local updates, in each communication round the agents share the locally updated parameters with the server and receive the aggregated parameter from the server (see Case I in Step 7). In the second setting, in addition to the locally updated parameters the agents also share their local function evaluations $y_k^t  = g_k(x_k^t)$ with the server and receive the aggregated embedding $\bar{y}^{t}$ from the server. This step is utilized to improve the local estimates of $g(\cdot)$ (see Case II in Step 7). The algorithm executes for a total of $\lfloor T/I \rfloor$ communication rounds.  

In the following, we show that Algorithm \ref{Algo: FedAvg} is not a good choice to solve the federated CO problem presented in (\ref{Eq: Fl_Prob}) even in the simple deterministic setting with $h(x) = 0$. 

\begin{theorem}[Vanilla FedAvg: Non-Convergence for CO]
    There exist functions $f(\cdot)$ and $g_k(\cdot)$ for $k \in [K]$ satisfying Assumptions \ref{Ass: Lip}, \ref{Ass: BoundedVar}, and \ref{Ass: BoundedHetero}, and an initialization strategy such that for a fixed number of local updates $I > 1$, and for any $0< \eta^t  < C_\eta$ for $t \in \{0, 1, \ldots, T - 1\}$ where $C_\eta > 0$ is a constant, the iterates generated by Algorithm \ref{Algo: FedAvg} under both Cases I and II do not converge to the stationary point of $\Phi(\cdot)$, where $\Phi(\cdot)$ is defined in (\ref{Eq: Fl_Prob}) with $h(x) = 0$.
    \label{Thm: FedAvg_No}
\end{theorem}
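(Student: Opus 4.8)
The plan is to prove non-convergence by explicitly constructing a counterexample: a low-dimensional instance where the compositional structure forces a persistent, non-vanishing bias in the FedAvg iterates. The key insight motivating this construction is that each client uses $y_k = g_k(x)$ as a \emph{local} surrogate for the global inner function $g(x) = \frac{1}{K}\sum_\ell g_\ell(x)$, and because $\nabla f$ is evaluated at this wrong point, the aggregated update direction systematically differs from the true gradient $\nabla\Phi(x) = \nabla g(x)\nabla f(g(x))$. I would choose $f$ to be \emph{nonlinear} (e.g.\ quadratic, $f(y) = \tfrac{1}{2}\|y\|^2$, so $\nabla f(y) = y$) precisely because Jensen-type gaps appear: the average of $\nabla g_k(x)\nabla f(g_k(x))$ over $k$ is not equal to $\nabla g(x)\nabla f(g(x))$ when the $g_k$ differ. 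For the cleanest argument I would take $d = d_g = 1$ (or a small constant dimension) and affine local functions $g_k(x) = a_k x + b_k$ with the $a_k, b_k$ chosen heterogeneously so that Assumptions~\ref{Ass: Lip}--\ref{Ass: BoundedHetero} hold trivially (affine maps are Lipschitz-smooth with any constant, deterministic so variances are zero, and the gradient gaps $\|\nabla g_k - \nabla g\|$ are bounded constants).

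Concretely, I would carry out the following steps. First, fix the instance and compute the true stationary point $x^\star$ of $\Phi(x) = f(g(x))$, characterized by $\nabla\Phi(x^\star) = \bar a\,(\bar a\, x^\star + \bar b) = 0$, where $\bar a = \frac{1}{K}\sum_k a_k$ and $\bar b = \frac{1}{K}\sum_k b_k$; this gives $x^\star = -\bar b/\bar a$. Second, I would write out one full round of Algorithm~\ref{Algo: FedAvg}: starting from a synchronized $\bar x^t$, each client runs $I$ local steps with local gradient $\nabla\Phi_k(x_k) = a_k(a_k x_k + b_k)$ (using $y_k = g_k(x_k) = a_k x_k + b_k$), then the server averages. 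The crucial computation is the effective per-round map $\bar x^{t+1} = F(\bar x^t)$; because the local gradients are affine in $x_k$, each local trajectory is an affine recursion and the composite round map $F$ is affine, $F(\bar x) = \alpha\,\bar x + \beta$ for explicit constants $\alpha,\beta$ depending on $\{a_k,b_k\}, \eta, I, K$. Third, I would identify the unique fixed point $x^\dagger = \beta/(1-\alpha)$ of $F$ and show that $x^\dagger \neq x^\star$ for a generic choice of heterogeneous $a_k$: the averaging of $I$-step affine maps with differing multipliers $a_k^2$ does \emph{not} commute with taking the gradient of the averaged function, so the fixed point of the FedAvg dynamics is displaced from the true minimizer by an amount that is $\Theta(1)$ in the heterogeneity and does \emph{not} shrink as $\eta \to 0$ (the leading-order-in-$\eta$ correction already reveals the mismatch). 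Finally, for Case~II I would argue that sharing $\bar y^{t+1}$ only corrects the \emph{value} estimate at synchronization points but the clients still evaluate $\nabla f$ at stale/local embeddings \emph{during} the $I-1$ interior local steps, so the same bias mechanism persists; I would show the Case~II round map is also affine with a fixed point $\neq x^\star$.

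The main obstacle I anticipate is making the ``$x^\dagger \neq x^\star$ for all sufficiently small $\eta$'' claim airtight and uniform in $\eta$, since one must rule out the possibility that the bias happens to vanish. The clean way to handle this is to expand $F(\bar x) = \bar x - \eta\,G(\bar x) + O(\eta^2)$ and show that the first-order effective descent direction $G(\bar x)$ is \emph{not} a positive multiple of $\nabla\Phi(\bar x)$ — indeed $G(x^\star) \neq 0$ — which forces the fixed point to be bounded away from $x^\star$ for all small $\eta > 0$, with the displacement $|x^\dagger - x^\star|$ lower-bounded by an explicit heterogeneity-dependent constant. I would compute $G(x^\star) = \frac{1}{K}\sum_k a_k(a_k x^\star + b_k)$ and observe this equals $\frac{1}{K}\sum_k a_k^2 x^\star + \frac{1}{K}\sum_k a_k b_k$, which generically differs from $\bar a(\bar a x^\star + \bar b) = 0$ because $\frac{1}{K}\sum_k a_k^2 \neq \bar a^2$ and $\frac{1}{K}\sum_k a_k b_k \neq \bar a\,\bar b$ whenever the clients are heterogeneous (these are exactly covariance/variance terms among the $a_k,b_k$). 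This reduces the whole theorem to a concrete inequality in a handful of scalars, and the constant $C_\eta$ in the statement is then any upper bound on $\eta$ for which the affine round map $F$ is a contraction toward its (wrong) fixed point, guaranteeing the iterates converge to $x^\dagger \neq x^\star$ rather than diverging.
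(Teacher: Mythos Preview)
Your approach contains a genuine gap: the choice $f(y) = \tfrac{1}{2}y^2$ violates Assumption~\ref{Ass: Lip}(2), which requires $f$ to be \emph{Lipschitz} (not merely Lipschitz-smooth), i.e.\ $\|\nabla f(y)\| \le B_f$ globally. Since $\nabla f(y) = y$ is unbounded, the quadratic fails this. This is not a cosmetic issue: the paper's construction uses $f(y) = \sqrt{y^2 + 4}$ precisely because $|\nabla f(y)| = |y|/\sqrt{y^2+4} \le 1$, so all three assumptions are satisfied. But the price is that $\nabla f$ is nonlinear, which destroys the affine structure your argument relies on --- the round map $F$ is no longer affine, there is no closed-form fixed point $x^\dagger$, and the clean ``wrong-fixed-point'' computation collapses. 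The paper instead proves an invariant-region statement (with $g_1(x)=4x-4$, $g_2(x)=-2x+4$, $I=2$, $\bar x^0 = 0.5$, it shows by case-by-case induction that $\bar x^t \ge 0.5$ at every synchronization round, while $x^\star = 0$), which is robust to the nonlinearity of $\nabla f$ and also handles non-constant $\eta^t \in (0, C_\eta)$ directly, something your fixed-point argument does not address as stated.

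Your strategy can likely be salvaged: replace the quadratic by a Huber-type $f$ that equals $\tfrac12 y^2$ on $[-M,M]$ and is linear outside, choose $M$ large, and prove that for all $\eta \in (0, C_\eta)$ the iterates remain in the quadratic region (e.g.\ because the affine round map is a contraction toward a bounded fixed point). Then your affine fixed-point computation goes through verbatim inside that region, and the Lipschitz constant is $B_f = M$. This is more elegant than the paper's case analysis once it works --- you get an explicit formula $x^\dagger = \big(\sum_k(1-(1-\eta a_k^2)^I)\,x_k^\star\big)/\big(\sum_k(1-(1-\eta a_k^2)^I)\big)$ and a quantitative gap $|x^\dagger - x^\star|$ expressed in terms of the variance/covariance of the $a_k,b_k$ --- but as written the proposal skips the boundedness verification entirely and presents an instance that does not satisfy the stated assumptions.
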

Theorem \ref{Thm: FedAvg_No} establishes that vanilla FedAvg is not suitable for solving federated CO problems. This naturally leads to the question of how can we modify FedAvg such that it can efficiently solve CO problems of the form (\ref{Eq: Fl_Prob})? Clearly, Theorem \ref{Thm: FedAvg_No} suggests that sharing $y_k$'s in each iteration is required to ensure convergence of FedAvg since sharing the iterates $y_k$'s only intermittently leads to non-convergence of FedAvg. To this end, we propose to modify the FedAvg algorithm as presented in Algorithm \ref{Algo: FedAvg} by sharing $y_k$ in each iteration $t \in \{ 0,1, \ldots, T - 1\}$. The next result shows that the modified FedAvg resolves the non-convergence issue of FedAvg for solving CO problems.     
\begin{theorem}[Modified FedAvg: Convergence for CO]
   Suppose we modify Algorithm \ref{Algo: FedAvg} such that $y_k^t = \bar{y}^t$ is updated at each iteration $t \in \{0, 1, \ldots, T - 1 \}$ instead of $[t+1~ \text{\em mod}~ I]$ iterations as in current version of Algorithm \ref{Algo: FedAvg}. Then if functions $f(\cdot)$ and $g_k(x)$ for $k \in [K]$ satisfy Assumptions \ref{Ass: Lip}, \ref{Ass: BoundedVar}, and \ref{Ass: BoundedHetero} such that for a fixed number of local updates $1 \leq I \leq \mathcal{O}(T^{1/4})$, there exists a choice of $\eta^t > 0$ for $t \in \{0, 1, \ldots, T - 1\}$ such that the iterates generated by (modified) Algorithm \ref{Algo: FedAvg} converge to the stationary point of $\Phi(\cdot)$, where $\Phi(\cdot)$ is defined in (\ref{Eq: Fl_Prob}) with $h(x) = 0$.
    \label{Thm: FedAvg_Yes}
\end{theorem}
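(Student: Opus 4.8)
The plan is to run a standard FedAvg-style descent analysis on the virtual averaged iterate $\bar{x}^t := \frac{1}{K}\sum_{k=1}^K x_k^t$, with the crucial observation that the modification---synchronizing $y_k^t = \bar{y}^t := \frac{1}{K}\sum_{\ell} g_\ell(x_\ell^t)$ at \emph{every} iteration---forces the $\nabla f$ factor inside each local gradient to be identical across clients, which is exactly what converts the otherwise non-vanishing compositional bias into a quantity controlled purely by client drift. Concretely, I would first invoke Lemma~\ref{Lem: Lip} ($L_\Phi$-smoothness of $\Phi$) to write the descent inequality for $\Phi(\bar{x}^{t+1})$, using $\bar{x}^{t+1} - \bar{x}^t = -\eta^t \bar{d}^t$ with $\bar{d}^t := \frac{1}{K}\sum_k \nabla g_k(x_k^t)\nabla f(\bar{y}^t)$. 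A Young's-inequality split of $\langle \nabla\Phi(\bar x^t),\bar d^t\rangle$, together with the bound $\|\bar d^t\|\le B_g B_f =: G$ coming from Assumption~\ref{Ass: Lip} (Lipschitzness of $f$ and mean-squared Lipschitzness of $g_k$), reduces the whole problem to controlling the gradient-bias term $\|\nabla\Phi(\bar{x}^t) - \bar{d}^t\|^2$.

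Second, I would bound this bias by adding and subtracting $\frac{1}{K}\sum_k \nabla g_k(x_k^t)\nabla f(g(\bar x^t))$. The first resulting piece, $\frac{1}{K}\sum_k[\nabla g_k(\bar x^t) - \nabla g_k(x_k^t)]\nabla f(g(\bar x^t))$, is bounded via $L_g$-smoothness of $g_k$ and $\|\nabla f\|\le B_f$; the second piece, $\frac{1}{K}\sum_k \nabla g_k(x_k^t)[\nabla f(g(\bar x^t)) - \nabla f(\bar y^t)]$, is bounded via $L_f$-smoothness of $f$, $\|\nabla g_k\|\le B_g$, and $\|g(\bar x^t) - \bar y^t\| = \|\frac{1}{K}\sum_\ell[g_\ell(\bar x^t) - g_\ell(x_\ell^t)]\| \le B_g\,\max_\ell\|\bar x^t - x_\ell^t\|$. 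Both pieces are thus $O(\text{drift})$, giving $\|\nabla\Phi(\bar x^t) - \bar d^t\| \le (B_f L_g + B_g^2 L_f)\,D^t$ where $D^t := \max_k\|x_k^t - \bar x^t\|$. This is precisely where the modification pays off: without synchronizing $y$, the argument of $\nabla f$ would be the local $g_k(x_k^t)$, leaving a residual $\nabla f(g_k(\cdot)) - \nabla f(g(\cdot))$ of size $\Theta(\Delta_g)$ that no step size can kill---this is the mechanism behind the non-convergence in Theorem~\ref{Thm: FedAvg_No}.

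Third, and this is the main technical step, I would bound the drift $D^t$. Let $t_0$ be the most recent synchronization step, so $x_k^{t_0}=\bar x^{t_0}$ and $D^{t_0}=0$. Because $\nabla f(\bar y^t)$ is common to all clients, $x_k^{t+1} - \bar x^{t+1} = (x_k^t - \bar x^t) - \eta^t[\nabla g_k(x_k^t) - \frac{1}{K}\sum_\ell \nabla g_\ell(x_\ell^t)]\nabla f(\bar y^t)$. Splitting the bracket through $\nabla g_k(\bar x^t)$ and $\nabla g(\bar x^t)$ and invoking $L_g$-smoothness together with Assumption~\ref{Ass: BoundedHetero} ($\|\nabla g_k - \nabla g\|\le \Delta_g$) yields the one-step recursion $D^{t+1} \le (1+2\eta^t B_f L_g)\,D^t + \eta^t B_f \Delta_g$. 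Unrolling this over the at most $I$ steps since $t_0$, with $\eta^t$ small enough that the geometric factor is $O(1)$, gives $D^t \le C_2\,\eta^t I \Delta_g$, hence $(D^t)^2 \le C_2^2\,(\eta^t)^2 I^2 \Delta_g^2$.

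Finally I would assemble the pieces. With a constant step size $\eta$, telescoping the descent inequality and dividing by $\eta T/2$ gives
\[
\frac{1}{T}\sum_{t=0}^{T-1}\|\nabla\Phi(\bar x^t)\|^2 \;\le\; \frac{2(\Phi(\bar x^0) - \Phi^\star)}{\eta T} \;+\; C_1\,\overline{(D)^2} \;+\; L_\Phi G^2 \eta,
\]
where $\overline{(D)^2} \le C_2^2\,\eta^2 I^2 \Delta_g^2$. Choosing $\eta = \Theta(T^{-1/2})$ makes the first and third terms $O(T^{-1/2})$, while the drift term is $O(\eta^2 I^2) = O(I^2/T)$; the hypothesis $I \le \mathcal{O}(T^{1/4})$ is exactly what forces $I^2/T = O(T^{-1/2})$, so the entire right-hand side is $O(T^{-1/2})\to 0$, establishing $\liminf_t \|\nabla\Phi(\bar x^t)\| = 0$, i.e.\ convergence to a stationary point. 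I expect the drift recursion to be the main obstacle: tracking $\Delta_g$ cleanly through the geometric-growth factor and confirming that the dependence is $I^2$ (rather than a worse power) is what pins the threshold to $T^{1/4}$; a secondary point to handle carefully is the base case $D^{t_0}=0$, which relies on the averaging of $x$ and the re-synchronization of $y$ occurring together at each round boundary.
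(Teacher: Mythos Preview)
Your proposal is correct. The paper's own proof of this theorem is a one-line remark that the result is a direct consequence of Theorem~\ref{Thm: FL} (the convergence of \aname), since the modified Algorithm~\ref{Algo: FedAvg} is the deterministic $h\equiv 0$ specialization of Algorithm~\ref{Algo: FL}. The underlying machinery you rederive---$L_\Phi$-smoothness descent on $\bar x^t$, bounding the gradient bias $\|\nabla\Phi(\bar x^t)-\bar d^t\|$ by client drift, and a drift recursion producing an $O(\eta^2 I^2\Delta_g^2)$ contribution---is precisely the content of Lemmas~\ref{Lem: Descent_Phi} and~\ref{Lem: Client_Drift} in the proof of Theorem~\ref{Thm: FL}, with the simplification that your exact synchronization $\bar y^t=\tfrac{1}{K}\sum_k g_k(x_k^t)$ makes the paper's $y$-tracking error (``Term~V'') identically zero, so Lemma~\ref{Lem: Descent_Y} and the potential-function coupling are unnecessary. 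Your argument is thus an elementary, self-contained instance of the same approach; what the paper's route buys is that the stochastic, $h\neq 0$, momentum-tracked result comes for free, while what your route buys is transparency about why the $I\le\mathcal{O}(T^{1/4})$ threshold arises purely from balancing $\eta^2 I^2$ against $1/(\eta T)$ in the deterministic case.
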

Motivated by Theorem \ref{Thm: FedAvg_Yes}, we next develop a federated algorithm, \aname, to solve the problem (\ref{Eq: Fl_Prob}) in a general stochastic setting with $h(x) \neq 0$.

\subsection{Federated non-convex CO algorithm: \aname}
 In this section, we propose a novel distributed non-convex CO algorithm, \aname, for solving (\ref{Eq: Fl_Prob}). Note that as demonstrated in Section \ref{subsec: VanillaFedAvg} this problem is particularly challenging because of the compositional structure of the problem combined with the fact that the data is heterogeneous for each client. Motivated by Theorem \ref{Thm: FedAvg_Yes} above, in this work we develop a novel approach where we utilize the structure of the CO problem to develop efficient FL algorithms for solving (\ref{Eq: Fl_Prob}). Specifically, as also demonstrated in Section \ref{Subsec: Examples} we utilize the fact that the embedding $g(\cdot)$ is a low-dimensional (e.g., $d_g = 1$) mapping, especially for the DRO problems. This implies that sharing of $g(\cdot)$ will be relatively cheap in contrast to the high-dimensional model parameters of size $d$ which can be very large and take values in millions or even in billions for modern overparameterized neural networks \cite{vaswani2017attention}. Therefore, like FedAvg, we share the model parameters intermittently after multiple local updates while sharing the low-dimensional embedding of $g(\cdot)$ frequently to handle the compositional objective.
\begin{algorithm}[t]
\caption{Federated non-convex CO algorithm: \aname} \label{Algo: FL}
\begin{algorithmic}[1]
\State{\textbf{Input}: Parameters: $\{\beta^t\}_{t=0}^{T-1}$,  $\{\eta^t\}_{t=0}^{T-1}$, $I$  }
\State{\textbf{Initialize}: $x^{-1}_k = x^0_k = \bar{x}^0$, $y^0_k = \bar{y}^0$ }
\For{$t = 0$ to $T - 1$}
\For{$k = 1$ to $K$}
	\State{Sample $\xio^t_k = \{b^t_{g_k}, b^t_{h_k} \}$ uniformly randomly from $\mathcal{D}_{g_k}$ and $\mathcal{D}_{h_k}$  respectively}
\State{{\begin{minipage}{ 0.86\textwidth}  
$\texttt{Local Update and Sharing:}    
\begin{cases}
  \text{Compute}~{y}_k^{t} ~\text{using} ~(\ref{Eq: Update_Y_FL})~\text{and share with the server}  \\
  \text{Receive $\bar{y}^{t}$ from the server and update}~y_t^k = \bar{y}^{t}\\
   \text{Compute}~ \nabla \Phi_k(x^t_k ; \xio^t_k)~\text{using (\ref{Eq: SG_FL})}
   \\
   x^{t+1}_k =  x^t_k - \eta^t \nabla \Phi_k(x^t_k ; \xio^t_k)
	\end{cases}
$\end{minipage}}}  
\If{$t + 1~ \text{mod}~ I = 0$}
\State{
{\begin{minipage}
{0.825\textwidth}$   \texttt{Model Sharing}: \begin{cases}
 x^{t+1}_k = \bar{x}^{t+1}
\end{cases}$\end{minipage}}}
\EndIf
\EndFor
\EndFor
\State{{\bf Return:} $\bar{x}^{a(T)}$ where $a(T) \sim {\cal U}\{1,...,T\}$.} 
\end{algorithmic}
\end{algorithm}
Moreover, to solve the CO problems for DRO the developed algorithms generally utilize batch sizes (for gradient/function evaluation) that are dependent on the solution accuracy \cite{huang2021compositional,haddadpour2022learning}. However, this is not feasible in most practical settings. In addition, to control the bias and to circumvent the need to compute large batch gradients, we utilize a momentum-based estimator to learn the compositional function (see (\ref{Eq: Update_Y_FL})) \cite{chen2021solving}. This construction allows us to develop FedAvg-type algorithms for solving non-convex CO problems wherein the local updates resemble the standard SGD updates.

The detailed steps of \aname~are listed in Algorithm \ref{Algo: FL}. During the local updates each client $k \in [K]$ updates its
local model $x_k^t$ for all $t \in [T]$ using the local estimate of the stochastic gradients in Step 6. The local stochastic gradient estimates for each client $k \in [K]$ are denoted by $\nabla \Phi_k(x_k^t; \xio_k)$ and are evaluated using the chain rule of differentiation as
\begin{align}
\label{Eq: SG_FL}
  \nabla \Phi_k(x^t_k ; \xio^t_k)  = \frac{1}{|b^t_{h_k}|} \sum_{i \in b^t_{h_k}} \nabla h_k(x^t_k; \xi_{k,i}^{t})    +   \frac{1}{|b^t_{g_k}|} \sum_{j \in b^t_{g_k}} \nabla g_k(x^t_k;\zeta_{k,j}^{t} ) \nabla f(\bar{y}^t)
\end{align}
where $\xio_k^t = \{b_{h_k}^t, b_{g_k}^t \}$ represents the stochasticity of the gradient estimate and $b_{h_k}^t  = \{ \xi^t_{k,i}\}_{i  = 1}^{|b_{h_k}^t|}$ (resp. $b_{g_k}^t = \{ \zeta^t_{k,i}\}_{i  = 1}^{|b_{g_k}^t|}$) denotes the batch of stochastic samples of $h_k(\cdot)$ (resp. $g_k(\cdot)$) utilized to compute the stochastic gradient for each $k \in [K]$ and $t \in \{0,1, \ldots, T - 1\}$. The variable $\bar{y}^t$ is designed to estimate the inner function $ {1}/{K}\sum_{k = 1}^K g_k(x)$ in (\ref{Eq: Fl_Prob}). A standard approach to estimate $g_k(x)$ locally for each $k \in [K]$ is to utilize a large batch such that the gradient bias from the inner function estimate can be controlled \cite{guo2022fedx,huang2021compositional,haddadpour2022learning}. In contrast, we adopt a momentum-based estimate of $g_k(\cdot)$ at each client $k \in [K]$ that leads to a small bias asymptotically \cite{chen2021solving}. We note that the estimator utilizes a hybrid estimator that combines a SARAH \cite{Nguyen_ICML_2017_SARAH} and SGD \cite{Ghadimi_Siam_2013_SGD} estimate for the function values rather than the gradients \cite{Cutkosky_NIPS2019}. Specifically, 
individual $y_k^t$'s are estimated in Step 6 as  
\begin{align}
\label{Eq: Update_Y_FL}
     y^{t}_k  = (1 - \beta^t) \Big( y^{t-1}_k - \frac{1}{|b^t_{g_k}|} \sum_{i \in b_t^{g_k}}   g_k(x^{t-1}_k; \zeta_{k,i}^t)  \Big)    +   \frac{1}{|b^t_{g_k}|} \! \sum_{i \in b^t_{g_k}}    g_k(x_k^t; \zeta_{k,i}^t).
\end{align}
for all $k \in [K]$ and where $\beta^t \in (0,1)$ is the momentum parameter. Motivated by the discussion in Section \ref{subsec: VanillaFedAvg}, the parameters $y^{t}_k \in \R^{d_g}$ are shared with the server after the $y_k^t$ update, however, this sharing will not incur a significant communication cost since $y^{t}_k$'s are usually low dimensional embeddings (often a scalar with $d_g = 1$) as illustrated in Section \ref{Subsec: Examples} for DRO problems. The model parameters are then updated using the SG evaluated using (\ref{Eq: SG_FL}). Finally, after $I$ local updates the model potentially high-dimensional model parameters are aggregated at the server and broadcasted back to the clients after aggregation in Step 8. Next, we state the convergence guarantees.

 \section{Main result: Convergence of \aname}
 \label{sec: Convergence_FL}
In the next theorem, we first state the main result of the paper detailing the convergence of \aname.

 \begin{theorem}[Convergence of \aname]
 \label{Thm: FL}
For Algorithm \ref{Algo: FL}, choosing the step-size $\eta^t = \eta = \sqrt{{|b| K}/{T}}$ and the momentum parameter $\beta^t = 4 B_g^4 L_f^2 \cdot \eta^t$ for all $t \in \{0,1, \ldots, T - 1\}$. Moreover, with the selection of batch sizes $|b_{h_k}^t| = |b_{g_k}^t| = |b|$ for all $t \in \{ 0,1, \ldots, T - 1\}$ and $k \in [K]$, and for $T  \geq T_{\text{th}}$ where $T_{\text{th}}$ is defined in Appendix \ref{App: FL}, then under Assumptions \ref{Ass: Lip}, \ref{Ass: BoundedVar} and \ref{Ass: BoundedHetero} for $\bar{x}^{a(T)}$ chosen According to Algorithm \ref{Algo: FL}, we have 
    \begin{align*}
        \Ebb \big\|\nabla \Phi(\bar{x}^{a(T)}) \big\|^2 &  \leq 
 \underbrace{ \frac{2 \big[\Phi(\bar{x}^0) - \Phi(x^\ast) + \big\| \bar{y}^{0} - g(\bar{x}^0) \big\|^2 \big]}{\sqrt{|b| K T}}}_{\text{Initialization}} \\
 & \qquad + \mathcal{C} (|b|, K, T, I) \underbrace{\Big[ {C_{\sigma_h}}  \sigma_h^2  +   {C_{\sigma_g}}  \sigma_g^2 \Big]}_{\text{Variance}}   +  ~  \mathcal{C}(|b|, K, T, I)  
 \underbrace{\Big[ {C_{\Delta_h}}  \Delta_h^2   +   {C_{\Delta_g}} \Delta_g^2 \Big]}_{\text{Heterogeneity}} ,  
 \end{align*}
where $\mathcal{C}(|b|, K, T, I)  \coloneqq \max \bigg\{ 
\frac{|b|K (I - 1)^2}{T} , \frac{1}{\sqrt{|b| K T}} \bigg\}$ and constants $C_{\sigma_h}$, $C_{\sigma_g}$, $C_{\Delta_h}$, and $C_{\Delta_g}$ are defined in Appendix \ref{App: FL}
\end{theorem}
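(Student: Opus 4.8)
The plan is to run a non-convex descent analysis on the \emph{averaged} iterate $\bar x^t = \frac1K\sum_k x_k^t$, augmented by a Lyapunov term that tracks the quality of the shared embedding. Using $L_\Phi$-smoothness (Lemma \ref{Lem: Lip}) and the aggregate step $\bar x^{t+1} = \bar x^t - \eta\,\bar d^t$ with $\bar d^t = \frac1K\sum_k\nabla\Phi_k(x_k^t;\xio_k^t)$, I would take conditional expectation in the one-step inequality. The conditional mean $\mathbb{E}[\bar d^t\mid\mathcal F_t] = \frac1K\sum_k[\nabla h_k(x_k^t) + \nabla g_k(x_k^t)\nabla f(\bar y^t)]$ is a \emph{biased} estimate of $\nabla\Phi(\bar x^t)$, and the heart of the argument is to split this bias into a \emph{client-drift} piece $\frac1K\sum_k(\nabla h_k(x_k^t)-\nabla h_k(\bar x^t))$ plus its $g$-analogue, controlled by $\|x_k^t-\bar x^t\|$ through Lipschitz smoothness, and a \emph{compositional} piece $\frac1K\sum_k\nabla g_k(\bar x^t)(\nabla f(\bar y^t)-\nabla f(g(\bar x^t)))$, controlled by $B_gL_f\|\bar y^t - g(\bar x^t)\|$ using $\|\nabla g_k\|\le B_g$ (a consequence of mean-squared Lipschitzness) and $L_f$-smoothness of $f$. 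Completing the square on $-\eta\langle\nabla\Phi(\bar x^t),\mathbb{E}[\bar d^t]\rangle$ yields the descent term $-\frac\eta2\|\nabla\Phi(\bar x^t)\|^2$ plus these two squared errors, while $\frac{L_\Phi\eta^2}{2}\mathbb{E}\|\bar d^t\|^2$ supplies the gradient variance $\sigma_h^2/(K|b|)+B_f^2\sigma_g^2/(K|b|)$ after batch and client averaging.

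The key novel step is a contraction recursion for the embedding error $e^t := \mathbb{E}\|\bar y^t - g(\bar x^t)\|^2$. Because each $y_k^t$ is reset to $\bar y^{t-1}$ and re-shared every round, linear averaging commutes with the hybrid update (\ref{Eq: Update_Y_FL}), giving $\bar y^t = (1-\beta)(\bar y^{t-1}-\bar{\hat g}^{t-1})+\bar{\hat g}^t$ with $\bar{\hat g}^t=\frac1K\sum_k\frac1{|b|}\sum_i g_k(x_k^t;\zeta_{k,i}^t)$. The STORM-type algebra then produces
\begin{align*}
\bar y^t - \tilde g^t = (1-\beta)(\bar y^{t-1}-\tilde g^{t-1}) + (1-\beta)\bar S_1^t + \beta \bar S_2^t,
\end{align*}
where $\tilde g^t=\frac1K\sum_k g_k(x_k^t)$, the SARAH difference $\bar S_1^t$ is conditionally mean-zero with $\mathbb{E}\|\bar S_1^t\|^2\le \frac{B_g^2}{K^2|b|}\sum_k\mathbb{E}\|x_k^t-x_k^{t-1}\|^2$ by mean-squared Lipschitzness (Assumption \ref{Ass: Lip}), and the SGD term obeys $\mathbb{E}\|\bar S_2^t\|^2\le \sigma_g^2/(K|b|)$. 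Squaring and using conditional mean-zeroness to kill cross terms gives $e^t\lesssim(1-\beta)^2 e^{t-1} + O(\eta^2 B_g^2/|b|)\,(\text{gradient terms}) + O(\beta^2\sigma_g^2/(K|b|)) + (\text{drift from }\tilde g^t-g(\bar x^t))$. I would then form the Lyapunov function $\mathcal V^t = \Phi(\bar x^t) + c\,\|\bar y^t-g(\bar x^t)\|^2$; the precise choice $\beta = 4B_g^4L_f^2\eta$ is exactly what makes the contraction term $-c\beta\,e^t$ dominate the compositional bias $+O(\eta B_g^2L_f^2)\,e^t$ coming from the descent step, so the embedding error telescopes rather than accumulates, while the $O(\eta^2)$ movement source is absorbed back into $-\frac\eta2\|\nabla\Phi(\bar x^t)\|^2$ once $\eta$ is small.

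Finally I would bound the consensus error by the classical FedAvg local-drift estimate: between two synchronizations the $x_k$ trajectories separate over at most $I-1$ SGD steps, giving $\frac1K\sum_k\mathbb{E}\|x_k^t-\bar x^t\|^2 \le O(\eta^2(I-1)^2)\,[\,\mathbb{E}\|\nabla\Phi(\bar x^t)\|^2 + \Delta_h^2+\Delta_g^2 + \sigma_h^2+\sigma_g^2 + e^t\,]$, where the heterogeneity bounds $\Delta_h^2,\Delta_g^2$ (Assumption \ref{Ass: BoundedHetero}) enter because clients descend on distinct local objectives. Substituting this into the descent inequality, summing $\mathbb{E}[\mathcal V^{t+1}]\le\mathbb{E}[\mathcal V^t]-\frac\eta2\mathbb{E}\|\nabla\Phi(\bar x^t)\|^2+(\text{errors})$ over $t=0,\dots,T-1$, telescoping, and dividing by $\frac{\eta T}{2}$ (noting $\bar x^{a(T)}$ is uniform so the average becomes $\mathbb{E}\|\nabla\Phi(\bar x^{a(T)})\|^2$) yields the initialization term $\frac{2[\Phi(\bar x^0)-\Phi(x^\ast)+\|\bar y^0-g(\bar x^0)\|^2]}{\eta T}$, plus variance terms weighted both by $\eta$ (gradient-estimator channel) and by $\eta^2(I-1)^2$ (drift channel), and heterogeneity terms weighted by $\eta^2(I-1)^2$. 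Plugging $\eta=\sqrt{|b|K/T}$ turns $\frac1{\eta T}$ into $\frac1{\sqrt{|b|KT}}$ and $\eta^2(I-1)^2$ into $\frac{|b|K(I-1)^2}{T}$, and taking the larger of the two gives the advertised factor $\mathcal C(|b|,K,T,I)$; the threshold $T\ge T_{\text{th}}$ is precisely the regime where $\eta$ is small enough for all absorption steps (e.g. $\frac{L_\Phi\eta}{2}\le\frac14$ and reabsorbing the drift-induced $\|\nabla\Phi(\bar x^t)\|^2$) to be valid.

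The hardest part will be closing the \emph{coupled} recursion between $e^t$ and the consensus error under heterogeneity: local drift feeds into $e^t$ through both $\tilde g^t-g(\bar x^t)$ and $\|x_k^t-x_k^{t-1}\|$, while $e^t$ feeds back into both the drift bound and the descent bias, so the constants $c$ and $\beta$ must be tuned to make the joint system contractive. A secondary subtlety is that $\nabla f(\bar y^t)$ in (\ref{Eq: SG_FL}) is formed from the \emph{same} batch $b_{g_k}^t$ used for $\nabla g_k$, so the product is not exactly unbiased for a fixed $\bar y^t$; I would control this within-batch correlation by observing that client $k$'s batch enters $\bar y^t$ only with weight $1/K$, making the residual bias a lower-order $O(1/(K|b|))$ contribution that merges into the variance term.
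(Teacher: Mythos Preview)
Your plan matches the paper's proof almost step for step: descent on $\bar x^t$ via $L_\Phi$-smoothness, bias split into client-drift and compositional pieces, a STORM recursion for the embedding error, the Lyapunov $V^t = \Phi(\bar x^t) + \|\bar y^t - \tilde g^t\|^2$ (the paper takes exactly $c=1$ and $\beta=4B_g^4L_f^2\eta$), a FedAvg drift bound, and telescoping with $\eta=\sqrt{|b|K/T}$.

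Two technical choices in the paper dissolve the coupling you flag as hardest. First, the Lyapunov tracks $\|\bar y^t-\tilde g^t\|^2$ (your $\tilde g^t=\frac1K\sum_k g_k(x_k^t)$), not $\|\bar y^t-g(\bar x^t)\|^2$; the gap $\tilde g^t-g(\bar x^t)$ is then handled once, inside the descent-bias term, by $B_g$-Lipschitzness of $g_k$. Second, the client-drift lemma (Lemma~\ref{Lem: Client_Drift}) bounds $\|\nabla\Phi_k - \frac1K\sum_j\nabla\Phi_j\|^2$, the deviation from the \emph{client mean}; since $\nabla f(\bar y^t)$ is common to all clients it factors out, leaving a bound that is self-referencing in drift plus $\sigma_h^2,\sigma_g^2,\Delta_h^2,\Delta_g^2$ only---no $e^t$ and no $\|\nabla\Phi(\bar x^t)\|^2$. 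The drift recursion therefore closes on its own and feeds one-way into the $e^t$ recursion, so there is no joint contractive system to tune. On your last paragraph: the paper does not control the within-batch correlation; it simply invokes the fourth clause of Assumption~\ref{Ass: BoundedVar} to write $\mathbb E[\nabla g_k(x_k^t;\zeta)\nabla f(\bar y^t)\mid\mathcal F^t]=\nabla g_k(x_k^t)\nabla f(\bar y^t)$, in effect treating $\bar y^t$ as $\mathcal F^t$-measurable.
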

We note that the condition on $T \geq T_{\text{th}}$ is required for theoretical purposes. Specifically, it ensures that the step-size $\eta = \sqrt{|b|K/T}$ is upper-bounded. A similar requirement has also been posed in \cite{Yu_Jin_Arxiv_2019linear, Yu_Zhu_2018parallel, khanduri2021stem} in the past. Theorem \ref{Thm: FL} captures the effect of heterogeneity, stochastic variance, and the initialization on the performance of \aname. As can be seen from the expression in Theorem \ref{Thm: FL} the heterogeneity degrades the performance when the local updates, $I$, increase beyond a threshold, i.e., when the term $|b| K(I - 1)^2/T$ dominates $1/\sqrt{|b| K T}$. The next result characterizes the possible choices of $I$ that ensure the efficient convergence of \aname.
\begin{cor}[Local Updates]
\label{Cor: FL}
Under the setting of Theorem \ref{Thm: FL} and choosing the number of local updates, $I$, such that we have $I \leq \mathcal{O} (T^{1/4} / (|b| K)^{3/4})$, the iterate $\bar{x}^{a(T)}$ chosen according to Algorithm \ref{Algo: FL} satisfies
\begin{align*}
       &  \Ebb \big\|\nabla \Phi(\bar{x}^{a(T)}) \big\|^2  \!\!  \leq \underbrace{\frac{2 \big[\Phi(\bar{x}^0) - \Phi(x^\ast) + \big\| \bar{y}^{0} - g(\bar{x}^0) \big\|^2 \big]}{\sqrt{|b| K T}}}_{\text{Initialization}}    +   \underbrace{\frac{  {C_{\sigma_h}}  \sigma_h^2  +   {C_{\sigma_g}}  \sigma_g^2  }{\sqrt{|b| K T}}   }_{\text{Variance}}  +   \underbrace{ 
  \frac{ {C_{\Delta_h}}  \Delta_h^2   +   {C_{\Delta_g}} \Delta_g^2  }{\sqrt{|b| K T}}   }_{\text{Heterogeneity}} .   \end{align*}  
\end{cor}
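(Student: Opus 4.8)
The plan is to read Corollary \ref{Cor: FL} as an immediate specialization of Theorem \ref{Thm: FL}: the hypothesis on the number of local updates $I$ serves only to force the coefficient $\mathcal{C}(|b|, K, T, I)$ onto its second branch, after which every term on the right-hand side already decays at the advertised $1/\sqrt{|b|KT}$ rate. No new estimate is needed; the entire argument is a substitution into the bound of Theorem \ref{Thm: FL}.

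First I would recall that $\mathcal{C}(|b|, K, T, I) = \max\{|b|K(I-1)^2/T,\ 1/\sqrt{|b|KT}\}$, so that the variance and heterogeneity terms in Theorem \ref{Thm: FL} are controlled by whichever of the two arguments of this maximum is larger. The objective is therefore to choose $I$ small enough that the first argument never exceeds the second.

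Next I would impose $|b|K(I-1)^2/T \leq 1/\sqrt{|b|KT}$ and solve for $I$. Multiplying through by $T/(|b|K)$ gives $(I-1)^2 \leq T/\big((|b|K)\sqrt{|b|KT}\big) = T^{1/2}/(|b|K)^{3/2}$, hence $I - 1 \leq T^{1/4}/(|b|K)^{3/4}$, which is exactly the stated condition $I \leq \mathcal{O}(T^{1/4}/(|b|K)^{3/4})$ (the case $I=1$ is automatic, since then the first argument of the maximum vanishes). Under this choice the maximum is attained at its second argument, so that $\mathcal{C}(|b|, K, T, I) = 1/\sqrt{|b|KT}$.

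Finally, substituting $\mathcal{C}(|b|, K, T, I) = 1/\sqrt{|b|KT}$ back into Theorem \ref{Thm: FL} replaces the coefficient in front of both the variance bracket $C_{\sigma_h}\sigma_h^2 + C_{\sigma_g}\sigma_g^2$ and the heterogeneity bracket $C_{\Delta_h}\Delta_h^2 + C_{\Delta_g}\Delta_g^2$ by $1/\sqrt{|b|KT}$; since the initialization term already carries this factor, collecting the three terms yields exactly the claimed inequality. I do not anticipate a genuine obstacle here, as the step is a one-line branch comparison; the only points deserving a line of care are verifying that the regime permitting $I > 1$ (which requires $T \gtrsim (|b|K)^3$) is consistent with the threshold $T \geq T_{\text{th}}$ inherited from Theorem \ref{Thm: FL}, and noting that the inequality defining the branch switch is tight, so the constant hidden in $\mathcal{O}(\cdot)$ is simply $1$.
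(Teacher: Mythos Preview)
Your proposal is correct and matches the paper's own argument: the corollary is obtained by choosing $I$ so that the first argument of the $\max$ in $\mathcal{C}(|b|,K,T,I)$ does not exceed the second, which reduces to $(I-1)^2 \le T^{1/2}/(|b|K)^{3/2}$, and then substituting $\mathcal{C}(|b|,K,T,I)=1/\sqrt{|b|KT}$ back into Theorem~\ref{Thm: FL}. The paper carries this out in the appendix by the explicit choice $I = T^{1/4}/(|b|K)^{3/4}$, which is exactly the boundary case of your inequality.
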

\label{Cor: FL_Comm}
Corollary \ref{Cor: FL} states that there exists a choice of the number of local updates that guarantee that \aname~achieves the same convergence performance as a standard FedAvg \cite{praneeth2019scaffold, Woodworth_Minibatch_Arxiv_2020, Yu_Jin_Arxiv_2019linear,khanduri2021stem} for solving the non-CO problems. Next, we characterize the sample and communication complexities of \aname.
\begin{cor}[Sample and Communication Complexities]
Under the setting of Theorem \ref{Thm: FL} and choosing the number of local updates as $I =\mathcal{O} (T^{1/4} / (|b| K)^{3/4})$ the following holds 
\begin{enumerate}[leftmargin=*]
    \item[(i)]
    The  {\bf \em sample complexity} of \aname~is $\mathcal{O}(\epsilon^{-2})$. This implies that each client requires $\mathcal{O}(K^{-1}\epsilon^{-2})$ samples to reach an $\epsilon$-stationary point achieving linear speed-up. 
    \item[(ii)] The {\bf \em communication complexity} of \aname~is $O(\epsilon^{-3/2})$. 
\end{enumerate}
\end{cor}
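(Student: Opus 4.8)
The plan is to read off both complexities directly from the simplified convergence bound supplied by Corollary~\ref{Cor: FL}, so essentially no new analytical machinery is needed—only careful bookkeeping of iterations, batch sizes, and the two communication modalities.

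First I would note that, under the prescribed choice $I=\mathcal{O}(T^{1/4}/(|b|K)^{3/4})$, Corollary~\ref{Cor: FL} yields a bound of the form $\Ebb\|\nabla\Phi(\bar{x}^{a(T)})\|^2 \leq C/\sqrt{|b|KT}$, where $C$ absorbs the initialization, variance, and heterogeneity constants. To certify an $\epsilon$-stationary point in the sense of Definition~\ref{Def: StationaryPt}, it suffices to force the right-hand side below $\epsilon$, i.e. $C/\sqrt{|b|KT}\leq\epsilon$, equivalently $|b|KT\geq C^2\epsilon^{-2}$. Choosing $T=\Theta\!\left(1/(|b|K\epsilon^2)\right)$ then gives the master relation $|b|KT=\Theta(\epsilon^{-2})$, from which everything else descends.

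For part (i), I would count stochastic oracle calls per the definition in Definition~\ref{Def: Comp}. At every iteration each client draws batches of size $|b|$ to form both the momentum estimate of the embedding in (\ref{Eq: Update_Y_FL}) and the stochastic gradient in (\ref{Eq: SG_FL}), hence incurs $\mathcal{O}(|b|)$ gradient/function evaluations per round. Summing over $T$ rounds and $K$ clients gives total sample complexity $\mathcal{O}(|b|KT)=\mathcal{O}(\epsilon^{-2})$; dividing by $K$ shows each client spends $\mathcal{O}(|b|T)=\mathcal{O}(K^{-1}\epsilon^{-2})$, which is the advertised linear speed-up. For part (ii), I would separate the two exchanges in Algorithm~\ref{Algo: FL}: the low-dimensional $y_k^t$ is shared every iteration (giving $T=\mathcal{O}(K^{-1}\epsilon^{-2})$ cheap embedding transmissions per client), whereas the expensive high-dimensional model $x_k^t$ is synchronized only every $I$ iterations. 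The relevant communication complexity is therefore the number of model-sharing rounds $\lceil T/I\rceil$. Substituting $I=\Theta(T^{1/4}/(|b|K)^{3/4})$ gives $T/I=(|b|K)^{3/4}T^{3/4}=(|b|KT)^{3/4}$, and invoking $|b|KT=\Theta(\epsilon^{-2})$ yields $T/I=\mathcal{O}(\epsilon^{-3/2})$, as claimed.

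The hard part will not be the algebra but the feasibility bookkeeping. I would need to check that the chosen horizon $T$ is simultaneously compatible with (a) the threshold $T\geq T_{\text{th}}$ required by Theorem~\ref{Thm: FL} for the step-size bound to hold, and (b) the constraint $I\geq 1$, which forces $T\geq(|b|K)^3$ so that the prescribed $I=\Theta(T^{1/4}/(|b|K)^{3/4})$ is a valid integer number of local steps. A second point demanding precision is stating exactly which quantity the $\mathcal{O}(\epsilon^{-3/2})$ refers to—the high-dimensional model synchronizations—since the per-iteration embedding sharing has a larger count but negligible per-round cost; conflating the two would misstate the result. Once these constraints are reconciled (all are satisfied for $\epsilon$ small enough, treating $|b|$ and $K$ as constants), both complexities follow by direct substitution into Corollary~\ref{Cor: FL}.
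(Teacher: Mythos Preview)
Your proposal is correct and takes essentially the same approach as the paper: the corollary is stated without a standalone proof and follows directly from the $C/\sqrt{|b|KT}$ bound of Corollary~\ref{Cor: FL}, exactly as you outline. Your bookkeeping on feasibility ($T\geq T_{\text{th}}$, $I\geq 1$) and your separation of high-dimensional model synchronizations from the cheap embedding shares are in fact more thorough than the paper, which simply remarks afterward that the $\mathcal{O}(\epsilon^{-3/2})$ count refers to model parameters while the $\mathcal{O}(K^{-1}\epsilon^{-2})$ embedding exchanges are accounted for separately.
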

The sample and communication complexities guaranteed by Corollary \ref{Cor: FL_Comm} match that of the standard FedAvg \cite{Yu_Zhu_2018parallel} for solving stochastic non-convex non-CO problems. 
We note that in addition to the $O(\epsilon^{-3/2})$ communication complexity that measures the sharing of high-dimensional parameters, \aname~also shares $\mathcal{O}(K^{-1}\epsilon^{-2})$ low-dimensional embeddings (usually scalar values as illustrated in Section \ref{Subsec: Examples}). Therefore, the total real values shared by each client during the execution of \aname~is $\mathcal{O}(\epsilon^{-3/2} d + K^{-1}\epsilon^{-2})$. Notice that for high-dimensional models like training (large) neural networks, we will usually have $d K \geq \mathcal{O}(\epsilon^{-0.5})$ meaning the total communication will be $\mathcal{O}(\epsilon^{-3/2} d)$ which is better than any Federated CO algorithm proposed in the literature \cite{huang2021compositional,gao2022convergence,guo2022fedx}. Importantly, to our knowledge this is the first work that ensures linear speed up in a federated CO setting, moreover, \aname~achieves this performance without relying on the computation of large batch sizes. 

\section{Experiments}
\label{Sec: Experiments}
In this section, we evaluate the performance of \aname~with both centralized and distributed baselines. We, a) establish the superior performance of \aname~in terms of training/testing accuracy, and b) evaluate the performance of \aname~with different numbers of local updates to capture the effect of data heterogeneity. To evaluate the performance of \aname, we focus on two tasks: classification with an imbalanced dataset and learning with fairness constraints. For the first task, we use CIFAR10-ST and CIFAIR-100-ST datasets \cite{qi2020simple} (unbalanced versions of CIFAR10 and CIFAR100 \cite{krizhevsky2009learning}) for image classification, and the performance is measured by training and testing accuracy achieved by different algorithms. For the second task, we use the Adult dataset \cite{Dua:2019} for enforcing equality of opportunity (on protected classes) on tabular data classification \cite{hardt2016equality}. For this setting, the performance is evaluated by training/testing accuracy, and the constraint violations, which are measured by the gap between the true positive rate of the overall data and the protected groups \cite{haddadpour2022learning}. Please see Appendix \ref{App: Add_Exp} for a detailed discussion of the classification problem, dataset description, experiment settings, and additional experimental evaluation.

\paragraph{Baseline methods.} For the CIFAR10-ST and CIFAR100-ST datasets we compare \aname~with popular centralized baselines for classification with imbalanced data. The baselines adopted for comparison are a popular DRO method, FastDRO \cite{levy2020large}, a primal-dual SGD approach to solve constrained problems with many constraints, PDSGD \cite{xu2020primal}, and a popular baseline minibatch SGD, MBSGD, customized for CO   \cite{Ghadimi_Siam_2013_SGD}. For the adult dataset, we use GCIVR \cite{haddadpour2022learning} as the baseline distributed model to compare with \aname, since like \aname~it is the only algorithm that can deal with compositional and non-compositional objectives at the same time. We also implement a simple parallel SGD as a baseline that ignores the fairness constraints, referred to as unconstrained in the experiments.  

\begin{figure}[t]
\centering
\includegraphics[width=\textwidth]{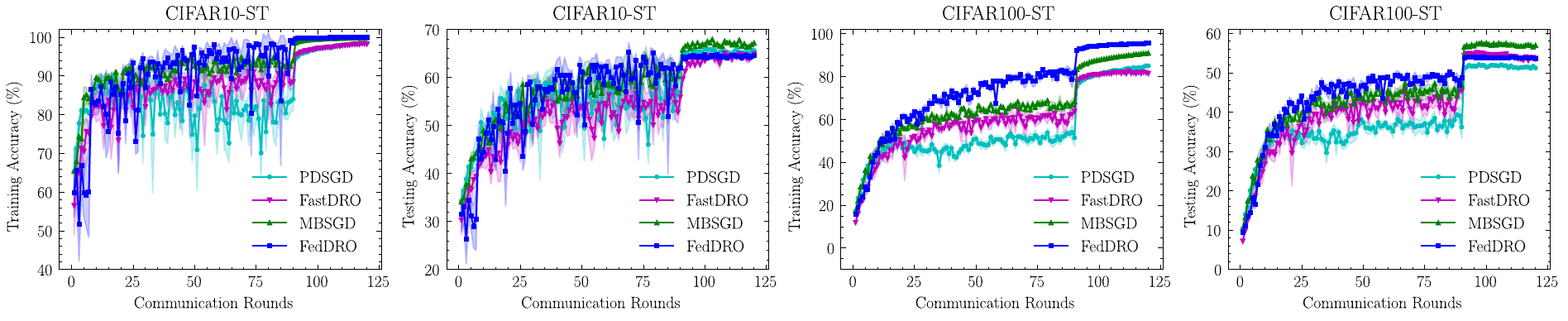}
\caption{Train and test accuracy vs communication rounds for CIFAR10-ST and CIFAR100-ST. }
\label{Fig: CIFAR}
\end{figure}
    
\begin{figure}[ht]
\centering
\includegraphics[width=0.7 \textwidth]{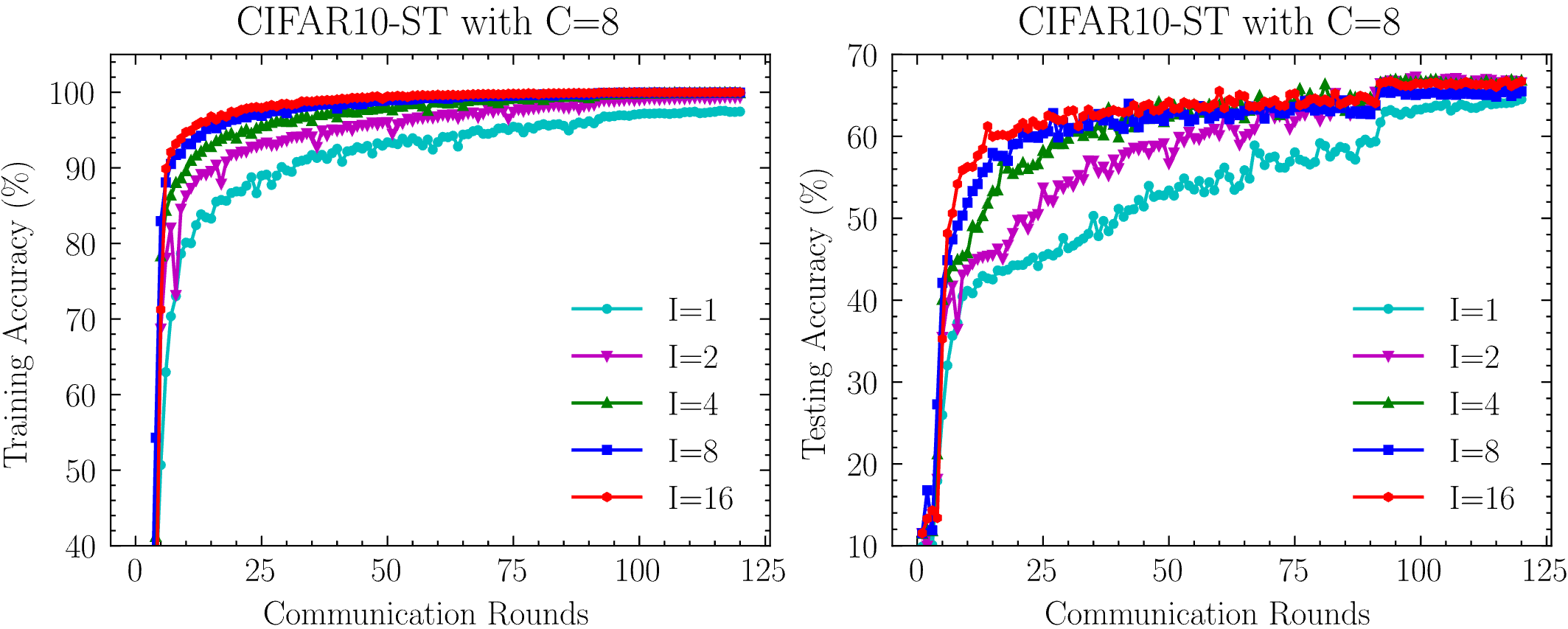} 
\caption{Train and test accuracy of \aname~on the CIFAR10-ST and CIFAR100-ST for different $I$.} 
\label{Fig: CIFAR_I}
\end{figure}

\paragraph{Implementation details.}  We use $8$ clients to model the distributed setting and split the (unbalanced) dataset equally for each client. We use ResNet20 for classification tasks on CIFAR10-ST and CIFAR100-ST datasets. For a fair comparison with centralized baselines, we choose $I = 1$ for \aname~and implement a parallel version of the centralized algorithms where the overall gradient computation is $K$ times larger for each algorithm. This is to make sure that the overall gradient computations in each step are uniform across all algorithms. Performance with different values of $I$ is evaluated separately. For each algorithm, we used a batch size of $16$ per client, and the learning rates were tuned from the set $\{0.001, 0.01, 0.05, 0.1\}$, the learning rate was dropped to $1/10^{\text{th}}$ after $90$ communication rounds. For fairness-constrained classification on the Adult dataset, we use a logistic regression model. For this experiment, we adopt the parameter settings suggested in \cite{haddadpour2022learning}, for \aname~we keep the same setting as in the earlier task. All results are averaged over $5$ independent runs.

\begin{figure}[t] 
\includegraphics[width=\textwidth]{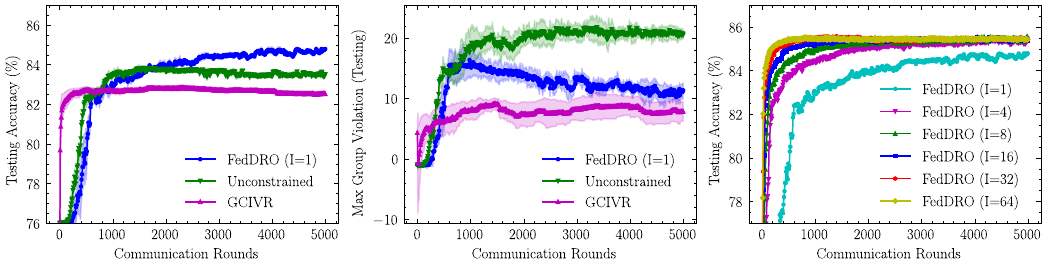}
\caption{Comparison of \aname, GCIVR, and the unconstrained baseline (first two figures). Performance of \aname~with different $I$ (rightmost figure).}
\label{Fig: Adult_Test}
\end{figure}

\paragraph{Discussion.} In Figure \ref{Fig: CIFAR}, we evaluate the performance of \aname~against the parallel implementations of the centralized baselines on unbalanced CIFAR datasets. Note that \aname~provides superior training and comparable test accuracy to the state-of-the-art methods. In Figure \ref{Fig: CIFAR_I}, we evaluate the performance of \aname~for a different number of local updates, $I$. Note that as $I$ increases the performance improves, however, beyond a certain, $I$, the performance doesn't improve capturing the effect of client drift because of data heterogeneity. Finally,  
in Figure \ref{Fig: Adult_Test} we assess the test performance of \aname~against the distributed baseline GCIVR on the Adult dataset. We observe that \aname~outperforms both GCIVR and unconstrained formulation in terms of accuracy and matches the constraint violation performance of GCIVR as communication rounds increase. Finally, for the rightmost image we evaluate the performance of \aname~with different values of $I$, we notice that increasing the value of $I$ leads to improved performance, however, beyond a certain threshold (approximately over 32), the performance saturates as a consequence of client drift.

\section{Conclusion and limitations} In this work, we first established that vanilla FedAvg algorithms are incapable of solving CO problems in the FL setting. To address this challenge, we showed that additional (low-dimensional) embeddings of the stochastic compositional objective are required to be shared to guarantee convergence of the SGD-based FL algorithms to solve CO of the form (\ref{Eq: Fl_Prob}). To this end, we proposed \aname, the first federated CO framework that achieves linear speedup with the number of clients without requiring the computation of large batch sizes. We conducted numerical experiments on various real data sets to show the superior performance of \aname~compared to state-of-the-art. An interesting future problem to be addressed includes limiting the privacy leakage of \aname~while sharing the low-dimensional embeddings.

\newpage

\bibliographystyle{IEEEtran}
\bibliography{abrv, References}

 \newpage
\onecolumn	
\section*{Appendix}
 \appendix

\section{Related work}
\label{App: Related Work}
\paragraph{Centralized CO.}  
The first non-asymptotic analysis of stochastic CO problems was performed in \cite{wang2017stochastic} where the authors proposed SCGD a two-timescale algorithm for solving problem (\ref{Eq: Basic_CompositeOpt}). The convergence of SCGD was improved in \cite{wang2016accelerating} where the authors proposed an accelerated variant of SCGD. Both SCGD and its accelerated variant achieved convergence rates strictly worse than SGD for solving non-CO problems. 
Recently, \cite{ghadimi2020single} and \cite{chen2021solving} developed a single time-scale algorithm for solving the CO problem that achieves the same convergence as SGD for solving non-CO problems. Variance-reduced algorithms for solving the CO problems have also been considered in the literature, however, a major drawback of such approaches is the reliance of batch size on the desired solution accuracy \cite{lian2017finite,zhang2019stochastic,hu2019efficient}.  

 \paragraph{Distributed CO.} There have been only a few attempts to solve non-convex CO problems in the FL setting, partially, because of the challenges discussed in Section \ref{sec: Intro}. The first FL algorithm to solve the non-convex CO problem, Compositional Federated Learning (ComFedL), was developed in \cite{huang2021compositional}. ComFedL required accuracy dependent batch sizes that resulted in $\mathcal{O}(\epsilon^{-4})$ convergence which is significantly worse compared to FedAvg to solve standard non-compositional problems \cite{Yu_Zhu_2018parallel}. In \cite{gao2022convergence}, Local Stochastic Compositional Gradient Descent
with Momentum (Local-SCGDM) was proposed which removed the requirement of large batch sizes and achieved an $\mathcal{O}(\epsilon^{-2})$ convergence. However, Local-SCGDM utilized a non-standard momentum-based update from \cite{ghadimi2020single} that does not resemble a simple SGD-based update. Importantly, the CO problem solved by ComFedL \cite{huang2021compositional} and Local-SCGDM \cite{gao2022convergence} is non-standard as the problem is not distributed in the compositional objective (see Remark \ref{Rem: Comparison_Huang}). In contrast, we consider a general setting where the compositional objective is also distributed among multiple nodes. Recently,  \cite{tarzanagh2022fednest} proposed a nested optimization framework, FedNest, to solve bilevel problems in the FL setting. The proposed algorithm achieved SGD rates of $\mathcal{O}(\epsilon^{-2})$ \cite{Ghadimi_Siam_2013_SGD}. Different from the simple SGD-based update rule, FedNest adopted a multi-loop variance reduction-based update.
In \cite{haddadpour2022learning}, the authors proposed a Generalized Composite Incremental Variance Reduction (GCIVR) framework for solving problems of the form (\ref{Eq: Fl_Prob}) in a distributed setting. GICVR achieved a better convergence rate of $\mathcal{O}(\epsilon^{-1.5})$, however, it relied on a double-loop structure and accuracy dependent large batch sizes to achieve variance reduction. Importantly, none of the above works guarantee linear speedup with the number of clients. Moreover, the current algorithms utilize complicated momentum or VR-based update rules that require computation of accuracy-dependent batch sizes \cite{haddadpour2022learning}, and/or consider a simple setting where the compositional objective is not distributed among nodes \cite{huang2021compositional, gao2022convergence}. 

In contrast to all the above works, our work considers a general setting (\ref{Eq: Fl_Prob}), where the goal is to jointly minimize a compositional and a non compositional objective in the FL setting. To solve  (\ref{Eq: Fl_Prob}), we develop \aname~a FedAvg algorithm for CO problems that achieves (i). the same guarantees as FedAvg for minimizing non-CO problems, (ii). linear speed-up with the number of clients, (iii). improved communication complexity, (iv). performance guarantees where the batch sizes required are  independent of the desired solution accuracy, and (v). characterizes the performance as a function of local updates at each client and the data heterogeneity in the inner and outer non-compositional objectives.

 \paragraph{DRO.} DRO has been extensively studied in optimization, machine learning, and statistics literature \cite{ben2013robust,bertsimas2018data, duchi2021statistics,namkoong2017variance,staib2019distributionally}
 Broadly, DRO problem formulation can be divided into two classes, one is a constrained formulation and the other is the regularized formulation (see (\ref{Eq: GeneralDRO})) \cite{levy2020large, duchi2021statistics}. A popular approach to solve the constrained DRO formulation is via primal-dual formulation where algorithms developed for min-max problems can directly be applied to solve constrained DRO \cite{yan2019stochastic,namkoong2017variance,song2021variance,alacaoglu2022complexity,tran2020hybrid}. Many algorithms under different settings, e.g., convex, non-convex losses, and stochastic settings have been considered in the past to address such problems. However, primal-dual algorithms suffer from computational bottlenecks, since they require maintaining and updating the set of dual variables equal to the size of the dataset which can become particularly challenging, especially for large-scale machine learning tasks. Recently, \cite{levy2020large}
\cite{qi2022stochastic}
\cite{haddadpour2022learning} have developed algorithms that are applicable to large-scale stochastic settings. Works  \cite{levy2020large} and
\cite{qi2022stochastic} consider specific formulations of the DRO problem while \cite{haddadpour2022learning} considers a general formulation, however, as pointed out earlier the algorithms developed in \cite{haddadpour2022learning} are double loop and require accuracy dependent batch sizes to guarantee convergence (see Table \ref{tab:table1}). In contrast, in this work, we develop algorithms that solve general instants of CO problems that often arise in
DRO formulation. Importantly, the developed algorithms are amenable to large-scale distributed implementation with algorithmic guarantees independent of accuracy dependent batch sizes.

\subsection{Detailed Comparison with \cite{huang2021compositional,gao2022convergence,tarzanagh2022fednest}}
\label{App: Comparison}

\paragraph{Comparison with \cite{huang2021compositional,gao2022convergence}.} We note that the problem setting in \cite{huang2021compositional} and \cite{gao2022convergence} is significantly different from the one considered in our work. We also would like to point out that the problem formulation considered in our work is more challenging than \cite{huang2021compositional,gao2022convergence} and the algorithms developed for solving the problem in \cite{huang2021compositional,gao2022convergence} cannot solve the problem considered in our work. In the following, we elaborate on the differences between our work and that of \cite{huang2021compositional,gao2022convergence}.

In \cite{huang2021compositional,gao2022convergence}, the authors consider the objective function
  \begin{align}
      \frac{1}{k}\sum_{k = 1}^K f_k(g_k(\cdot)).
      \label{eq: non-dist_CO}
  \end{align}
  Please observe that in this setting the local nodes have access to local composite functions $f_k(g_k(\cdot))$. In contrast, we consider a setting with objective function defined in (\ref{Eq: Fl_Prob}) where the local nodes have access to only $h_k(\cdot)$ and $g_k(\cdot)$\footnote{We would also like to note that the setting considered in the paper can be easily extended to the case where $f (\cdot) = 1/K \sum_{k=1}^K f_k(\cdot)$ without changing the current results.}. Note that the major difference in the two settings in (\ref{eq: non-dist_CO}) and (\ref{Eq: Fl_Prob}) comes from the fact that in (\ref{eq: non-dist_CO}) the inner function $g_k(\cdot)$ is fully available at each node, whereas in (\ref{Eq: Fl_Prob}) the inner function $1/K \sum_{k = 1}^K g_k(\cdot)$ is not available (since each node can only access $g_k(\cdot)$) at the local nodes. Below, we discuss two major consequences of this:
  
    \begin{itemize}[leftmargin=*]
        \item {\bf Practicality:}  We point out that the setting in (\ref{Eq: Fl_Prob}) is more practical as can be seen from the examples presented in Section \ref{Subsec: Examples} wherein the DRO problems take the form of (\ref{Eq: Fl_Prob}) rather than (\ref{eq: non-dist_CO}) in a distributed setting. For illustration, let us consider a simple setting where we have a total of $m$ samples with each node having access to $m_k = m/K$ samples. Then the DRO problem with KL-Divergence problem becomes
  $$\min_{x \in \mathbb{R}^d}   f\bigg(\frac{1}{k} \sum_{k=1}^K g_k(\cdot)\bigg) : =  \log \bigg( \frac{1}{m} \sum_{i = 1}^m \exp\bigg( \frac{\ell_i(x)}{\lambda} \bigg) \bigg), $$ where $f(\cdot)= \log(\cdot)$, $g_k(x) = 1/m_k \sum_{i=1}^{m_k} \exp\big(  {\ell_i(x)}/{\lambda} \big)$, and $g(\cdot) = 1/K \sum_{k = 1}^K g_k(\cdot)$. Note that the above formulation is same as (\ref{Eq: Fl_Prob}) and cannot be formulated using (\ref{eq: non-dist_CO}). To demonstrate this fact we have used the notation in Table \ref{tab:table1} as CO-ND for formulation of (\ref{eq: non-dist_CO} where the inner function $g_k(\cdot)$ can be fully locally accessed by each node whereas our setting is more general with each node having only partial access to the inner-function $g(\cdot)$.  Next, we show why the algorithms developed for \cite{huang2021compositional,gao2022convergence} cannot be utilized to solve the problem considered in our work.  
  
 \item {\bf Challenges in solving (\ref{Eq: Fl_Prob}):} A major contribution of our work is in establishing the fact that the algorithms that are developed for solving \ref{Eq: Fl_Prob}), i.e., the algorithms developed in \cite{huang2021compositional,gao2022convergence}, cannot be utilized to solve the problem considered in our work. 
 
   To demonstrate this consider the simple deterministic setting with $f_k = f$, then the local gradient computed for the objective function in (\ref{Eq: Fl_Prob}) will be $\nabla g_k(x) \nabla f(g_k(x))$ (please see (\ref{eq: Deterministic_Grad}) in the manuscript). Note that this is an unbiased local gradient for objective in (\ref{eq: non-dist_CO}) which further implies that simple FedAVG-based implementations can be developed for solving this problem as done in \cite{huang2021compositional,gao2022convergence}. In contrast, note that the local gradient $\nabla g_k(x) \nabla f(g_k(x))$ will be a biased local gradient for our problem in (\ref{Eq: Fl_Prob}) and will lead to divergence of FedAvg-based algorithms \cite{huang2021compositional,gao2022convergence} as shown in Section \ref{subsec: VanillaFedAvg}. Moreover, note that we establish that even if we share the local functions $g_k(\cdot)$ intermitteltly among nodes we may not be able to mitigate the bias of local gradient and the developed algorithms will again diverge to incorrect solutions. Please see Section \ref{subsec: VanillaFedAvg} for more details. 
    \end{itemize}

\paragraph{Comparison with \cite{tarzanagh2022fednest}.} Next, we note that the algorithm deveoped in \cite{tarzanagh2022fednest} is a bilevel algorithm with multi-loop structure with many tunable (hyper) parameters. Such algorithms are not preferred in practical implementations. In contrast our algorithm is a single-loop algorithm with simple FedAvg-type SGD updates. In addition to being practical, our work also significantly improves upon the theoretical guarantees achieved in \cite{tarzanagh2022fednest} by achieving linear speed-up with the number of clients as well as improved communication complexity which any of the works including \cite{huang2021compositional,gao2022convergence,tarzanagh2022fednest} are unable to achieve.

 \section{Detailed experiment setup and additional experiments}
 \label{App: Add_Exp}

\textbf{Experiment setup.} The models are trained on an NVIDIA GeForce RTX 3090 GPU with 24 GB of memory. All experiments are conducted using the PyTorch framework, specifically Python 3.9.16 and PyTorch 1.8

\textbf{Datasets.} To evaluate the performance of \aname~, the first section of the experiments is conducted on CIFAR10-ST and CIFAR-100-ST datasets for image classification. The second section of the experiments focuses on the Adult dataset, utilizing tabular data classification and emphasizing DRO for fairness constraints. The CIFAR10-ST and CIFAR-100-ST datasets are modified versions of the original CIFAR10 and CIFAR-100 datasets. The modification involves intentionally creating imbalanced training data. Specifically, only the last 100 images are retained for each class in the first half of the classes, while the other classes and the test data remain unchanged. This creates an imbalanced distribution, posing a challenge for machine learning models to effectively handle imbalanced class scenarios. In the Adult dataset, we consider the race groups ``white," ``black," and ``other" as protected groups. We assign the value of $\epsilon$ as 0.05 and set the noise level to 0.3 during training across all the algorithms.

\textbf{Evaluation metrics.} We present the Top-1 accuracies for the training and testing segments of the CIFAR10-ST and CIFAR-100-ST datasets (please see Figures \ref{Fig: CIFAR} and \ref{Fig: CIFAR_I} in Section \ref{Sec: Experiments}). Furthermore, in addition to training and testing performance, we also include the maximum violation values for both the training and testing sections of the Adult dataset. Specifically, the maximum group violation is evaluated following \cite{haddadpour2022learning}. To ensure equal opportunities among different groups, even when group membership is uncertain and fluctuating during training, the objective is to develop a solution that is robust across various protected groups in the problem. We assume that we have access to the probability distribution of the actual group memberships ($P(gi = j | g^i = k)$ where $g^i$ represents the true group membership and $g^i$ represents the noisy group membership). With this information, we aim to enforce fairness constraints by considering all potential proxy groups based on this probability distribution, which can significantly increase the number of constraints. In the case of equal opportunity, our goal is to ensure that the true positive rate $(TPR)$ for each group closely aligns with the $TPR$ of the overall dataset, within a certain threshold $\epsilon$. In other words, we want to achieve $tpr(g = j) \geq tpr(ALL) - \epsilon$ for every proxy group we define.

\begin{figure}[t]
\centering
\includegraphics[width=\textwidth]{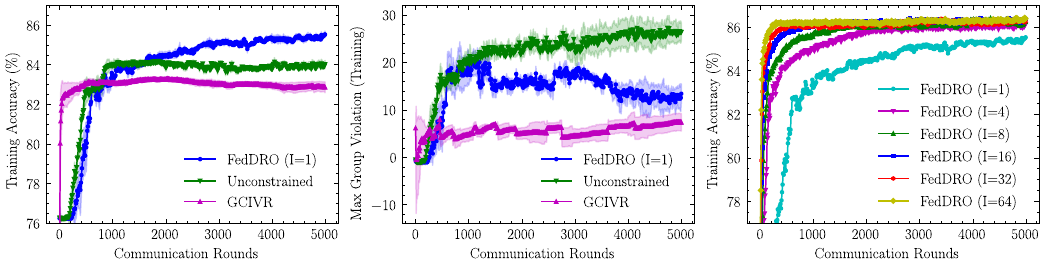}
\caption{Comparison of training accuracies of \aname, GCIVR, and the unconstrained baseline (first two figures). Training performance of \aname~with different $I$ (rightmost figure).}
\label{Fig: Train}
\end{figure}

\begin{figure}[ht]
\centering
\includegraphics[width=0.8\textwidth]{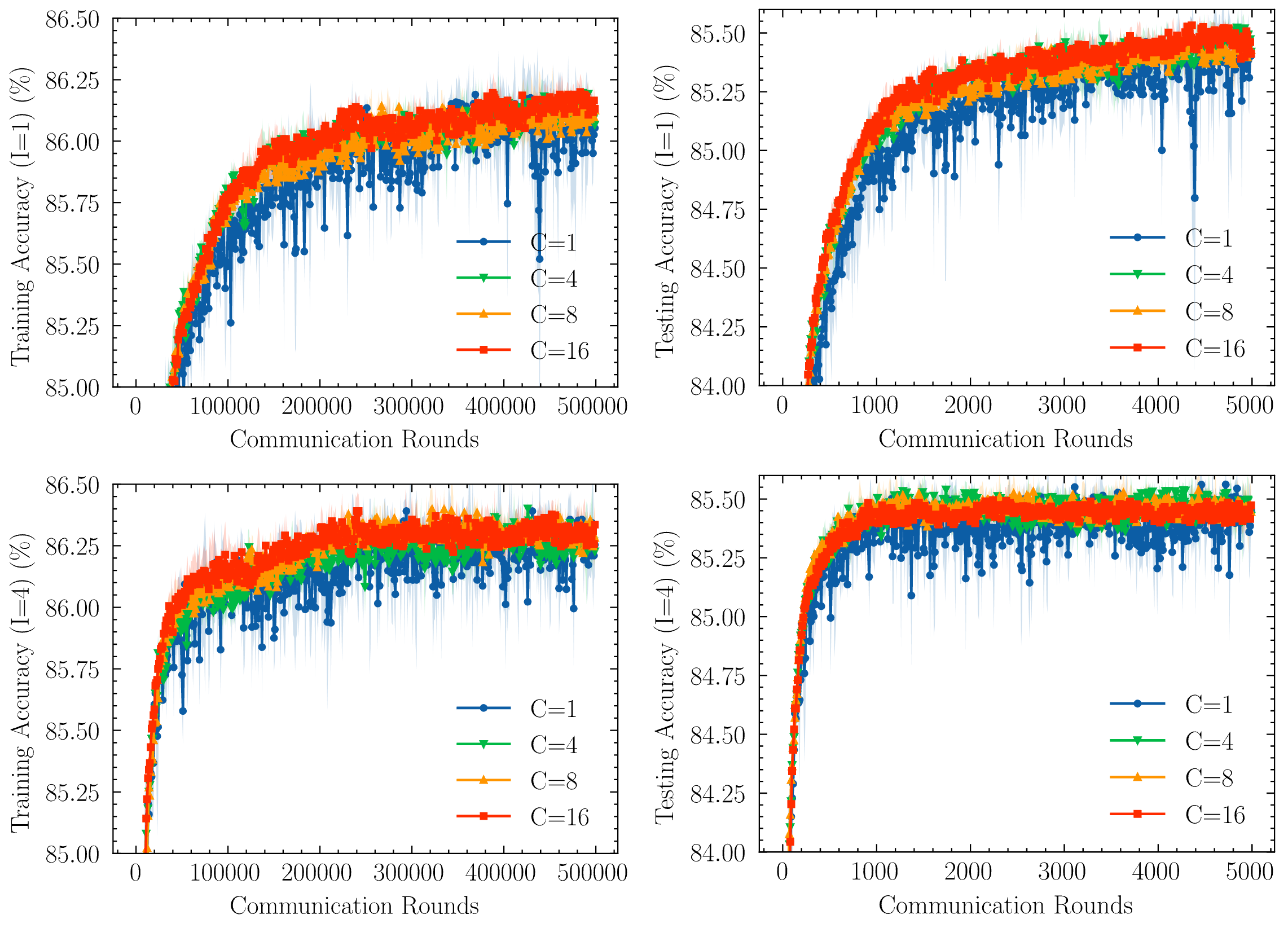}
\caption[short]{Training and testing performance of \aname~with the number of  clients (denoted as $C = 1,2,3$ and $4$ in the figure) and number of local updates, $I = 1$ and $4$.}
\label{Fig: different k}
\end{figure}

{\bf Discussion.} In Figure \ref{Fig: Train}, we evaluate the training performance on the adult dataset under the same conditions as mentioned earlier for testing in Section \ref{Sec: Experiments}. Similar to the previous findings, in the leftmost image, we observe that \aname~ outperforms both the constrained version of GCIVR and unconstrained baseline formulation. 
 Evaluating the maximum group violation, we see the unconstrained optimization demonstrates the poorest performance, while our technique performs comparably to GCIVR, and improves in performance as the communication rounds increase. The right-most plot, confirms that increasing the local updates, i.e., $I$ results in improved performance, aligning with the theoretical guarantees presented in the paper.

In Figure \ref{Fig: different k}, we evaluate the performance of \aname with the number of clients. Specifically, the accuracy demonstrates an upward trend as the value of $C$ (representing the number of clients) increases in the experiments conducted on the adult dataset. The top two plots depict the training and testing performance for $I=1$, while the bottom two demonstrate the training and testing performance with $I=4$.

\section{Useful lemmas}
\label{App: Useful Lemmas}
\begin{lem}
\label{Lem: Sum_vectors}
For vectors $a_1, a_2, \ldots, a_n \in \R^d$, we have
\begin{align*}
    \| a_1 + a_2 + \ldots, + a_n \|^2 \leq n \big[ \|a_1 \|^2 + \|a_2 \|^2 + \ldots, + \|a_n \|^2 \big].
\end{align*}
\end{lem}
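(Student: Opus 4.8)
The plan is to derive this as a direct consequence of the convexity of the squared Euclidean norm; an equivalent route is a single application of the Cauchy--Schwarz inequality. The statement is elementary and contains no genuine obstacle, so the only thing to get right is the bookkeeping of the factor $n$.

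First I would invoke that the map $\phi(x) = \|x\|^2$ is convex on $\R^d$ (it is a composition of the norm with the convex, nondecreasing square, or simply note its Hessian is $2\I \succeq 0$). Applying Jensen's inequality to the uniform average of the vectors $a_1, \ldots, a_n$ gives
\[
\Big\| \frac{1}{n}\sum_{i=1}^n a_i \Big\|^2 \;\leq\; \frac{1}{n}\sum_{i=1}^n \|a_i\|^2 .
\]
Multiplying both sides by $n^2$ and using homogeneity of the norm on the left then yields exactly
\[
\Big\| \sum_{i=1}^n a_i \Big\|^2 \;\leq\; n \sum_{i=1}^n \|a_i\|^2 ,
\]
which is the desired claim.

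As an alternative that avoids explicitly naming convexity, I would combine the triangle inequality with Cauchy--Schwarz applied to the vector of norms $(\|a_1\|, \ldots, \|a_n\|) \in \R^n$ against the all-ones vector $\1 \in \R^n$: the triangle inequality gives $\|\sum_i a_i\| \leq \sum_i \|a_i\|$, and Cauchy--Schwarz bounds $\sum_i \|a_i\| = \langle (\|a_i\|)_i, \1 \rangle \leq \sqrt{n}\,\big(\sum_i \|a_i\|^2\big)^{1/2}$; squaring recovers the inequality. A third, fully self-contained option is to expand $\|\sum_i a_i\|^2 = \sum_i \|a_i\|^2 + \sum_{i \neq j}\langle a_i, a_j\rangle$ and bound each cross term by $\langle a_i, a_j\rangle \leq \tfrac{1}{2}(\|a_i\|^2 + \|a_j\|^2)$, collecting the $n(n-1)$ off-diagonal contributions into $(n-1)\sum_i \|a_i\|^2$.

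There is effectively no hard part here; the lemma is a routine tool used later to split squared norms of sums arising in the convergence analysis. If anything, the only point requiring minor care is ensuring the counting of cross terms is done correctly in the third route, which is precisely why I would favor the one-line convexity argument as the cleanest presentation.
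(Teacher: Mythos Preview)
Your proposal is correct; any of the three routes you outline (Jensen via convexity of $\|\cdot\|^2$, triangle inequality plus Cauchy--Schwarz, or direct expansion with Young's inequality on cross terms) cleanly yields the claim. The paper itself states this lemma without proof, treating it as a standard utility, so there is no approach to compare against; your convexity argument is as clean a justification as one could ask for.
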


\begin{lem}
    \label{Lem: Emp_Var}
    For a sequence of vectors $a_1, a_2, \ldots, a_K \in \R^d$, defining $\bar{a} \coloneqq \frac{1}{K} \sum_{k = 1}^K a_k$, we then have
    \begin{align*}
        \sum_{k = 1}^K \| a_k - \bar{a} \|^2 \leq \sum_{k = 1}^K \| a_k\|^2.
    \end{align*}
\end{lem}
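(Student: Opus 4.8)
The plan is to prove the stronger \emph{equality} from which the stated inequality follows by dropping a single nonnegative term. The key observation is the bias--variance decomposition: expanding each summand about the mean $\bar{a}$ and then summing collapses the cross term precisely because $\bar{a}$ is the average of the $a_k$.

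Concretely, I would first expand each term using the inner-product structure of the $\ell_2$-norm,
\begin{align*}
  \| a_k - \bar{a} \|^2 = \| a_k \|^2 - 2 \langle a_k, \bar{a} \rangle + \| \bar{a} \|^2 .
\end{align*}
Summing over $k = 1, \ldots, K$ and using the definition $\bar{a} = \frac{1}{K} \sum_{k=1}^K a_k$, equivalently $\sum_{k=1}^K a_k = K \bar{a}$, the cross term becomes $\sum_{k=1}^K \langle a_k, \bar{a} \rangle = \langle K \bar{a}, \bar{a} \rangle = K \| \bar{a} \|^2$, while the constant term contributes $K \| \bar{a} \|^2$. Collecting these gives the exact identity
\begin{align*}
  \sum_{k=1}^K \| a_k - \bar{a} \|^2 = \sum_{k=1}^K \| a_k \|^2 - 2 K \| \bar{a} \|^2 + K \| \bar{a} \|^2 = \sum_{k=1}^K \| a_k \|^2 - K \| \bar{a} \|^2 .
\end{align*}
Since $K \| \bar{a} \|^2 \geq 0$, discarding this nonnegative term yields the claimed bound $\sum_{k=1}^K \| a_k - \bar{a} \|^2 \leq \sum_{k=1}^K \| a_k \|^2$.

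There is no genuine obstacle here: the only substantive step is recognizing that the deviations-from-mean sum admits the closed form above, after which the inequality is immediate. The result is the familiar fact that projecting onto the mean can only decrease the total squared norm, and the proof is essentially a one-line expansion of the square.
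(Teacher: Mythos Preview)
Your proof is correct: the expansion yields the exact identity $\sum_{k=1}^K \|a_k - \bar{a}\|^2 = \sum_{k=1}^K \|a_k\|^2 - K\|\bar{a}\|^2$, and dropping the nonnegative term $K\|\bar{a}\|^2$ gives the claim. The paper states this lemma without proof (it is listed among ``useful lemmas'' as a standard fact), so there is nothing to compare against; your argument is precisely the canonical one.
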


\section{Proof of Theorem \ref{Thm: FedAvg_No}}
\label{App: FedAvg}

We restate Theorem \ref{Thm: FedAvg_No} for convenience. 
\begin{theorem}[Vanilla FedAvg: Non-Convergence for CO]
    There exist functions $f(\cdot)$ and $g_k(\cdot)$ for $k \in [K]$ satisfying Assumptions \ref{Ass: Lip}, \ref{Ass: BoundedVar}, and \ref{Ass: BoundedHetero}, and an initialization strategy such that for a fixed number of local updates $I > 1$, and for any $0< \eta^t  < C_\eta$ for $t \in \{0, 1, \ldots, T - 1\}$ where $C_\eta > 0$ is a constant, the iterates generated by Algorithm \ref{Algo: FedAvg} under both Cases I and II do not converge to the stationary point of $\Phi(\cdot)$, where $\Phi(\cdot)$ is defined in (\ref{Eq: Fl_Prob}) with $h(x) = 0$.
    \label{Thm: App_FedAvg_No}
\end{theorem}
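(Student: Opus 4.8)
The statement is an existence claim, so the plan is to construct one explicit counterexample and show that FedAvg's fixed point is displaced off the stationary set of $\Phi$ by a step-size-independent bias. I would reduce to the scalar case $K=2$, $d=d_g=1$, $h\equiv 0$, taking $f$ to agree with $y\mapsto y^2/2$ on a compact region of interest (extended outside it to be globally Lipschitz and Lipschitz-smooth, so Assumption~\ref{Ass: Lip} holds) and two nonlinear maps $g_1,g_2$ with $g_1\neq g_2$ and $\nabla g_1\neq \nabla g_2$ there. Then $\nabla\Phi(x)=\tfrac14(g_1+g_2)(\nabla g_1+\nabla g_2)$, while the averaged local gradient actually used by Algorithm~\ref{Algo: FedAvg} (each client using a purely local value $y_k=g_k(\cdot)$ in~(\ref{eq: Deterministic_Grad})) is $\tfrac12(g_1\nabla g_1+g_2\nabla g_2)$. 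A one-line computation shows their difference is the persistent bias $b(x):=\tfrac14(g_1-g_2)(\nabla g_1-\nabla g_2)$, which I would choose to be bounded away from $0$ on the region.

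Next I would characterize the fixed points of the one communication-round map $\bar x^t\mapsto\bar x^{t+1}$ (i.e. $I$ local gradient steps per client, then averaging). To leading order in $\eta$, the per-round drift in Case~I is $-\eta I(\nabla\Phi+b)$, so its fixed point $x^\star$ satisfies $\nabla\Phi(x^\star)=-b(x^\star)\neq 0$. In Case~II the shared $\bar y^t$ corrects only the \emph{first} of the $I\ge 2$ local steps---after it each client recomputes $y_k=g_k(x_k)$ and reverts to the biased gradient---so the drift is $-\eta\,[I\nabla\Phi+(I-1)b]$ and the fixed point obeys $\nabla\Phi(x^\star)=-\tfrac{I-1}{I}\,b(x^\star)$, which is nonzero precisely because $I>1$ (this is exactly the factor that vanishes at $I=1$, consistent with Theorem~\ref{Thm: FedAvg_Yes}). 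The key point is that $b$ is an $O(1)$ distortion of the gradient field, not an $O(\eta)$ discretization artifact, so no admissible step size can remove it.

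To upgrade this to genuine non-convergence for \emph{every} $\eta^t\in(0,C_\eta)$ and time-varying step sizes---rather than only asymptotically small $\eta$---I would construct the example so that $\Phi$ has a unique stationary point, and exhibit a closed interval, bounded away from that point, that is invariant under the round map for all $\eta^t\in(0,C_\eta)$ and into which all iterates are eventually driven. I would build this interval from the \emph{sign} of the drift (using monotonicity of $g_1,g_2$) rather than its magnitude, so it is robust to the discretization, and choose $C_\eta$ so the per-round displacement stays controlled. Since $\|\nabla\Phi\|$ is uniformly bounded below on this trapping interval while the only stationary point lies outside it, no limit point of $\{\bar x^t\}$ can be stationary, which is the claim for both cases.

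The main obstacle I anticipate is precisely this uniform-in-$\eta$ trapping step: the clean fixed-point formulas above hold only to leading order, and for moderate $\eta$ within $(0,C_\eta)$ the iterates may oscillate, so I must argue via an invariant set and the sign of the drift rather than a contraction-to-a-point. A secondary technical point is verifying Assumptions~\ref{Ass: Lip}--\ref{Ass: BoundedHetero} for the constructed data---especially global Lipschitzness of $f$---which I resolve through the compact-region extension, after checking that the trapped iterates never leave the region on which $f$ equals the quadratic.
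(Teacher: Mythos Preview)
Your plan is essentially the paper's: a scalar two-client counterexample in which the persistent bias is exactly your $b(x)=\tfrac14(g_1-g_2)(\nabla g_1-\nabla g_2)$, followed by an invariant-interval argument based on the sign of the drift showing that $\bar x^t$ stays bounded away from the unique stationary point for every $\eta\in(0,C_\eta)$. The paper executes this with \emph{affine} inner maps $g_1(x)=4x-4$, $g_2(x)=-2x+4$ and $f(y)=\sqrt{y^2+4}$---which is globally Lipschitz with $|\nabla f|<1$, so no compact-region extension is needed and the per-step displacement is uniformly bounded---and then proves by direct case-by-case induction that from $\bar x^0=0.5$ with $I=2$ one has $\bar x^t\ge 0.5$ at every synchronization while $x^\ast=0$, with $C_\eta=1/8$ in Case~I and $C_\eta=1/22$ in Case~II.
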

\begin{proof}
    We consider a setting where we have $K = 2$ nodes in the network. Also, let us consider a single-dimensional setting where the local functions $g_k: \mathbb{R} \to \mathbb{R}$ for $k = \{1,2\}$ at each node are 
    \begin{align*}
        g_1 (x) \coloneqq 4x - 4 \quad \text{and} \quad g_2 (x) \coloneqq - 2 x + 4.
    \end{align*}
    Moreover, assume $f: \mathbb{R} \to \mathbb{R}$ as $f(y) \coloneqq \sqrt{{y^2} + 4}$. Therefore, the CO problem becomes   
    \begin{align}
   \min_{x \in \mathbb{R}} \Bigg\{ \Phi(x) \coloneqq     f\bigg( \frac{1}{2} \Big(  g_1(x) + g_2(x)\Big)  \bigg) \coloneqq \sqrt{   \Bigg[ \frac{1}{2} \Big(  g_1(x) + g_2(x)\Big) \Bigg]^2 + 4 } = \sqrt{ {x^2}  + 4} \Bigg\}.
   \label{Eq: App_Example}
    \end{align}
First, we establish that the functions $f(\cdot)$ and $g_k(\cdot)$ for $k \in [K]$ satisfy Assumptions \ref{Ass: Lip}, \ref{Ass: BoundedVar}, and \ref{Ass: BoundedHetero}.

{\bf Claim:} Functions $f$, $g_1$ and $g_2$ satisfy Assumptions \ref{Ass: Lip}, \ref{Ass: BoundedVar}, and \ref{Ass: BoundedHetero}. 

The above claim is straightforward to verify. Specifically, we have
\begin{itemize}[leftmargin=*]
    \item[--] The functions  $f$, $g_1$ and $g_2$ are differentiable and Lipschitz smooth. 
    \item[--] The function $f(\cdot)$ is Lipschitz. Moreover, $g_k(\cdot)$'s are deterministic functions implying mean-squared Lipschitzness. 
    \item[--] Assumption \ref{Ass: BoundedVar} is automatically satisfied since $g_k(\cdot)$'s are deterministic functions. 
    \item[--] Bounded heterogeneity of $g_k(\cdot)$'s is satisfied.
\end{itemize}
Note that it is clear from (\ref{Eq: App_Example}) that the minimizer of $\Phi(\cdot)$ is $x^\ast = 0$. In the following, we will show that Algorithm \ref{Algo: FedAvg} is not suitable to solve such problems by establishing that there exists an initialization strategy and choice of step-sizes in the range $0 < \eta < C_\eta$ where $C_\eta > 0$ is a constant, the iterates generated by Algorithm \ref{Algo: FedAvg} under both Cases I and II fail to converge to $x^\ast$. Next, we prove the statement of the theorem in two parts. In the first part, we tackle Case I of Algorithm \ref{Algo: FedAvg} while in the second part, we prove Case II of Algorithm \ref{Algo: FedAvg}. Next, we consider Case I.

{\bf Case I:} Let us first compute the local gradients at each agent. We have 
\begin{align*}
    \nabla \Phi_1(x)    & = \nabla g_1(x) \nabla f(y_1) = 4 \frac{ y_1}{\sqrt{y_1^2 + 4}} \\
      \nabla \Phi_2(x)   & = \nabla g_2(x) \nabla f(y_2) = - 2 \frac{ y_2}{\sqrt{y_2^2 + 4}}
\end{align*}
To prove the results, we consider a simple setting with $I = 2$, i.e., each node conducts $2$ local updates and shares the model parameters with the server. Moreover, we initialize the local iterates to be $x_k^0 = \bar{x}^0 = 0.5$ for $k = \{ 1,2\}$ at both nodes. For this setting, let us write the update rule for Algorithm 1 in Case I. 

\begin{enumerate}[leftmargin=*]
    \item Note that for every $t$ such that $t ~\text{mod} ~2 = 0$, the local update at each node will be: 
    \begin{align*}
       x^{t+1}_1 & = \bar{x}^{t} - 4 \eta \frac{4 \bar{x}^{t} - 4}{\sqrt{(4 \bar{x}^{t} - 4)^2 + 4}}  \\
 x^{t+1}_2 & = \bar{x}^{t} + 2 \eta \frac{-2 \bar{x}^{t} + 4}{\sqrt{(-2 \bar{x}^{t} + 4)^2 + 4}},
    \end{align*}
    \item Moreover, the next immediate update at each node will be 
      \begin{align*}
       x^{t+2}_1 & = x^{t+1}_1 - 4 \eta \frac{4 x^{t+1}_1 - 4}{\sqrt{(4 x^{t+1}_1 - 4)^2 + 4}} \\
 x^{t+2}_2 & = x^{t+1}_2  + 2 \eta \frac{-2 x^{t+1}_2 + 4}{\sqrt{(-2 x^{t+1}_2 + 4)^2 + 4}},
    \end{align*}
    \item This process keeps repeating for $T$ iterations. 
\end{enumerate}

Let us focus on the local functions $f(g_1(x))$ and $f(g_2(x))$. Note from the definition of $g_1(\cdot)$, $g_2(\cdot)$ and $f(\cdot)$ that the local optimum of these functions will be $x_1^\ast = 1$ and $x_2^\ast = 2$, respectively. Consequently, for appropriately chosen step-size $\eta$ in each iteration $x_1^{t+1}$ and $x_1^{t+2}$ at node 1 will converge towards $x^\ast_1 = 1$ and similarly, $x_2^{t+1}$ and $x_2^{t+2}$ at node 2 will converge towards $x^\ast_1 = 2$. This implies that we can expect the sequence $\bar{x}^{t}$ for each $t \in [T]$ to not converge to $x^\ast = 0$, the minimizer of the CO problem defined in (\ref{Eq: App_Example}). Let us present this argument formally.

{\bf Claim:} For $C_\eta = 1/8$ such that we have $0 < \eta < C_\eta$, and utilizing the initialization $\bar{x}^0 = 0.5$, we have $\bar{x}^t \geq 0.5$ for every $t > 0$ with $t ~\text{mod}~ 2 = 0$. 

This above Claim directly proves the statement of Theorem \ref{Thm: FedAvg_No} for Case I. Let us now prove the claim formally. We utilize induction to prove the claim.

{\em Proof of claim:} First, note that the claim is automatically satisfied for $t = 0$ as a consequence of the initialization strategy. Assuming the claim holds for some $t \in [T]$ with $t ~\text{mod}~ 2 = 0$, i.e., we have $\bar{x}_t \geq 0.5$ for some $t \in [T]$ with $t ~\text{mod}~ 2 = 0$, we need to show that $\bar{x}_{t+2} \geq 0.5$. 

In the following, we consider the following three cases: (1) $0.5 \leq \bar{x}_t < 1$, (2) $1 \leq \bar{x}_t < 2$, and (3) $\bar{x}_t \geq 2$. Here, we present the proof for case (1), the rest of the cases follow in a similar manner. 

\begin{itemize}[leftmargin=*]
    \item  Note from Step 1 above that since $0.5 \leq \bar{x}^t < 1$, we have $4 \bar{x}^t - 4 < 0$ and $-2 \bar{x}^t + 4 > 0$, which further implies that the locally updated iterates $x^{t+1}_1 > \bar{x}^t \geq 0.5$ and $x^{t+1}_2 > \bar{x}^t \geq 0.5$. Next, let us analyze the iterates at $t+2$.
    \item   At node 1, we further consider two cases, when $x_1^{t+1} < 1$ and the other when  $x_1^{t+1} \geq 1$.
    \begin{itemize}[leftmargin=*]
        \item First, note that if $x_1^{t+1} < 1$ we will have $4 x^{t+1}_1 - 4 < 0$ in Step 2 above implying $x^{t+2}_1 > x^{t+1}_1 > \bar{x}^t \geq 0.5$.
        \item Otherwise, if $x_1^{t+1} \geq 1$, we have $4 x^{t+1}_1 - 4 \geq  0$ however in this case we have 

$$\Bigg|  4 \eta \frac{4 x^{t+1}_1 - 4}{\sqrt{(4 x^{t+1}_1 - 4)^2 + 4}} \Bigg| \leq 1/2 ~~\text{for}~~\eta \leq \frac{1}{8},$$
again implying from the update rule in Step 2 that 
$$x_1^{t+2} \geq x_1^{t+1} - \frac{1}{2} \geq 0.5,$$
where the last step follows from the fact that $x_1^{t+1} \geq 1$.
Therefore, we have established that $x_1^{t+2} \geq 0.5$.
    \end{itemize}
    \item At node 2, it is easy to establish that for case (1) with $0.5 \leq \bar{x_t} < 1$, we will have $0.5 \leq  x_2^{t+1} \leq 1.5$. Note from the update rule in Step 2 that for this $x_2^{t+1}$, we have $-2 x^{t+1}_2 + 4 > 0$ which further implies that $x_2^{t+2} > x_2^{t+1} \geq 0.5$.
    \item Finally, we have established that both $x_1^{t + 2} \geq 0.5$ and $x_2^{t + 2} \geq 0.5$, implying $\bar{x}_{t+2} \geq 0.5$. This completes the proof of Case (1). Note that the proof for the other cases follows in a very similar straightforward manner. 
\end{itemize}
Therefore, we have the proof of Case I in Algorithm \ref{Algo: FedAvg}. Next, we consider Case II where in addition to the model parameters, the local embeddings $g_k(\cdot)$ for $k \in [K]$ are also shared intermittently among nodes. Please see Case II in Algorithm \ref{Algo: FedAvg}.

{\bf Case II:} Let us consider the same setting as in Case I. Specifically, we consider a simple setting with $I = 2$, i.e., each node conducts $2$ local updates and shares the model parameters with the server. Moreover, we initialize the model parameters $x_k^0 = \bar{x}^0 = 0.5$ for $k = \{ 1,2\}$ at both nodes. Note that this implies from the definition of $g_1(\cdot)$ and $g_2(\cdot)$ that $y_k^0 = \bar{y}^0 = 0.5$ for $k = \{ 1,2\}$. For this setting, let us write the update rule for Algorithm 1. 

\begin{enumerate}[leftmargin=*]
    \item Note that for every $t$ such that $t ~\text{mod} ~2 = 0$, the local update at each node will be: 
    \begin{align*}
       x^{t+1}_1 & = \bar{x}^{t} - 4 \eta \frac{\bar{x}^{t}}{\sqrt{(\bar{x}^{t})^2 + 4}} \\
 x^{t+1}_2 & = \bar{x}^{t} + 2 \eta \frac{\bar{x}^{t}}{\sqrt{(\bar{x}^{t})^2 + 4}},
    \end{align*}
    \item Moreover, the next immediate update at each node will be 
      \begin{align*}
       x^{t+2}_1 & = x^{t+1}_1 - 4 \eta \frac{4 x^{t+1}_1 - 4}{\sqrt{(4 x^{t+1}_1 - 4)^2 + 4}} \\
 x^{t+2}_2 & = x^{t+1}_2  + 2 \eta \frac{-2 x^{t+1}_2 + 4}{\sqrt{(-2 x^{t+1}_2 + 4)^2 + 4}},
    \end{align*}
       \item This process keeps repeating for $T$ iterations. 
\end{enumerate}
We point out that this setting is considerably challenging compared to Case I since a cursory look at the algorithm may suggest that sharing the embeddings $g_k(\cdot)$ for $k \in [K]$ intermittently may help mitigate the bias in the gradient estimates. However, this is not the case as we show next. 

{\bf Claim:} For $C_\eta = {1/22}$ such that we have $0 < \eta < C_\eta$, and utilizing the initialization $\bar{x}^0 = 0.5$, we have $\bar{x}^t \geq 0.5$ for every $t > 0$ with $t ~\text{mod}~ 2 = 0$.

We note that for this case the intuition is not as straightforward as in the previous case. We again prove the claim by induction. 

{\em Proof of claim:} First, note that the claim is automatically satisfied for $t = 0$ as a consequence of the initialization strategy. Assuming the claim holds for some $t \in [T]$ with $t ~\text{mod}~ 2 = 0$, i.e., we have $\bar{x}_t \geq 0.5$ for some $t \in [T]$ with $t ~\text{mod}~ 2 = 0$, we need to show that $\bar{x}_{t+2} \geq 0.5$. 

Let us first construct $x_1^{t+2}$ and $x_2^{t+2}$ as a function of $\bar{x}^t$. To this end, we have from the update rule in Steps 1 and 2 that
 \begin{align*}
       x^{t+2}_1 & = \bar{x}^{t} \big( 1 -  \epsilon_1^t \big) - 4 \eta \frac{4 \bar{x}^{t} \big( 1 -  \epsilon_1^t \big) - 4}{\sqrt{(4 \bar{x}^{t} \big( 1 -  \epsilon_1^t \big) - 4)^2 + 4}} \\
 x^{t+2}_2 & = \bar{x}^{t} \big( 1 +  \epsilon_2^t \big)  + 2 \eta \frac{-2  \bar{x}^{t} \big( 1 +  \epsilon_2^t \big) + 4}{\sqrt{(-2  \bar{x}^{t} \big( 1 +  \epsilon_2^t \big) + 4)^2 + 4}},
    \end{align*}
 where we have defined $\epsilon_1^t \coloneqq \frac{4 \eta }{\sqrt{(\bar{x}^t)^2 + 4}}$ and   $\epsilon_2^t \coloneqq \frac{2 \eta }{\sqrt{(\bar{x}^t)^2 + 4}}$, therefore, we have $\epsilon_1^t = 2 \epsilon_2^t$. Using the above we can evaluate $\bar{x}^{t+2}$ as
 \begin{align*}
  \bar{x}^{t+2} & = \frac{1}{2} \big( x_1^{t+2} +  x_2^{t+2}\big) \\
  &  =  \bigg( \frac{2 - \epsilon_1^t + \epsilon_2^t}{2} \bigg)  \bar{x}^t + 2 \eta \frac{4 - 4 \bar{x}^{t} \big( 1 -  \epsilon_1^t \big)  }{\sqrt{(4 \bar{x}^{t} \big( 1 -  \epsilon_1^t \big) - 4)^2 + 4}} + \eta \frac{4 -2  \bar{x}^{t} \big( 1 +  \epsilon_2^t \big)  }{\sqrt{(-2  \bar{x}^{t} \big( 1 +  \epsilon_2^t \big) + 4)^2 + 4}} \\
  & = \bigg( 1 - \frac{\epsilon_2^t}{2} \bigg) \bar{x}^t + 2 \eta \frac{4 - 4 \bar{x}^{t} \big( 1 -  \epsilon_1^t \big)  }{\sqrt{(4 \bar{x}^{t} \big( 1 -  \epsilon_1^t \big) - 4)^2 + 4}} + \eta \frac{4 -2  \bar{x}^{t} \big( 1 +  \epsilon_2^t \big)  }{\sqrt{(-2  \bar{x}^{t} \big( 1 +  \epsilon_2^t \big) + 4)^2 + 4}},
 \end{align*}
where in the first term of the last equality, we have used the fact that $\epsilon_1^t = 2 \epsilon_2^t$. Recall from the induction hypothesis that we have $\bar{x}^t \geq 0.5$, and we need to show that $\bar{x}^{t+2} \geq 0.5$. Note from above that to establish $\bar{x}^{t+2} \geq 0.5$, it suffices to show that 
 \begin{align}
 \label{Eq: Sufficient_Cond_Case2_Prelim}
 \bar{x}^t - 0.5 + 2 \eta \frac{4 - 4 \bar{x}^{t} \big( 1 -  \epsilon_1^t \big)  }{\sqrt{(4 \bar{x}^{t} \big( 1 -  \epsilon_1^t \big) - 4)^2 + 4}} + \eta \frac{4 -2  \bar{x}^{t} \big( 1 +  \epsilon_2^t \big)  }{\sqrt{(-2  \bar{x}^{t} \big( 1 +  \epsilon_2^t \big) + 4)^2 + 4}} \geq   \frac{\epsilon_2^t}{2}   \bar{x}^t.
 \end{align}
  From the definition of $\epsilon_2^t \coloneqq \frac{2 \eta }{\sqrt{(\bar{x}^t)^2 + 4}}$, we note that the r.h.s. term can be further upper bounded as $$\frac{\epsilon_2^t}{2}   ~\bar{x}^t = \eta 
 ~\frac{ \bar{x}^t}{\sqrt{(\bar{x}^t)^2 + 4}} \leq \eta.$$
 Therefore, to establish to establish $\bar{x}^{t+2} \geq 0.5$, it suffices to show that 
 \begin{align}
 \bar{x}^t - 0.5 + 2 \eta \frac{4 - 4 \bar{x}^{t} \big( 1 -  2 \epsilon_2^t \big)  }{\sqrt{(4 \bar{x}^{t} \big( 1 -  2 \epsilon_2^t \big) - 4)^2 + 4}} + \eta \frac{4 -2  \bar{x}^{t} \big( 1 +  \epsilon_2^t \big)  }{\sqrt{(-2  \bar{x}^{t} \big( 1 +  \epsilon_2^t \big) + 4)^2 + 4}} \geq   \eta,
 \label{Eq: Sufficient_Cond_Case2}
 \end{align}
 where we have replaced $\epsilon_1^t = 2 \epsilon_2^t$. Similar to the previous proof here we again consider three cases as listed below
 \begin{itemize}[leftmargin=*]
     \item Case (1): $\frac{4 - 4 \bar{x}^{t}  ( 1 -  2 \epsilon_2^t  )  }{\sqrt{(4 \bar{x}^{t}  ( 1 -  2 \epsilon_2^t  ) - 4)^2 + 4}} < 0$ and $\frac{4 -2  \bar{x}^{t}  ( 1 +  \epsilon_2^t  )  }{\sqrt{(-2  \bar{x}^{t}  ( 1 +  \epsilon_2^t  ) + 4)^2 + 4}} < 0$
      \item Case (2): $\frac{4 - 4 \bar{x}^{t}  ( 1 -  2 \epsilon_2^t  )  }{\sqrt{(4 \bar{x}^{t}  ( 1 -  2 \epsilon_2^t  ) - 4)^2 + 4}} < 0$ and $\frac{4 -2  \bar{x}^{t} ( 1 +  \epsilon_2^t  )  }{\sqrt{(-2  \bar{x}^{t} ( 1 +  \epsilon_2^t  ) + 4)^2 + 4}} > 0$
       \item Case (3): $\frac{4 - 4 \bar{x}^{t}  ( 1 -  2 \epsilon_2^t )  }{\sqrt{(4 \bar{x}^{t}  ( 1 -  2 \epsilon_2^t  ) - 4)^2 + 4}} \geq 0$ and $\frac{4 -2  \bar{x}^{t}  ( 1 +  \epsilon_2^t  )  }{\sqrt{(-2  \bar{x}^{t}  ( 1 +  \epsilon_2^t ) + 4)^2 + 4}} \geq 0$
 \end{itemize}
 We first consider Case (1). Note that Case (1) implies that $\bar{x}^t > 1$, and using the fact that $\frac{4 - 4 \bar{x}^{t}  ( 1 -  2 \epsilon_2^t  )  }{\sqrt{(4 \bar{x}^{t}  ( 1 -  2 \epsilon_2^t  ) - 4)^2 + 4}}\geq -1$ and $\frac{4 -2  \bar{x}^{t}  ( 1 +  \epsilon_2^t  )  }{\sqrt{(-2  \bar{x}^{t}  ( 1 +  \epsilon_2^t  ) + 4)^2 + 4}} \geq - 1$, we get
 \begin{align*}
     &  \bar{x}^t - 0.5 + 2 \eta \frac{4 - 4 \bar{x}^{t} \big( 1 -  2 \epsilon_2^t \big)  }{\sqrt{(4 \bar{x}^{t} \big( 1 -  2 \epsilon_2^t \big) - 4)^2 + 4}} + \eta \frac{4 -2  \bar{x}^{t} \big( 1 +  \epsilon_2^t \big)  }{\sqrt{(-2  \bar{x}^{t} \big( 1 +  \epsilon_2^t \big) + 4)^2 + 4}}   \geq 0.5 - 3 \eta  
 \end{align*}
 Note that by choosing $\eta \leq 1/8$, the sufficient condition in (\ref{Eq: Sufficient_Cond_Case2}) is satisfied, which further implies that under Case (1), we have $\bar{x}^{t+2} \geq 0.5$. Next, we consider Case (2).

 Note that for Case (2) we have $2/(1 + \epsilon_2^t) > \bar{x}^t > 1$, next using the fact that $\frac{4 - 4 \bar{x}^{t}  ( 1 -  2 \epsilon_2^t  )  }{\sqrt{(4 \bar{x}^{t}  ( 1 -  2 \epsilon_2^t  ) - 4)^2 + 4}}\geq -1$ and $\frac{4 -2  \bar{x}^{t}  ( 1 +  \epsilon_2^t  )  }{\sqrt{(-2  \bar{x}^{t}  ( 1 +  \epsilon_2^t  ) + 4)^2 + 4}} \geq 0$, we get

\begin{align*}
     &  \bar{x}^t - 0.5 + 2 \eta \frac{4 - 4 \bar{x}^{t} \big( 1 -  2 \epsilon_2^t \big)  }{\sqrt{(4 \bar{x}^{t} \big( 1 -  2 \epsilon_2^t \big) - 4)^2 + 4}} + \eta \frac{4 -2  \bar{x}^{t} \big( 1 +  \epsilon_2^t \big)  }{\sqrt{(-2  \bar{x}^{t} \big( 1 +  \epsilon_2^t \big) + 4)^2 + 4}}   \geq 0.5 - 2 \eta  
 \end{align*}
 Again choosing $\eta \leq 1/8$, the sufficient condition in (\ref{Eq: Sufficient_Cond_Case2}) is satisfied, which further implies that under Case (2), we have $\bar{x}^{t+2} \geq 0.5$.

 Finally, we consider the most challenging Case (3). Note that in Case (3) we have $0.5 \leq \bar{x}^t \leq 1/(1 - 2 \epsilon_2^t)$. For this case, we revisit the sufficient condition in (\ref{Eq: Sufficient_Cond_Case2_Prelim}) and make it tight. Recall that we had from (\ref{Eq: Sufficient_Cond_Case2_Prelim}) that
 \begin{align*}
 \bar{x}^t - 0.5 + 2 \eta \frac{4 - 4 \bar{x}^{t} \big( 1 -  2 \epsilon_2^t \big)  }{\sqrt{(4 \bar{x}^{t} \big( 1 -  2\epsilon_2^t \big) - 4)^2 + 4}} + \eta \frac{4 -2  \bar{x}^{t} \big( 1 +  \epsilon_2^t \big)  }{\sqrt{(-2  \bar{x}^{t} \big( 1 +  \epsilon_2^t \big) + 4)^2 + 4}} & \geq   \eta \frac{\bar{x_t}}{\sqrt{(\bar{x}_t)^2 + 4}},
 \end{align*} 
now using the fact that for Case (3), we have $0.5 \leq \bar{x}^t \leq 1/(1 - 2 \epsilon_2^t)$, we can restate the sufficient condition as
\begin{align}
\label{Eq: Sufficient_Cond_Mod}
 \bar{x}^t - 0.5 + 2 \eta \frac{4 - 4 \bar{x}^{t} \big( 1 -  2 \epsilon_2^t \big)  }{\sqrt{(4 \bar{x}^{t} \big( 1 -  2 \epsilon_2^t \big) - 4)^2 + 4}} + \eta \frac{4 -2  \bar{x}^{t} \big( 1 +  \epsilon_2^t \big)  }{\sqrt{(-2  \bar{x}^{t} \big( 1 +  \epsilon_2^t \big) + 4)^2 + 4}} & \geq  \frac{\eta}{2}  ,
 \end{align}
where we have used the fact that $0.5 \leq \bar{x}^t \leq 1.1$ for $\eta < 1/22$ and the fact that the term $\eta \frac{\bar{x_t}}{\sqrt{(\bar{x}_t)^2 + 4}}  > \frac{\eta}{2}$ for $0.5 \leq \bar{x}^t \leq 1.1$.  Moreover, $\eta < 1/22$ ensures that $1 + \epsilon_2^t \leq 23/22$. Next, using the fact that $\frac{4 - 4 \bar{x}^{t} \big( 1 -  2 \epsilon_2^t \big)  }{\sqrt{(4 \bar{x}^{t} \big( 1 -  2 \epsilon_2^t \big) - 4)^2 + 4}} > 0$ and
 $$\frac{4 -2  \bar{x}^{t} \big( 1 +  \epsilon_2^t \big)  }{\sqrt{(-2  \bar{x}^{t} \big( 1 +  \epsilon_2^t \big) + 4)^2 + 4}} \geq \frac{4 -2  \bar{x}^{t} \big( 23/22\big)  }{\sqrt{(-2  \bar{x}^{t} (1 + \epsilon_2^t) + 4)^2 + 4}} \geq \frac{6}{10},$$
 Substituting in the l.h.s. of the sufficient condition stated in (\ref{Eq: Sufficient_Cond_Mod}), we get 
 \begin{align*}
     \bar{x}^t - 0.5 + 2 \eta \frac{4 - 4 \bar{x}^{t} \big( 1 -  2 \epsilon_2^t \big)  }{\sqrt{(4 \bar{x}^{t} \big( 1 -  2 \epsilon_2^t \big) - 4)^2 + 4}} + \eta \frac{4 -2  \bar{x}^{t} \big( 1 +  \epsilon_2^t \big)  }{\sqrt{(-2  \bar{x}^{t} \big( 1 +  \epsilon_2^t \big) + 4)^2 + 4}}   \geq \frac{6 \eta}{10},  
 \end{align*}
where we used that fact that $\bar{x}^t \geq 0.5$. Note that $\frac{6 \eta}{10} > \frac{\eta}{2}$, therefore, the sufficient condition stated in (\ref{Eq: Sufficient_Cond_Mod}) is satisfied. This further implies that the $\bar{x}^{t+2} \geq 0.5$ during the execution of the algorithm.

 Recall that the optimal solution for solving the CO problem is $x^\ast = 0$. This means Algorithm \ref{Algo: FedAvg} under both Case I and II fails to converge to the stationary solution. 

 Hence, the theorem is proved.
\end{proof}
Finally, we corroborate the result presented in Theorem \ref{Thm: App_FedAvg_No} via numerical experiment for solving (\ref{Eq: App_Example}) using Case II of Algorithm \ref{Algo: FedAvg}. In Figure \ref{Fig: LB}, we plot the evolution of $\bar{x}^t$ in each communication round. We note that $\bar{x}^t$ is lower bounded by $0.5$ as established in the proof of Theorem \ref{Thm: FedAvg_Yes} above. In fact, note that for all the settings as the communication rounds increase, $\bar{x}^t$ eventually converges to a quantity that is greater than $1$. However, as discussed for the example considered to establish the proof of Theorem \ref{Thm: FedAvg_No}, we know that the true optimizer of the CO problem (\ref{Eq: App_Example}) is $x^\ast = 0$. 

\begin{figure}[t]
\centering
\includegraphics[width=0.5\textwidth]{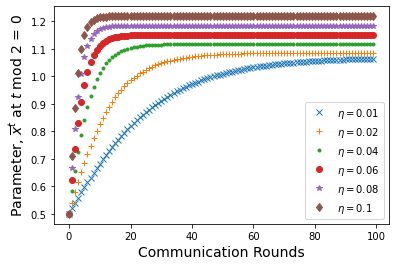}
\caption[short]{The evolution of parameter $\bar{x}^t$ at each communication round for different choices of step-sizes $\eta$. }
\label{Fig: LB}
\end{figure}

\section{Proof of Theorem \ref{Thm: FedAvg_Yes}}

\begin{theorem}[Modified FedAvg: Convergence for CO]
   Suppose we modify Algorithm \ref{Algo: FedAvg} such that $y_k^t = \bar{y}^t$ is updated at each iteration $t \in \{0, 1, \ldots, T - 1 \}$ instead of $[t+1~ \text{\em mod}~ I]$ iterations as in current version of Algorithm \ref{Algo: FedAvg}. Then if functions $f(\cdot)$ and $g_k(x)$ for $k \in [K]$ satisfy Assumptions \ref{Ass: Lip}, \ref{Ass: BoundedVar}, and \ref{Ass: BoundedHetero} such that for a fixed number of local updates $1 \leq I \leq \mathcal{O}(T^{1/4})$, there exists a choice of $\eta^t > 0$ for $t \in \{0, 1, \ldots, T - 1\}$ such that the iterates generated by (modified) Algorithm \ref{Algo: FedAvg} converge to the stationary point of $\Phi(\cdot)$, where $\Phi(\cdot)$ is defined in (\ref{Eq: Fl_Prob}) with $h(x) = 0$.
    \label{Thm: FedAvg_Yes_App}
\end{theorem}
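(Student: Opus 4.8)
The plan is to run a FedAvg-style descent analysis on the averaged iterate $\bar{x}^t \coloneqq \frac{1}{K}\sum_{k=1}^K x_k^t$, using the $L_\Phi$-smoothness of $\Phi$ from Lemma \ref{Lem: Lip} (here $L_h=0$, so $L_\Phi = B_f L_g + B_g^2 L_f$), and to exploit the per-iteration sharing of $\bar{y}^t$ to show that the bias in the averaged update direction is controlled by the \emph{client drift} $\frac{1}{K}\sum_k \|x_k^t - \bar{x}^t\|$ rather than by the heterogeneity constant $\Delta_g$. First I would observe that, since averaging the $x_k$'s at the communication steps preserves their mean, the averaged iterate always satisfies $\bar{x}^{t+1} = \bar{x}^t - \eta^t \bar{G}^t$, where $\bar{G}^t \coloneqq \frac{1}{K}\sum_k \nabla g_k(x_k^t)\nabla f(\bar{y}^t)$ and $\bar{y}^t = \frac{1}{K}\sum_k g_k(x_k^t)$ by the modified (per-iteration) sharing rule.

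The heart of the argument is to bound the deviation $\|\bar{G}^t - \nabla\Phi(\bar{x}^t)\|$. Splitting it as
\begin{align*}
\bar{G}^t - \nabla\Phi(\bar{x}^t) = \tfrac{1}{K}\sum_k\big[\nabla g_k(x_k^t) - \nabla g_k(\bar{x}^t)\big]\nabla f(\bar{y}^t) + \tfrac{1}{K}\sum_k \nabla g_k(\bar{x}^t)\big[\nabla f(\bar{y}^t) - \nabla f(g(\bar{x}^t))\big],
\end{align*}
I would bound the first term by $B_f L_g \cdot \frac{1}{K}\sum_k\|x_k^t - \bar{x}^t\|$ (using $L_g$-smoothness of each $g_k$ and $\|\nabla f\|\le B_f$) and the second by $B_g^2 L_f\cdot\frac{1}{K}\sum_k\|x_k^t - \bar{x}^t\|$, where the key step is $\|\bar{y}^t - g(\bar{x}^t)\| = \big\|\tfrac{1}{K}\sum_k(g_k(x_k^t) - g_k(\bar{x}^t))\big\| \le B_g\cdot\tfrac{1}{K}\sum_k\|x_k^t - \bar{x}^t\|$ combined with $L_f$-smoothness of $f$. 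This gives the clean bound $\|\bar{G}^t - \nabla\Phi(\bar{x}^t)\| \le L_\Phi\cdot\frac{1}{K}\sum_k\|x_k^t - \bar{x}^t\|$. This is exactly where per-iteration sharing pays off: because $\bar{y}^t$ aggregates the \emph{current} local embeddings, its error is governed by drift alone and vanishes with the step size, in sharp contrast to the intermittent scheme of Theorem \ref{Thm: FedAvg_No}, where the inner-function error stays of order $\Delta_g$ and blocks convergence.

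Next I would establish the drift bound. Since $\|\nabla g_k(x_k^t)\nabla f(\bar{y}^t)\| \le B_g B_f$, and all clients coincide at the last communication round, the recursion $\|x_k^{t+1}-\bar{x}^{t+1}\| \le \|x_k^t-\bar{x}^t\| + 2\eta B_f B_g$ yields $\frac{1}{K}\sum_k\|x_k^t - \bar{x}^t\| \le 2\eta(I-1)B_f B_g$, hence $\|\bar{G}^t - \nabla\Phi(\bar{x}^t)\|^2 = \mathcal{O}(\eta^2(I-1)^2)$. Plugging into the descent lemma, controlling the cross term $\langle\nabla\Phi(\bar{x}^t), \bar{G}^t - \nabla\Phi(\bar{x}^t)\rangle$ with Young's inequality and using $\|\bar{G}^t\|\le B_f B_g$, telescoping over $t=0,\dots,T-1$ gives
\begin{align*}
\frac{1}{T}\sum_{t=0}^{T-1}\|\nabla\Phi(\bar{x}^t)\|^2 \le \frac{2\big(\Phi(\bar{x}^0)-\Phi(x^\ast)\big)}{\eta T} + \mathcal{O}(\eta) + \mathcal{O}\big(\eta^2(I-1)^2\big).
\end{align*}
Choosing $\eta = \Theta(1/\sqrt{T})$ makes the first two terms $\mathcal{O}(1/\sqrt{T})$, while the third is $\mathcal{O}((I-1)^2/T)$; this vanishes precisely when $I \le \mathcal{O}(T^{1/4})$, so the best iterate converges to a stationary point of $\Phi$, proving the claim.

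I would flag the drift/bias coupling as the main obstacle: the bias in $\bar{G}^t$ depends on the drift, while the drift is itself generated by the (biased) local updates, so the two estimates must be shown to be self-consistent. Here this is clean because the local update directions are uniformly bounded by $B_f B_g$, which decouples the two bounds; in the general stochastic setting of Algorithm \ref{Algo: FL} this same coupling is precisely what necessitates the momentum-based estimator for $y_k^t$.
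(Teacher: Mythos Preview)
Your argument is correct. The paper, however, does not give a self-contained proof here: it simply observes that the modified Algorithm~\ref{Algo: FedAvg} is the deterministic, $h\equiv 0$ special case of Algorithm~\ref{Algo: FL}, and invokes Theorem~\ref{Thm: FL} (with $\sigma_h=\sigma_g=0$, so the momentum estimator~\eqref{Eq: Update_Y_FL} degenerates to $\bar y^t=\tfrac{1}{K}\sum_k g_k(x_k^t)$). Your route is genuinely more elementary: you exploit directly that in the deterministic regime per-iteration sharing makes $\bar y^t$ \emph{exact}, so no potential function of the form $\Phi(\bar x^t)+\|\bar y^t-\tfrac{1}{K}\sum_k g_k(x_k^t)\|^2$ is needed, and the uniform bound $\|\nabla g_k\nabla f\|\le B_fB_g$ lets you decouple the drift bound from the bias bound, avoiding the recursive self-consistency argument of Lemma~\ref{Lem: Client_Drift} and Theorem~\ref{Thm: Potential_Telescope}. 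The payoff is a transparent one-page proof that isolates exactly why per-iteration sharing of the embedding fixes the failure mode of Theorem~\ref{Thm: FedAvg_No}. The paper's route, by contrast, gets the result for free once the full stochastic machinery is in place, and makes explicit that the same $I\le\mathcal{O}(T^{1/4})$ threshold governs both the deterministic toy model and the general \aname\ algorithm.
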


\begin{proof}
     Theorem \ref{Thm: FedAvg_Yes_App} is a direct consequence of Theorem \ref{Thm: FL}. Therefore, we next prove the main result of the paper in Theorem \ref{Thm: FL}.
\end{proof}

\section{Proof of Theorem \ref{Thm: FL}}
\label{App: FL}
For the purpose of this proof, we define the filtration $\mathcal{F}^t$ as the sigma-algebra generated by the iterates $x_k^1, x_k^1, \ldots, x_k^t$ as
\begin{align*}
   \mathcal{F}^t \coloneqq \sigma(x_k^1, x_k^1, \ldots, x_k^t,~ \text{for all}~k \in [K]). 
\end{align*}
Moreover, we define the following. Assuming the total training rounds,
$T -1$, to be a multiple of $I$, i.e., $T - 1 = S \times I$ for some $S \in \mathbb{N}$, we define $t_s \coloneqq s \times I$ with $s \in \{ 0, 1, \ldots, S \}$ as the training rounds where the potentially high-dimensional model parameters, $x_k^t$, are shared among the clients. 
Next, we state Theorem \ref{Thm: FL} again and present the detailed proof of the result. 

\begin{theorem}
\label{Thm: FL_App}
  Under Assumptions \ref{Ass: Lip}, \ref{Ass: BoundedVar}, and \ref{Ass: BoundedHetero} and with the choice of step-size $\eta^t = \eta = \sqrt{\frac{|b|K}{T}}$ for all $t \in \{0, 1, \ldots, T - 1 \}$. Moreover, choosing the momentum parameter $\beta^t = \beta = c_\beta \eta $ where $c_\beta = 4 B_g^4 L_f^2$. Then for

\begin{align*}
T \geq T_{\text{th}} \coloneqq & \max \bigg\{ \frac{4 (L_\Phi |b| K + 8 B_g^2)^2}{|b|K} , ~\frac{B_g^4(96 L_h^2 + 96 B_f^2 L_g^2)^2}{|b|K (L_h^2 + 2 B_f^2 L_g^2 + 4 B_g^4L_f^2)^2}, \\
& \qquad \qquad \qquad \qquad \qquad \qquad \qquad \qquad \qquad \big( 216 L_h^2 + 216 B_f^2 L_g^2 \big) I^2 |b| K \bigg\}
\end{align*}
The iterates generated by Algorithm \ref{Algo: FL} satisfy
    \begin{align*}
      \Ebb \big\|\nabla \Phi(\bar{x}^{a(T)}) \big\|^2   & \leq \frac{2 \big[\Phi(\bar{x}^0) - \Phi(x^\ast) + \big\| \bar{y}^{0} - g(\bar{x}^0) \big\|^2 \big]}{\sqrt{|b| K T}} +  \frac{K (I - 1)^2}{T}  \Big[ {2 \bar{L}_{f,g}} \sigma_h^2 + {2 B_f^2 \bar{L}_{f,g}}  \sigma_g^2 \Big] \nonumber\\
&   \qquad   \qquad        +  \frac{1}{\sqrt{|b| K T}} \bigg[ \big( {4 L_{\Phi} + 8B_g^2} \big)   \sigma_h^2 +  \big({4 L_\Phi B_f^2 + 4c_\beta^2+ 8 B_f^2 B_g^2} \big)   \sigma_g^2 \bigg] \nonumber \\
 &     +  \frac{|b|K (I - 1)^2}{T} \Big[  {6\bar{L}_{f,g}}  \Delta_h^2 + {6B_f^2 \bar{L}_{f,g}}   \Delta_g^2 \Big]  +  \frac{1}{\sqrt{|b| K T}} \bigg[  {96 B_g^2} ~ \Delta_h^2 +  {96 B_f^2 B_g^2}  ~\Delta_g^2 \bigg].
    \end{align*}
\end{theorem}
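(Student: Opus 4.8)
The plan is to run a potential-function descent analysis in which the Lyapunov function couples the objective value with the squared error of the momentum-based embedding estimator, i.e. $V^t \coloneqq \Phi(\bar{x}^t) + c\,\|\bar{y}^t - g(\bar{x}^t)\|^2$ for a suitable constant $c$; this coupling is dictated by the bound itself, whose initialization term already contains both $\Phi(\bar{x}^0)-\Phi(x^\ast)$ and $\|\bar{y}^0 - g(\bar{x}^0)\|^2$. First I would invoke the $L_\Phi$-smoothness of $\Phi$ (Lemma \ref{Lem: Lip}) to obtain a one-step descent inequality on the averaged iterate $\bar{x}^t = \frac{1}{K}\sum_k x_k^t$, whose update reads $\bar{x}^{t+1} = \bar{x}^t - \eta\cdot\frac{1}{K}\sum_k \nabla\Phi_k(x_k^t;\bar{\xi}_k^t)$. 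The crux is to control the gap between this averaged local stochastic gradient and the true gradient $\nabla\Phi(\bar{x}^t) = \nabla h(\bar{x}^t) + \nabla g(\bar{x}^t)\nabla f(g(\bar{x}^t))$.

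Next I would decompose that gap into four sources: (i) zero-mean stochastic noise, which under Assumption \ref{Ass: BoundedVar} contributes the $\sigma_h^2,\sigma_g^2$ terms; (ii) client drift $x_k^t-\bar{x}^t$, passed through the $L_h,L_g$ smoothness; (iii) heterogeneity $\nabla h_k-\nabla h$ and $\nabla g_k-\nabla g$, bounded by $\Delta_h^2,\Delta_g^2$ via Assumption \ref{Ass: BoundedHetero}; and (iv) the compositional bias $\nabla f(\bar{y}^t)-\nabla f(g(\bar{x}^t))$, which by the $L_f$-Lipschitzness of $\nabla f$ and the $B_g$ bound on the Jacobian of $g$ is controlled by $B_g L_f\|\bar{y}^t - g(\bar{x}^t)\|$. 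Source (iv) is precisely the mechanism that makes \aname~succeed where vanilla FedAvg fails, so the analysis hinges on showing the embedding error decays.

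I would then establish two auxiliary recursions. For the embedding error, using the hybrid SARAH/SGD form of (\ref{Eq: Update_Y_FL}) together with the fact that $y_k^t=\bar{y}^t$ is re-synchronized every round, I would derive $\mathbb{E}\|\bar{y}^t - g(\bar{x}^t)\|^2 \le (1-\beta)\,\mathbb{E}\|\bar{y}^{t-1}-g(\bar{x}^{t-1})\|^2 + \mathcal{O}(\beta^2\sigma_g^2/(|b|K)) + \mathcal{O}(\|\bar{x}^t-\bar{x}^{t-1}\|^2/\beta) + (\text{drift})$, where the movement term is of order $\eta^2$ and is reabsorbed by the descent; the $(1-\beta)$ contraction with $\beta=c_\beta\eta$ is what drives the bias asymptotically to zero. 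For the client drift, since the high-dimensional model is synchronized only every $I$ steps, I would unroll $\frac{1}{K}\sum_k\mathbb{E}\|x_k^t-\bar{x}^t\|^2$ over the at most $I-1$ local steps since the last sync, producing the characteristic $(I-1)^2$ factors multiplying the noise terms $\sigma_h^2,\sigma_g^2$ and the heterogeneity terms $\Delta_h^2,\Delta_g^2$.

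Finally I would assemble a single inequality $V^{t+1}\le V^t - \frac{\eta}{2}\|\nabla\Phi(\bar{x}^t)\|^2 + (\text{error terms})$, where the contraction in the embedding recursion offsets the compositional-bias term in the descent, telescope over $t=0,\dots,T-1$, divide by $\eta T$, and use that $a(T)$ is uniform to turn the time-average into $\mathbb{E}\|\nabla\Phi(\bar{x}^{a(T)})\|^2$; substituting $\eta=\sqrt{|b|K/T}$ and $\beta=c_\beta\eta$ then yields the stated bound, with the condition $T\ge T_{\text{th}}$ ensuring $\eta$ is small enough that $1-\beta\in(0,1)$ and the coefficient of $\|\nabla\Phi\|^2$ stays positive. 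The main obstacle is the tight coupling in the previous paragraph: the embedding error feeds the gradient bias while the iterate movement, itself governed by the gradient, feeds the embedding error, so the two recursions must be balanced simultaneously through the joint choice $\beta=c_\beta\eta$, and the drift terms must be kept at the right order so that the later choice $I=\mathcal{O}(T^{1/4}/(|b|K)^{3/4})$ collapses the $(I-1)^2$ contributions into the dominant $1/\sqrt{|b|KT}$ rate.
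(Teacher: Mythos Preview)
Your proposal is correct and takes essentially the same approach as the paper: smoothness-based descent on $\bar{x}^t$, the four-way gradient-gap decomposition you list, a client-drift bound producing the $(I-1)^2$ factors, an embedding-error recursion with $(1-\beta)$ contraction, and a coupled potential telescoped with $\eta=\sqrt{|b|K/T}$, $\beta=c_\beta\eta$. The one technical difference is that the paper's Lyapunov term is $\bigl\|\bar{y}^t-\tfrac{1}{K}\sum_k g_k(x_k^t)\bigr\|^2$ (inner function at the \emph{local} iterates) rather than your $\|\bar{y}^t-g(\bar{x}^t)\|^2$; because the SARAH correction in \eqref{Eq: Update_Y_FL} is conditionally unbiased for the former quantity, the recursion splits exactly with prefactor $(1-\beta)^2$ and the movement term enters at order $\eta^2$ without the $1/\beta$ you anticipate, which simplifies the constant-matching needed for the stated bound.
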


 \begin{cor}
 Under the same setting as Theorem \ref{Thm: FL}, for the choice of local updates $I = T^{1/4} / (|b| K)^{3/4}$, the iterates generated by Algorithm \ref{Algo: FL} satisfy
  \begin{align}
\Ebb \big\|\nabla\Phi(\bar{x}^{a(T)}) \big\|^2   & \leq \frac{2 \big[\Phi(\bar{x}^0) - \Phi(x^\ast) + \big\| \bar{y}^{0} - g(\bar{x}^0) \big\|^2 \big]}{\sqrt{|b| K T}} +   \frac{C_{\sigma_h}}{\sqrt{|b| K T}} \sigma_h^2  +  \frac{C_{\sigma_g}}{\sqrt{|b| K T}} \sigma_g^2 \nonumber \\
& \qquad \qquad \qquad \qquad \qquad \qquad  \qquad + \frac{C_{\Delta_h}}{\sqrt{|b| K T}} \Delta_h^2  +  \frac{C_{\Delta_g}}{\sqrt{|b| K T}} \Delta_g^2 .    \end{align}
where the constants $C_{\sigma_h}$, $C_{\sigma_g}$, $C_{\Delta_h}$, and $C_{\Delta_g}$ are constants dependent on $L_g$, $L_h$, $L_f$, $B_g$, and $B_f$.
 \end{cor}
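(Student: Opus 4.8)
The plan is to track a Lyapunov function that augments the objective with the inner-function estimation error, namely $V^t := \Phi(\bar x^t) + \|e^t\|^2$ where $e^t := \bar y^t - g(\bar x^t)$; the appearance of $\Phi(\bar x^0)-\Phi(x^\ast)+\|\bar y^0 - g(\bar x^0)\|^2$ in the stated bound already signals that this is the correct potential with unit weight on the error term. First I would apply the $L_\Phi$-smoothness of $\Phi$ from Lemma~\ref{Lem: Lip} to the averaged iterate $\bar x^{t+1} = \bar x^t - \eta\,\bar d^t$ with $\bar d^t := \tfrac1K\sum_k \nabla\Phi_k(x_k^t;\bar\xi_k^t)$, and split the resulting inner product via $-\langle a,b\rangle = -\tfrac12\|a\|^2-\tfrac12\|b\|^2+\tfrac12\|a-b\|^2$ to produce a descent term $-\tfrac\eta2\Ebb\|\nabla\Phi(\bar x^t)\|^2$, a squared-bias term $\tfrac\eta2\|\Ebb[\bar d^t\mid\mathcal F^t]-\nabla\Phi(\bar x^t)\|^2$, and a second-moment term $\tfrac{L_\Phi\eta^2}{2}\Ebb\|\bar d^t\|^2$.

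The bias is the heart of the compositional difficulty. Using the chain-rule form of $\nabla\Phi_k$ and Assumption~\ref{Ass: Lip}, I would decompose $\Ebb[\bar d^t\mid\mathcal F^t]-\nabla\Phi(\bar x^t)$ into (i) a client-drift part, controlled by $L_h$ and $B_f L_g$ acting on $D^t := \tfrac1K\sum_k\Ebb\|x_k^t-\bar x^t\|^2$, and (ii) an inner-estimation part $(\tfrac1K\sum_k\nabla g_k(\bar x^t))[\nabla f(\bar y^t)-\nabla f(g(\bar x^t))]$, bounded through Lipschitzness of $\nabla f$ and the constant $B_g$ by a multiple of $B_g^2 L_f^2\|e^t\|^2$. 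Thus the descent picks up controllable multiples of $D^t$ and $\|e^t\|^2$ that must be absorbed by auxiliary recursions.

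Next I would establish those two recursions. For the error, averaging the hybrid SARAH/SGD update (\ref{Eq: Update_Y_FL}) over clients (all sharing the common $\bar y^{t-1}$) yields a STORM-type recursion of the form $\Ebb\|e^{t}\|^2 \le (1-\beta)\Ebb\|e^{t-1}\|^2 + O(\beta^2\sigma_g^2/(K|b|)) + O(\beta^{-1}B_g^2\eta^2\Ebb\|\bar d^{t-1}\|^2) + (\text{drift})$, where the contraction factor $\beta$ is precisely what cancels the $\tfrac\eta2 B_g^2L_f^2\|e^t\|^2$ bias once $\beta=c_\beta\eta$ with $c_\beta=4B_g^4L_f^2$ is taken large enough to dominate it with slack. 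For the drift, starting from each synchronization round $t_s$ where $x_k^{t_s}=\bar x^{t_s}$ and unrolling at most $I-1$ local SGD steps, I would invoke Assumptions~\ref{Ass: BoundedVar} and~\ref{Ass: BoundedHetero} together with Lemmas~\ref{Lem: Sum_vectors} and~\ref{Lem: Emp_Var} to obtain $D^t \lesssim \eta^2(I-1)^2(\Delta_h^2+B_f^2\Delta_g^2+\sigma_h^2/|b|+B_f^2\sigma_g^2/|b|) + \eta^2(I-1)^2\Ebb\|\nabla\Phi(\bar x^t)\|^2$, which is where the $(I-1)^2$ factor and the heterogeneity constants enter.

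Finally I would assemble $\Ebb[V^{t+1}]\le \Ebb[V^t] - \tfrac\eta2\Ebb\|\nabla\Phi(\bar x^t)\|^2 + (\text{error})$: the error-recursion contraction cancels the bias $\|e^t\|^2$ term, the drift bound turns the remaining $D^t$ contributions into $\eta^3(I-1)^2$-scaled heterogeneity/variance terms plus a small $\eta^3(I-1)^2\Ebb\|\nabla\Phi\|^2$ piece that is reabsorbed into the descent (valid once $T\ge T_{\text{th}}$ forces $\eta$ small enough to keep the net coefficient of $\|\nabla\Phi\|^2$ at least $\eta/2$), while $\tfrac{L_\Phi\eta^2}{2}\Ebb\|\bar d^t\|^2$ yields the $L_\Phi\eta^2$ variance and heterogeneity contributions. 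Telescoping over $t=0,\dots,T-1$, lower-bounding $V^T\ge\Phi(x^\ast)$, dividing by $\eta T/2$, and substituting $\eta=\sqrt{|b|K/T}$ converts $1/(\eta T)$ into $1/\sqrt{|b|KT}$ and $\eta^2(I-1)^2$ into $|b|K(I-1)^2/T$, matching each term in the claim; uniform sampling of $a(T)$ then identifies $\tfrac1T\sum_t\Ebb\|\nabla\Phi(\bar x^t)\|^2$ with $\Ebb\|\nabla\Phi(\bar x^{a(T)})\|^2$. The main obstacle I anticipate is closing the error and drift recursions simultaneously: $\|e^t\|^2$ feeds the gradient bias, while $\|\bar d^t\|^2$ (which itself contains $\|e^t\|^2$ and $D^t$) feeds back into the error recursion through $\|g(\bar x^t)-g(\bar x^{t-1})\|^2$, so the constants—especially $c_\beta=4B_g^4L_f^2$ and the threshold $T_{\text{th}}$—must be chosen exactly so the coupled inequalities collapse into a single clean descent.
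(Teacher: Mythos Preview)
Your proposal is correct and tracks the paper's proof closely: the same Lyapunov structure, the same $L_\Phi$-smoothness descent with the polarization identity, the same bias decomposition into client drift plus inner-function estimation error, the STORM-type recursion for the $\bar y$ error with $\beta=c_\beta\eta$ chosen to cancel the bias contribution, the $(I-1)^2$-scaled drift bound, telescoping, and the final substitution $\eta=\sqrt{|b|K/T}$, $I=T^{1/4}/(|b|K)^{3/4}$.

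Two small deviations from the paper are worth flagging. First, the paper takes the error term to be $\bar y^t-\tfrac1K\sum_k g_k(x_k^t)$ rather than your $e^t=\bar y^t-g(\bar x^t)$; because the SARAH correction in (\ref{Eq: Update_Y_FL}) is centered at the \emph{local} iterates, the paper's choice makes the cross term in the error recursion exactly mean-zero, so no Young's inequality is needed and no $\beta^{-1}$ factor appears (your route still works since $\beta^{-1}\eta^2=\eta/c_\beta$, but it costs larger constants). Second, the paper's client-drift lemma does not produce a $\|\nabla\Phi(\bar x^t)\|^2$ term to be reabsorbed; instead it bounds $\tfrac1K\sum_k\|x_k^t-\bar x^t\|^2$ recursively in terms of earlier drifts within the round (via Lemma~\ref{Lem: Emp_Var}), then sums over the round and solves the resulting self-referential inequality under the step-size constraint $\eta\le 1/(3I\sqrt{24L_h^2+24B_f^2L_g^2})$. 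Both variants land on the same final bound.
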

We prove the Theorem in multiple steps with the help of several intermediate Lemmas. 

\begin{lem}[{\bf Descent in Function Value}] 
\label{Lem: Descent_Phi}
Under Assumptions \ref{Ass: Lip}-\ref{Ass: BoundedHetero}, the iterates generated by Algorithm \ref{Algo: FL} satisfy
    \begin{align*}
   &  \Ebb \big[  \Phi(\bar{x}^{t + 1}) - \Phi(\bar{x}^{t })  \big]  \leq  - \frac{\eta^t}{2} \Ebb \big\|\nabla \Phi(\bar{x}^{t }) \big\|^2  - \bigg( \frac{\eta^t}{2} - (\eta^t)^2 L_\Phi \bigg) \Ebb \bigg\| \frac{1}{K} \sum_{k = 1}^K \mathbb{E} \big[ \nabla \Phi_k(x_k^t ; \bar{\xi}_k^t) \big| \mathcal{F}^t \big] 
 \bigg\|^2 \nonumber \\
 &   \qquad   +  \eta^t   \big(  L_h^2 + 2B_f^2 L_g^2 + 4 B_g^4 L_F^2 \big)   \frac{1}{K} \sum_{k = 1}^K   \Ebb  \|     x_k^t -   \bar{x}^t \|^2    + 4 B_g^4 L_f^2  \eta^t  ~  \Ebb \bigg\| \bar{y}^t - \frac{1}{K} \sum_{k = 1}^K g_k(x_k^t) \bigg\|^2  \nonumber \\
 & \qquad \qquad \qquad \qquad \qquad \qquad \qquad \qquad \qquad +   \frac{2  (\eta^t)^2 L_{\Phi}}{K |b_{h}|} \sigma_h^2 + \frac{2  (\eta^t)^2 L_{\Phi}  B_f^2}{K |b_{g}|} \sigma_g^2.
 \end{align*}
 for all $t \in \{ 0, 1, \ldots, T - 1\}$.
\end{lem}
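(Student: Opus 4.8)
The plan is to run a standard smoothness-based descent argument for the averaged iterate $\bar x^t = \frac1K\sum_{k=1}^K x_k^t$, while routing the gradient-bias term into exactly the client-drift and inner-estimation-error quantities that the lemma isolates. First I would note that, because all clients share the step size $\eta^t$, the local updates of Algorithm \ref{Algo: FL} average cleanly: writing $\bar d^t \coloneqq \frac1K\sum_{k=1}^K \nabla\Phi_k(x_k^t;\bar{\xi}_k^t)$, we have $\bar x^{t+1} = \bar x^t - \eta^t \bar d^t$ (the intermittent model-sharing step leaves the average unchanged). Applying the $L_\Phi$-smoothness of $\Phi$ from Lemma \ref{Lem: Lip} gives $\Phi(\bar x^{t+1}) \le \Phi(\bar x^t) - \eta^t\langle\nabla\Phi(\bar x^t), \bar d^t\rangle + \frac{L_\Phi(\eta^t)^2}{2}\|\bar d^t\|^2$. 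Taking $\Ebb[\cdot\mid\mathcal F^t]$ and using the polarization identity $-\langle a,b\rangle = \frac12(\|a-b\|^2 - \|a\|^2 - \|b\|^2)$ with $a=\nabla\Phi(\bar x^t)$ and $b=\Ebb[\bar d^t\mid\mathcal F^t]$ splits the first-order term into $-\frac{\eta^t}2\|\nabla\Phi(\bar x^t)\|^2$, a negative multiple of $\|\Ebb[\bar d^t\mid\mathcal F^t]\|^2$, and the crucial gradient-bias term $\frac{\eta^t}2\|\nabla\Phi(\bar x^t) - \Ebb[\bar d^t\mid\mathcal F^t]\|^2$.

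For the second-moment term I would use the exact bias-variance decomposition $\Ebb[\|\bar d^t\|^2\mid\mathcal F^t] = \|\Ebb[\bar d^t\mid\mathcal F^t]\|^2 + \Ebb[\|\bar d^t - \Ebb[\bar d^t\mid\mathcal F^t]\|^2\mid\mathcal F^t]$; the mean part merges with the negative $\|\Ebb[\bar d^t\mid\mathcal F^t]\|^2$ contribution from polarization to form the coefficient $-(\frac{\eta^t}2 - (\eta^t)^2 L_\Phi)$ in the statement. Because the clients draw their batches independently, the variance of the average carries a $1/K$ factor; splitting each client's estimator (\ref{Eq: SG_FL}) into its $h_k$-part and its $g_k\!\cdot\!\nabla f(\bar y^t)$-part via Lemma \ref{Lem: Sum_vectors}, bounding $\|\nabla f\|\le B_f$, and invoking the bounded-variance Assumption \ref{Ass: BoundedVar} (with the $1/|b_h|$ and $1/|b_g|$ gains from batch averaging) yields the two noise terms $\frac{2(\eta^t)^2 L_\Phi}{K|b_h|}\sigma_h^2$ and $\frac{2(\eta^t)^2 L_\Phi B_f^2}{K|b_g|}\sigma_g^2$.

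The heart of the proof is bounding the gradient-bias term. Under the unbiasedness part of Assumption \ref{Ass: BoundedVar}, in particular the product form $\Ebb[\nabla g_k(x;\zeta_k)\nabla f(y)] = \nabla g_k(x)\nabla f(y)$ with $\bar y^t$ treated as fixed, the conditional mean is $\Ebb[\bar d^t\mid\mathcal F^t] = \frac1K\sum_k[\nabla h_k(x_k^t) + \nabla g_k(x_k^t)\nabla f(\bar y^t)]$, whereas $\nabla\Phi(\bar x^t) = \frac1K\sum_k\nabla h_k(\bar x^t) + \big(\frac1K\sum_k\nabla g_k(\bar x^t)\big)\nabla f(g(\bar x^t))$. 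I would bound the difference by repeated add-and-subtract: (i) $\nabla h_k(\bar x^t) - \nabla h_k(x_k^t)$ is controlled by $L_h\|x_k^t - \bar x^t\|$; (ii) inserting $\nabla g_k(x_k^t)\nabla f(g(\bar x^t))$ separates a $\nabla g$-drift piece bounded by $B_f L_g\|x_k^t-\bar x^t\|$ from an $f$-gradient piece $\nabla g_k(x_k^t)\big(\nabla f(g(\bar x^t)) - \nabla f(\bar y^t)\big)$, which $L_f$-smoothness of $f$ and $\|\nabla g_k\|\le B_g$ reduce to $B_g L_f\|g(\bar x^t) - \bar y^t\|$; and (iii) writing $g(\bar x^t) - \bar y^t = \frac1K\sum_k(g_k(\bar x^t) - g_k(x_k^t)) + \big(\frac1K\sum_k g_k(x_k^t) - \bar y^t\big)$ peels off a further drift term (bounded by $B_g\|x_k^t-\bar x^t\|$) from the inner-estimation error $\|\bar y^t - \frac1K\sum_k g_k(x_k^t)\|$ that the lemma keeps standalone. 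Grouping all drift contributions with Lemma \ref{Lem: Sum_vectors} yields (up to the explicit constants) the coefficient $L_h^2 + 2B_f^2 L_g^2 + 4B_g^4 L_f^2$ on $\frac1K\sum_k\|x_k^t-\bar x^t\|^2$ and the coefficient $4B_g^4 L_f^2$ on the inner-estimation error; absorbing $\frac{\eta^t}2$ and the splitting constants produces the linear-in-$\eta^t$ coefficients in the statement.

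I expect step (iii) to be the main obstacle, since it is precisely where the inner-function estimation error is forced to appear as a separate quantity rather than being lumped with client drift; this is what the hybrid momentum estimator (\ref{Eq: Update_Y_FL}) is later built to control in a companion lemma. The delicate technical point is justifying $\Ebb[\nabla g_k(x_k^t;\zeta_k^t)\nabla f(\bar y^t)\mid\mathcal F^t] = \nabla g_k(x_k^t)\nabla f(\bar y^t)$ cleanly, given that $\bar y^t$ is formed from samples correlated with the gradient batch; this hinges on the conditional-unbiasedness convention in Assumption \ref{Ass: BoundedVar} and the sampling order of the algorithm. Everything else — polarization, cross-client independence, and the repeated triangle and Young inequalities together with Lemma \ref{Lem: Sum_vectors} — is routine bookkeeping once this decomposition is in place.
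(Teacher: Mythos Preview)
Your proposal is correct and follows essentially the same route as the paper's proof: smoothness descent on $\bar x^t$, polarization on the inner product, and the same add-and-subtract decomposition of the bias term (your steps (i)--(iii) coincide with the paper's handling of its Term~I and Term~III). The only cosmetic difference is that the paper bounds $\frac{(\eta^t)^2 L_\Phi}{2}\Ebb\|\bar d^t\|^2$ via Young's inequality (Lemma~\ref{Lem: Sum_vectors}) rather than the exact bias--variance identity you invoke; this is why the paper picks up the full $(\eta^t)^2 L_\Phi$ coefficient on both the mean-squared and the variance pieces, matching the stated constants, whereas your sharper identity would produce $(\eta^t)^2 L_\Phi/2$ there and hence a tighter inequality that can be relaxed to the stated one.
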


\begin{proof}
Using the fact that the loss function $\Phi(x)$ is $L_{\Phi}$-Lipschitz smooth, we get

{\allowdisplaybreaks
\begin{align}
& \Ebb \big[  \Phi(\bar{x}^{t + 1}) - \Phi(\bar{x}^{t })  \big] \nonumber \\
& \leq \Ebb \Big[\langle \nabla \Phi(\bar{x}^{t }), \bar{x}^{t +1 } - \bar{x}^{t } 
 \rangle  + \frac{L_{\Phi}}{2} \| \bar{x}^{t +1 } - \bar{x}^{t } \|^2 \Big] \nonumber \\
 & \overset{(a)}{\leq} \Ebb \bigg[ -\eta^t \bigg\langle \nabla \Phi(\bar{x}^{t }), \frac{1}{K} \sum_{k = 1}^K \nabla \Phi_k(x_k^t ; \bar{\xi}_k^t)
 \bigg\rangle  + \frac{(\eta^t)^2 L_{\Phi}}{2} \bigg\| \frac{1}{K} \sum_{k = 1}^K \nabla \Phi_k(x_k^t ; \bar{\xi}_k^t) \bigg\|^2 \bigg]\nonumber \\
 & \overset{(b)}{\leq} \Ebb \bigg[ -\eta^t \bigg\langle \nabla \Phi(\bar{x}^{t }), \frac{1}{K} \sum_{k = 1}^K \mathbb{E} \big[ \nabla \Phi_k(x_k^t ; \bar{\xi}_k^t) \big| \mathcal{F}^t \big]
 \bigg\rangle  + \frac{(\eta^t)^2 L_{\Phi}}{2} \bigg\| \frac{1}{K} \sum_{k = 1}^K \nabla \Phi_k(x_k^t ; \bar{\xi}_k^t) \bigg\|^2 \bigg] \nonumber \\ \nonumber\\ \nonumber\\
 & \overset{(c)}{\leq}  - \frac{\eta^t}{2} \Ebb \big\|\nabla \Phi(\bar{x}^{t }) \big\|^2  - \bigg( \frac{\eta^t}{2} - (\eta^t)^2 L_\Phi \bigg) \Ebb \bigg\| \frac{1}{K} \sum_{k = 1}^K \mathbb{E} \big[ \nabla \Phi_k(x_k^t ; \bar{\xi}_k^t) \big| \mathcal{F}^t \big] 
 \bigg\|^2 \nonumber \\
 & \qquad \qquad \qquad \qquad \qquad   + \frac{\eta^t}{2} \underbrace{\Ebb \bigg\| \nabla \Phi(\bar{x}^{t }) - \frac{1}{K} \sum_{k = 1}^K \mathbb{E} \big[ \nabla \Phi_k(x_k^t ; \bar{\xi}_k^t) \big| \mathcal{F}^t \big]  \bigg\|^2}_{\text{Term I}} \label{Eq: Function_Descent_FL}   \\
 & \qquad \qquad \qquad \qquad \qquad \qquad 
    +  (\eta^t)^2 L_{\Phi} \underbrace{ \Ebb \bigg\| \frac{1}{K} \sum_{k = 1}^K \nabla \Phi_k(x_k^t ; \bar{\xi}_k^t) -\frac{1}{K} \sum_{k = 1}^K \mathbb{E} \big[ \nabla \Phi_k(x_k^t ; \bar{\xi}_k^t) \big| \mathcal{F}^t \big] \bigg\|^2}_{\text{Term II}}, \nonumber
\end{align}}where $(a)$ follows from the update step in Algorithm \ref{Algo: FL}; $(b)$ results from moving the conditional expectation w.r.t. the filtration $\mathcal{F}^t$ inside the inner-product; finally, $(c)$ uses the equality $2 \langle a, b \rangle  = \|a\|^2 + \| b \|^2 - \|a - b\|^2$ for $a, b \in \R^d$ and Lemma \ref{Lem: Sum_vectors} to split the last term. 

Next, we consider Terms I and II separately. First, note that from the definition of $\nabla \Phi_k(x^t_k; \bar{\xi}_k^t)$ for all $k \in [K]$, we have
\begin{align}
\label{Eq: Grad_Local_Exp}
   \mathbb{E} \big[ \nabla \Phi_k(x^t_k;\bar{\xi}_k^t) \big| \mathcal{F}^t \big] & = \mathbb{E} \bigg[ \frac{1}{|b^t_{h_k}|} \sum_{i \in b^t_{h_k}} \nabla h_k(x^t_k; \xi_{k,i}^{t}) \nonumber  +   \frac{1}{|b^t_{g_k}|} \sum_{j \in b^t_{g_k}} \nabla g_k(x^t_k;\zeta_{k,j}^{t} ) \nabla f(\bar{y}^t) \bigg| \mathcal{F}^t \bigg] \nonumber \\
   & \overset{(a)}{=} \nabla h_k (x^t_k) + \nabla g_k(x_k^t) \nabla f(\bar{y}^t)
\end{align}
where $(a)$ follows from Assumption \ref{Ass: BoundedVar}. 
Moreover, from the definition of $ \Phi(\bar{x}^t)$, we have
\begin{align}
\label{Eq: Grad_Phi}
   \nabla \Phi(\bar{x}^t)  = \frac{1}{K} \sum_{k = 1}^K \Big[ \nabla h_k(\bar{x}^t) +  \nabla g_k(\bar{x}^t) \nabla f(g(\bar{x}^t)) \Big],
\end{align}
where $g(\bar{x}^t) = \frac{1}{K} \sum_{k = 1}^K g_k(\bar{x}^t)$. Next, utilizing the expressions obtained in (\ref{Eq: Grad_Local_Exp}) and (\ref{Eq: Grad_Phi}) we bound Term I as
{
\allowdisplaybreaks
\begin{align*}
    \text{Term I} & \coloneqq \Ebb \bigg\| \nabla \Phi(\bar{x}^{t }) - \frac{1}{K} \sum_{k = 1}^K \mathbb{E} \big[ \nabla \Phi_k(x_k^t ; \bar{\xi}_k^t) \big| \mathcal{F}^t \big]  \bigg\|^2 \\
    & = \Ebb \bigg\|  \frac{1}{K} \sum_{k=1}^K \Big[ 
    \nabla h_k(\bar{x}^t) +  \nabla g_k(\bar{x}^t) \nabla f(g(\bar{x}^t)) -
    \big[ \nabla h_k (x^t_k) + \nabla g_k(x_k^t) \nabla f(\bar{y}^t) \big]
    \Big]  \bigg\|^2 \\
   &  \overset{(a)}{\leq} \frac{2}{K}  \sum_{k = 1}^K \Big[ \Ebb \| \nabla h_k(x_k^t) - \nabla h_k(\bar{x}^t) \|^2 + \|\nabla g_k(x_k^t) \nabla f(\bar{y}^t) - \nabla g_k(\bar{x}^t) \nabla f(g(\bar{x}^t))  \|^2 \Big] \\
   &  \overset{(b)}{\leq} \frac{2 L_h^2}{K}  \sum_{k = 1}^K   \Ebb  \|     x_k^t -   \bar{x}^t \|^2 + \frac{4}{K}  \sum_{k = 1}^K \Ebb \big\|\nabla g_k(x_k^t) \big[ \nabla f(\bar{y}^t) - \nabla f(g(\bar{x}^t))  \big] \big\|^2 \\
  & \qquad \qquad \qquad \qquad \qquad \qquad \qquad + \frac{4}{K}  \sum_{k = 1}^K \Ebb\| \big[  \nabla g_k(x_k^t)   - \nabla g_k(\bar{x}^t) \big] \nabla f(g(\bar{x}^t))    \|^2 \\
  &  \overset{(c)}{\leq} \frac{2 L_h^2}{K}  \sum_{k = 1}^K   \Ebb  \|     x_k^t -   \bar{x}^t \|^2 
    + \frac{4 B_g^2}{K}  \sum_{k = 1}^K \Ebb \big\|  \nabla f(\bar{y}^t) - \nabla f(g(\bar{x}^t))    \big\|^2 \\
    & \qquad \qquad \qquad \qquad \qquad \qquad \qquad \qquad \qquad \qquad + \frac{4 B_f^2 }{K}  \sum_{k = 1}^K \Ebb\|    \nabla g_k(x_k^t)   - \nabla g_k(\bar{x}^t)     \|^2 \\
  &  \overset{(d)}{\leq} \Bigg( \frac{2 L_h^2}{K} 
 + \frac{4B_f^2 L_g^2}{K}\Bigg)  \sum_{k = 1}^K   \Ebb  \|     x_k^t -   \bar{x}^t \|^2 + 4 B_g^2 L_f^2~  \underbrace{ \Ebb \big\|  \bar{y}^t - g(\bar{x}^t)   \big\|^2}_{\text{Term III}} .
\end{align*}}
Next, let us consider Term III above. 
{
\begin{align*}
    \text{Term III} & \coloneqq  \Ebb \big\|  \bar{y}^t - g(\bar{x}^t)   \big\|^2 \\
    & \overset{(a)}{\leq} 2 \Ebb \bigg\|  \bar{y}^t - \frac{1}{K} \sum_{k = 1}^K  g_k(x_k^t)   \bigg\|^2  +  2 \Ebb \bigg\| \frac{1}{K} \sum_{k = 1}^K  g_k(x_k^t) - g(\bar{x}^t)   \bigg\|^2  \\
    & \overset{(b)}{\leq} 2 \Ebb \bigg\|  \bar{y}^t - \frac{1}{K} \sum_{k = 1}^K  g_k(x_k^t)   \bigg\|^2  + \frac{2}{K} \sum_{k = 1}^K  \Ebb \big\|     g_k(x_k^t) - g_k(\bar{x}^t)    \big\|^2\\
    & \overset{(c)}{\leq} 2 \Ebb \bigg\|  \bar{y}^t - \frac{1}{K} \sum_{k = 1}^K  g_k(x_k^t)   \bigg\|^2  + \frac{2 B_g^2}{K} \sum_{k = 1}^K  \Ebb  \|      x_k^t -  \bar{x}^t     \|^2,
\end{align*}}
where $(a)$ follows from the application of Lemma \ref{Lem: Sum_vectors}; $(b)$ results from the definition of $g(x) = \frac{1}{K} \sum_{k = 1}^K g_k(x)$ and the use of Lemma \ref{Lem: Sum_vectors}; finally $(c)$ results from the Lipschitz-ness of $g_k(\cdot)$ for all $k \in [K]$.

Next, we consider Term II below
{
\allowdisplaybreaks
\begin{align*}
  \text{Term II} & \coloneqq \Ebb \bigg\| \frac{1}{K} \sum_{k = 1}^K \nabla \Phi_k(x_k^t ; \bar{\xi}_k^t) -\frac{1}{K} \sum_{k = 1}^K \mathbb{E} \big[ \nabla \Phi_k(x_k^t ; \bar{\xi}_k^t) \big| \mathcal{F}^t \big] \bigg\|^2 \\
  & \overset{(a)}{ = } \frac{1}{K^2} \sum_{k = 1}^K \Ebb \big\|  \nabla \Phi_k(x_k^t ; \bar{\xi}_k^t) - \Ebb \big[ \nabla \Phi_k(x_k^t ; \bar{\xi}_k^t) \big| \mathcal{F}^t \big] \big\|^2 \\
  & \overset{(b)}{ = } \frac{1}{K^2} \sum_{k = 1}^K \Ebb \bigg\|  \frac{1}{|b^t_{h_k}|} \sum_{i \in b^t_{h_k}} \nabla h_k(x^t_k; \xi_{k,i}^{t}) \nonumber  +   \frac{1}{|b^t_{g_k}|} \sum_{j \in b^t_{g_k}} \nabla g_k(x^t_k;\zeta_{k,j}^{t} ) \nabla f(\bar{y}^t) \\
  & \qquad \qquad \qquad \qquad \qquad \qquad \qquad \qquad \qquad \qquad \qquad - \Big[  \nabla h_k (x^t_k) + \nabla g_k(x_k^t) \nabla f(\bar{y}^t) \Big]   \bigg\|^2 \\
  & \overset{(c)}{ = } \frac{2}{K^2} \sum_{k = 1}^K \Ebb \bigg\|  \frac{1}{|b^t_{h_k}|} \sum_{i \in b^t_{h_k}} \nabla h_k(x^t_k; \xi_{k,i}^{t})  -    \nabla h_k (x^t_k)     \bigg\|^2 \\
  & \qquad \qquad \qquad \qquad  + \frac{2}{K^2} \sum_{k = 1}^K \Ebb \bigg\|      \frac{1}{|b^t_{g_k}|} \sum_{j \in b^t_{g_k}} \nabla g_k(x^t_k;\zeta_{k,j}^{t} ) \nabla f(\bar{y}^t) -   \nabla g_k(x_k^t) \nabla f(\bar{y}^t)     \bigg\|^2 \\
  & \overset{(d)}{ \leq } \frac{2  \sigma_h^2}{K |b_{h}|} + \frac{2 \sigma_g^2 B_f^2}{K |b_{g}|},
\end{align*}}where $(a)$ follows from the application of Lemma \ref{Lem: Sum_vectors}; $(b)$ follows from the definition of the stochastic gradient in (\ref{Eq: SG_FL}) and its expectation in (\ref{Eq: Grad_Local_Exp}); $(c)$ again uses Lemma \ref{Lem: Sum_vectors}; Finally, $(d)$ uses Cauchy-Schwartz inequality, Lipschitzness of $f(\bar{y}^t)$ and Assumption \ref{Ass: BoundedVar} and using $|b_{h_k}| = |b_{h}|$ and $|b_{g_k}| = |b_{g}|$ for all $k \in [K]$.

Next, substituting the upper bounds obtained for  Terms I, II, and III into (\ref{Eq: Function_Descent_FL}), we get 
 \begin{align}
    & \Ebb \big[  \Phi(\bar{x}^{t + 1}) - \Phi(\bar{x}^{t })  \big]  \leq  - \frac{\eta^t}{2} \Ebb \big\|\nabla \Phi(\bar{x}^{t }) \big\|^2  - \bigg( \frac{\eta^t}{2} - (\eta^t)^2 L_\Phi \bigg) \Ebb \bigg\| \frac{1}{K} \sum_{k = 1}^K \mathbb{E} \big[ \nabla \Phi_k(x_k^t ; \bar{\xi}_k^t) \big| \mathcal{F}^t \big] 
 \bigg\|^2 \nonumber \\
 & \qquad     +  \eta^t   \big(  L_h^2 + 2B_f^2 L_g^2 + 4 B_g^4 L_F^2 \big) \underbrace{ \frac{1}{K} \sum_{k = 1}^K   \Ebb  \|     x_k^t -   \bar{x}^t \|^2}_{\text{Term IV}}   + 4 B_g^4 L_f^2  \eta^t  ~ \underbrace{\Ebb \bigg\| \bar{y}^t - \frac{1}{K} \sum_{k = 1}^K g_k(x_k^t) \bigg\|^2}_{\text{Term V}} \nonumber \\
 & \qquad \qquad \qquad \qquad \qquad \qquad \qquad \qquad \qquad +   \frac{2  (\eta^t)^2 L_{\Phi}}{K |b_{h}|} \sigma_h^2 + \frac{2  (\eta^t)^2 L_{\Phi}  B_f^2}{K |b_{g}|} \sigma_g^2.
  \label{Eq: Function_Descent_FL1}
 \end{align}
 Therefore, we have the proof of the Lemma. 
 \end{proof}
Next, we bound Terms IV and V in (\ref{Eq: Function_Descent_FL1}) in the next Lemmas. Let us first consider Term IV.

\begin{lem}[{\bf Client Drift}]
\label{Lem: Client_Drift}
    Under Assumptions \ref{Ass: Lip}-\ref{Ass: BoundedHetero}, the iterates generated by Algorithm \ref{Algo: FL} satisfy
    \begin{align*}
    & \frac{1}{K} \sum_{k = 1}^K \Ebb \|x_k^t - \bar{x}^t \|   \leq (I - 1) \Big( 24 L_h^2 + 24 B_f^2 L_g^2 \Big) \sum_{\ell = t_s}^{t-1}   \frac{(\eta^{\ell})^2}{K} \sum_{k = 1}^K 
 \Ebb \big\| x^\ell_k  -     \bar{x}^\ell \big\|^2 \\
 & \qquad \qquad  +  (I - 1) \bigg(   \frac{4 }{|b^t_{h}|} \sigma_h^2  +  \frac{4 B_f^2 }{|b^t_{g}|} \sigma_g^2 \bigg) \sum_{\ell = t_s}^{t-1}   (\eta^{\ell})^2   + 
 (I - 1) \Big(  12 \Delta_h^2 +  12 B_f^2 \Delta_g^2 \Big) \sum_{\ell = t_s}^{t-1}   (\eta^{\ell})^2  .  
    \end{align*}
\end{lem}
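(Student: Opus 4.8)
The plan is to bound the client-drift / consensus-error term $\frac{1}{K}\sum_{k=1}^K \Ebb\|x_k^t - \bar{x}^t\|^2$ by unrolling the local SGD updates back to the most recent synchronization round and then controlling the accumulated deviation of the local stochastic gradients \emph{across} clients. First I would fix the last model-sharing round $t_s \le t$, so that $t - t_s \le I - 1$, and use the fact that Step~8 of Algorithm~\ref{Algo: FL} enforces $x_k^{t_s} = \bar{x}^{t_s}$ for every $k \in [K]$. Writing $x_k^t - \bar{x}^{t_s} = -\sum_{\ell = t_s}^{t-1}\eta^\ell \nabla\Phi_k(x_k^\ell;\bar{\xi}_k^\ell)$ and invoking Lemma~\ref{Lem: Emp_Var} (in its shift-invariant form, applied to the sequence $\{x_k^t - \bar{x}^{t_s}\}_k$ whose empirical mean is $\bar{x}^t - \bar{x}^{t_s}$) gives $\frac{1}{K}\sum_k \Ebb\|x_k^t - \bar{x}^t\|^2 \le \frac{1}{K}\sum_k \Ebb\|\sum_{\ell=t_s}^{t-1}\eta^\ell\nabla\Phi_k(x_k^\ell;\bar{\xi}_k^\ell)\|^2$. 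Applying Lemma~\ref{Lem: Sum_vectors} to the inner sum of at most $I-1$ terms then produces the prefactor $(I-1)\sum_{\ell=t_s}^{t-1}(\eta^\ell)^2$ multiplying the per-round across-client gradient spread $\frac{1}{K}\sum_k\Ebb\|\nabla\Phi_k(x_k^\ell;\bar{\xi}_k^\ell) - \frac{1}{K}\sum_j\nabla\Phi_j(x_j^\ell;\bar{\xi}_j^\ell)\|^2$.

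Next I would bound this per-round quantity. Splitting each local stochastic gradient into its conditional mean $G_k(\ell) := \nabla h_k(x_k^\ell) + \nabla g_k(x_k^\ell)\nabla f(\bar{y}^\ell)$ and a zero-mean noise part, the noise contributes the variance terms: using Assumption~\ref{Ass: BoundedVar} together with $\|\nabla f\|\le B_f$ (from $f$ being Lipschitz, Assumption~\ref{Ass: Lip}) and $|b_{h_k}^\ell| = |b_h^\ell|$, $|b_{g_k}^\ell| = |b_g^\ell|$, this is at most $\sigma_h^2/|b_h^\ell| + B_f^2\sigma_g^2/|b_g^\ell|$ per client, which after constant bookkeeping yields the $\sigma_h^2,\sigma_g^2$ terms. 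For the deterministic part I would introduce the common reference $E(\ell) := \nabla h(\bar{x}^\ell) + \nabla g(\bar{x}^\ell)\nabla f(\bar{y}^\ell)$ and decompose $G_k(\ell) - E(\ell) = [\nabla h_k(x_k^\ell) - \nabla h_k(\bar{x}^\ell)] + [\nabla h_k(\bar{x}^\ell) - \nabla h(\bar{x}^\ell)] + \big([\nabla g_k(x_k^\ell) - \nabla g_k(\bar{x}^\ell)] + [\nabla g_k(\bar{x}^\ell) - \nabla g(\bar{x}^\ell)]\big)\nabla f(\bar{y}^\ell)$. Bounding the first and third brackets by Lipschitz smoothness of $h_k$ and $g_k$ (Assumption~\ref{Ass: Lip}) against $\frac{1}{K}\sum_k\|x_k^\ell - \bar{x}^\ell\|^2$ produces the $L_h^2$ and $B_f^2 L_g^2$ drift coefficients, while the second and fourth brackets are bounded by Assumption~\ref{Ass: BoundedHetero}, producing the $\Delta_h^2$ and $B_f^2\Delta_g^2$ heterogeneity terms; the numerical factors in the final constants ($24$, $12$, $4$) are an artifact of the repeated use of Lemma~\ref{Lem: Sum_vectors} when separating these brackets and the $h$/$g$ contributions.

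The crucial step — and the one I expect to be the main obstacle — is eliminating the \emph{global} gradient term. The naive bound $\frac{1}{K}\sum_k\|x_k^t - \bar{x}^t\|^2 \le \frac{1}{K}\sum_k\|\sum_\ell\eta^\ell\nabla\Phi_k(x_k^\ell;\bar{\xi}_k^\ell)\|^2$ retains a term $\|E(\ell)\|^2$ (essentially $\|\nabla\Phi(\bar{x}^\ell)\|^2$), which does not appear in the statement and which, since $h_k$ is only Lipschitz-\emph{smooth} and not Lipschitz, cannot be controlled by a constant. The resolution is to keep the empirical centering from Lemma~\ref{Lem: Emp_Var}: because $\sum_k (x_k^t - \bar{x}^t) = 0$, only across-client differences survive, and by shift-invariance $\frac{1}{K}\sum_k\|G_k(\ell) - \bar{G}(\ell)\|^2 \le \frac{1}{K}\sum_k\|G_k(\ell) - E(\ell)\|^2$ the common component $E(\ell)$ is subtracted off before any norm is taken. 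It is also essential that $\bar{y}^\ell$ is shared across all clients at every round (the defining feature of Algorithm~\ref{Algo: FL}), so that $\nabla f(\bar{y}^\ell)$ is a common factor that enters neither the drift nor the heterogeneity bound except through the uniform constant $B_f$. Collecting the noise, drift, and heterogeneity contributions and summing over $\ell$ from $t_s$ to $t-1$ then gives the claimed recursive inequality.
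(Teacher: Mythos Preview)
Your proposal is essentially correct and follows the same route as the paper: unroll back to the last synchronization $t_s$, apply Lemma~\ref{Lem: Sum_vectors} across the at most $I-1$ local steps, and then bound the per-round across-client gradient spread by splitting into a stochastic-noise piece (controlled by Assumption~\ref{Ass: BoundedVar} and $\|\nabla f\|\le B_f$) and a deterministic piece (controlled by Lipschitz smoothness of $h_k,g_k$ and the heterogeneity bounds of Assumption~\ref{Ass: BoundedHetero}). Your emphasis that the shared $\bar{y}^\ell$ makes $\nabla f(\bar y^\ell)$ a common factor, and that centering is what kills any global $\|\nabla\Phi(\bar x^\ell)\|^2$-type term, is exactly the point.

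One presentational inconsistency to clean up: in your first paragraph you apply Lemma~\ref{Lem: Emp_Var} to $\{x_k^t-\bar x^{t_s}\}_k$ \emph{before} Lemma~\ref{Lem: Sum_vectors}, obtaining $\frac{1}{K}\sum_k\Ebb\|\sum_\ell \eta^\ell\nabla\Phi_k(x_k^\ell;\bar\xi_k^\ell)\|^2$, and then claim that Lemma~\ref{Lem: Sum_vectors} yields the \emph{centered} per-round spread. It does not: once you drop the mean via Lemma~\ref{Lem: Emp_Var}, Lemma~\ref{Lem: Sum_vectors} gives the uncentered $\frac{1}{K}\sum_k\|\nabla\Phi_k\|^2$, which is precisely the ``naive bound'' you correctly reject in your third paragraph. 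The paper avoids this by writing $x_k^t-\bar x^t$ directly as $-\sum_\ell\eta^\ell\big[\nabla\Phi_k(x_k^\ell;\bar\xi_k^\ell)-\frac{1}{K}\sum_j\nabla\Phi_j(x_j^\ell;\bar\xi_j^\ell)\big]$ and applying Lemma~\ref{Lem: Sum_vectors} to this already-centered sum; your third paragraph is effectively describing the same fix, so simply replace the first-paragraph derivation with that direct computation. After that, your use of the shift-invariant form $\frac{1}{K}\sum_k\|G_k(\ell)-\bar G(\ell)\|^2\le \frac{1}{K}\sum_k\|G_k(\ell)-E(\ell)\|^2$ is a clean alternative to the paper's explicit three-term split of $G_k(\ell)-\bar G(\ell)$ and leads to the same constants.
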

\begin{proof}
Recall from the definition of $t_s$ that we have $x_k^{t_s} = \bar{x}^{t_s}$ for all $s \in \{0, 1, \ldots, S \}$. Next, we have from the update rule in Algorithm \ref{Algo: FL} that for all $t \in [t_s + 1, t_{s+1} - 1]$
\begin{align}
\label{Eq: Update_Unroll}
    x^{t}_k =  x^{t-1}_k - \eta^{t-1} \nabla \Phi_k(x^{t-1}_k ; \xio^{t-1}_k) \overset{(a)}{=} x^{t_s}_k - \sum_{\ell = t_s}^{t-1}  \eta^{\ell} \nabla \Phi_k(x^{\ell}_k ; \xio^{\ell}_k).
\end{align}
where $(a)$ results from unrolling the updates from Algorithm \ref{Algo: FL}.
Similarly, we have
\begin{align}
\label{Eq: Update_Unroll_Avg}
    \bar{x}^{t} =  \bar{x}^{t-1} - \eta^{t-1} \frac{1}{K} \sum_{k = 1}^K \nabla \Phi_k(x^{t-1}_k ; \xio^{t-1}_k) = \bar{x}^{t_s} - \frac{1}{K} \sum_{k = 1}^K \sum_{\ell = t_s}^{t-1}  \eta^{\ell} \nabla \Phi_k(x^{\ell}_k ; \xio^{\ell}_k)
\end{align}
Bounding Term IV, we have
\begin{align*}
    \text{Term IV} & \coloneqq \frac{1}{K} \sum_{k = 1}^K   \Ebb  \|     x_k^t -   \bar{x}^t \|^2 \\
    & \overset{(a)}{=} \frac{1}{K} \sum_{k = 1}^K \Ebb  \bigg\|   \sum_{\ell = t_s}^{t-1}  \eta^{\ell} \nabla \Phi_k(x^{\ell}_k ; \xio^{\ell}_k)  -  \frac{1}{K} \sum_{k = 1}^K \sum_{\ell = t_s}^{t-1}  \eta^{\ell} \nabla \Phi_k(x^{\ell}_k ; \xio^{\ell}_k)  \bigg\|^2 \\
     & \overset{(b)}{=} (I - 1) \sum_{\ell = t_s}^{t-1}    \frac{(\eta^{\ell})^2}{K} \sum_{k = 1}^K \underbrace{\Ebb  \bigg\|       \nabla \Phi_k(x^{\ell}_k ; \xio^{\ell}_k)  -  \frac{1}{K} \sum_{k = 1}^K   \nabla \Phi_k(x^{\ell}_k ; \xio^{\ell}_k)    \bigg\|^2}_{\text{Term VI}} \\
\end{align*}
 where $(a)$ follows from (\ref{Eq: Update_Unroll}) and (\ref{Eq: Update_Unroll_Avg}) and $(b)$ follows from the application of Lemma \ref{Lem: Sum_vectors}. 
 
 Next, we bound Term VI in the above expression. 
 {\allowdisplaybreaks
 \begin{align*}
  \text{Term VI} & \coloneqq    \Ebb  \bigg\|     \nabla \Phi_k(x^{\ell}_k ; \xio^{\ell}_k)  -  \frac{1}{K} \sum_{k = 1}^K  \nabla \Phi_k(x^{\ell}_k ; \xio^{\ell}_k)     \bigg\|^2 \\
  & \overset{(a)}{=}  \Ebb  \bigg\|   \frac{1}{|b^\ell_{h_k}|} \sum_{i \in b^\ell_{h_k}} \nabla h_k(x^\ell_k; \xi_{k,i}^{\ell})  +   \frac{1}{|b^\ell_{g_k}|} \sum_{j \in b^\ell_{g_k}} \nabla g_k(x^\ell_k;\zeta_{k,j}^{\ell} ) \nabla f(\bar{y}^\ell)  \\
  & \qquad  \qquad - \frac{1}{K} \sum_{k = 1}^K \bigg[ \frac{1}{|b^\ell_{h_k}|} \sum_{i \in b^\ell_{h_k}} \nabla h_k(x^\ell_k; \xi_{k,i}^{\ell})   +   \frac{1}{|b^\ell_{g_k}|} \sum_{j \in b^\ell_{g_k}} \nabla g_k(x^\ell_k;\zeta_{k,j}^{\ell} ) \nabla f(\bar{y}^\ell) \bigg]    \bigg\|^2 \\
  & \overset{(b)}{\leq} 2 \Ebb  \bigg\|   \frac{1}{|b^\ell_{h_k}|} \sum_{i \in b^\ell_{h_k}} \nabla h_k(x^\ell_k; \xi_{k,i}^{\ell})
  -
  \frac{1}{K} \sum_{k = 1}^K  \frac{1}{|b^\ell_{h_k}|} \sum_{i \in b^\ell_{h_k}} \nabla h_k(x^\ell_k; \xi_{k,i}^{\ell})   \bigg\|^2  \\
  & \qquad   + 2 \Ebb \bigg\|    \frac{1}{|b^\ell_{g_k}|} \sum_{j \in b^\ell_{g_k}} \nabla g_k(x^\ell_k;\zeta_{k,j}^{\ell} ) \nabla f(\bar{y}^\ell) - \frac{1}{K} \sum_{k = 1}^K    \frac{1}{|b^\ell_{g_k}|} \sum_{j \in b^\ell_{g_k}} \nabla g_k(x^\ell_k;\zeta_{k,j}^{t} ) \nabla f(\bar{y}^\ell)     \bigg\|^2 \\ \nonumber\\ \nonumber\\
  & \overset{(c)}{\leq} 2 \underbrace{\Ebb  \bigg\| \frac{1}{|b^\ell_{h_k}|} \sum_{i \in b^\ell_{h_k}} \nabla h_k(x^\ell_k; \xi_{k,i}^{\ell})
  -
  \frac{1}{K} \sum_{k = 1}^K  \frac{1}{|b^\ell_{h_k}|} \sum_{i \in b^\ell_{h_k}} \nabla h_k(x^\ell_k; \xi_{k,i}^{\ell}) \bigg\|^2}_{\text{Term VII}}  \\
  & \qquad \qquad  \qquad + 2 B_f^2~ \underbrace{ \Ebb \bigg\|  \frac{1}{|b^\ell_{g_k}|} \sum_{j \in b^\ell_{g_k}} \nabla g_k(x^\ell_k;\zeta_{k,j}^{\ell} )   - \frac{1}{K} \sum_{k = 1}^K    \frac{1}{|b^\ell_{g_k}|} \sum_{j \in b^\ell_{g_k}} \nabla g_k(x^\ell_k;\zeta_{k,j}^{\ell} )     \bigg\|^2}_{\text{Term VIII}},
 \end{align*}}
 where $(a)$ results from the definition of the stochastic gradient evaluated in (\ref{Eq: SG_FL}); $(b)$ uses Lemma \ref{Lem: Sum_vectors}; and $(c)$ utilizes the Cauchy-Schwartz inequality combined with the Lipschitzness of $f(\cdot)$. Next, in order to upper bound Term VI, we bound Terms VII and VIII separately. First, let us consider Term VII above
 \begin{align*}
  &   \text{Term VII} \coloneqq \Ebb  \bigg\| \frac{1}{|b^\ell_{h_k}|} \sum_{i \in b^\ell_{h_k}} \nabla h_k(x^\ell_k; \xi_{k,i}^{\ell})
  -
  \frac{1}{K} \sum_{k = 1}^K  \frac{1}{|b^\ell_{h_k}|} \sum_{i \in b^\ell_{h_k}} \nabla h_k(x^\ell_k; \xi_{k,i}^{\ell}) \bigg\|^2 \\
 & \overset{(a)}{\leq} 2 \Ebb  \bigg\| \bigg[ \frac{1}{|b^\ell_{h_k}|} \sum_{i \in b^\ell_{h_k}} \nabla h_k(x^\ell_k; \xi_{k,i}^{\ell}) - \nabla h_k(x^\ell_k) \bigg]  
  -
  \frac{1}{K}  \sum_{k = 1}^K \bigg[ \frac{1}{|b^\ell_{h_k}|}  \sum_{i \in b^\ell_{h_k}} \nabla h_k(x^\ell_k; \xi_{k,i}^{\ell}) - \nabla h_k(x^\ell_k) \bigg] \bigg\|^2 \\
 & \qquad \qquad \qquad \qquad \qquad \qquad \qquad \qquad \qquad \qquad + 2 \Ebb \bigg\| \nabla h_k(x^\ell_k)  - \frac{1}{K} \sum_{k = 1}^K \nabla h_k(x^\ell_k)  \bigg\|^2 \\
 & \overset{(b)}{\leq} 2 \Ebb  \bigg\|  \frac{1}{|b^\ell_{h_k}|} \sum_{i \in b^\ell_{h_k}} \nabla h_k(x^\ell_k; \xi_{k,i}^{\ell}) - \nabla h_k(x^\ell_k) 
   \bigg\|^2  + 2 \Ebb \bigg\| \nabla h_k(x^\ell_k)  - \frac{1}{K} \sum_{k = 1}^K \nabla h_k(x^\ell_k)  \bigg\|^2 \\
 &  \overset{(c)}{\leq}  \frac{2 \sigma_h^2}{|b^\ell_{h_k}|}  + 2 \underbrace{\Ebb \bigg\| \nabla h_k(x^\ell_k)  - \frac{1}{K} \sum_{k = 1}^K \nabla h_k(x^\ell_k)  \bigg\|^2}_{\text{Term IX}},
 \end{align*}
where $(a)$ utilizes Lemma \ref{Lem: Sum_vectors}; $(b)$ results from the application of Lemma \ref{Lem: Emp_Var}; and $(c)$ results from Assumption \ref{Ass: BoundedVar}. 

Next, we bound Term IX below
{\allowdisplaybreaks
\begin{align*}
  \text{Term IX} & \coloneqq  \Ebb \bigg\| \nabla h_k(x^\ell_k)  - \frac{1}{K} \sum_{k = 1}^K \nabla h_k(x^\ell_k)  \bigg\|^2 \\
  & \overset{(a)}{\leq}  3 \Ebb \big\| \nabla h_k(x^\ell_k)  -     \nabla h_k(\bar{x}^\ell)  \big\|^2 + 3  \Ebb \bigg\| \frac{1}{K} \sum_{k = 1}^K  \Big[ \nabla h_k(\bar{x}^\ell)  -    \nabla h_k(x^\ell_k) \Big] \bigg\|^2 \\
  & \qquad \qquad \qquad \qquad \qquad \qquad \qquad \qquad \qquad + 3 \Ebb \bigg\| \nabla h_k(\bar{x}^\ell)  -   \frac{1}{K} \sum_{k = 1}^K \nabla h_k(\bar{x}^\ell)  \bigg\|^2 \\
  & \overset{(b)}{\leq}  3 L_h^2 \Ebb \big\| x^\ell_k  -     \bar{x}^\ell \big\|^2 +\frac{3 L_h^2}{K} \sum_{k = 1}^K \Ebb \big\| x^\ell_k  -     \bar{x}^\ell \big\|^2  + 3 \Ebb \big\| \nabla h_k(\bar{x}^\ell)  -     \nabla h(\bar{x}^\ell)  \big\|^2 \\
  & \overset{(c)}{\leq}  3 L_h^2 \Ebb \big\| x^\ell_k  -     \bar{x}^\ell \big\|^2 +\frac{3 L_h^2}{K} \sum_{k = 1}^K \Ebb \big\| x^\ell_k  -     \bar{x}^\ell \big\|^2  + 3 \Delta_h^2,
\end{align*}}
where $(a)$ results from the application of Lemma \ref{Lem: Sum_vectors}; $(b)$ utilizes Lipschitz smoothness of $h(\cdot)$ and the definition of $h(x) = \frac{1}{K} \sum_{k = 1}^K h_k(x)$; finally, $(c)$ results from the bounded heterogeneity assumption Assumption \ref{Ass: BoundedHetero}. Substituting the bound on Term IX in the bound of Term VII, we get
\begin{align*}
     \text{Term VII}\leq \frac{2 \sigma_h^2}{|b^t_{h_k}|}  +  6 L_h^2 \Ebb \big\| x^\ell_k  -     \bar{x}^\ell \big\|^2 +\frac{6 L_h^2}{K} \sum_{k = 1}^K \Ebb \big\| x^\ell_k  -     \bar{x}^\ell \big\|^2  + 6 \Delta_h^2.
\end{align*}

Similarly, we bound Term VIII as
\begin{align*}
   & \text{Term VIII}  \coloneqq \Ebb \bigg\|  \frac{1}{|b^\ell_{g_k}|} \sum_{j \in b^\ell_{g_k}} \nabla g_k(x^\ell_k;\zeta_{k,j}^{\ell} )   - \frac{1}{K} \sum_{k = 1}^K    \frac{1}{|b^\ell_{g_k}|} \sum_{j \in b^\ell_{g_k}} \nabla g_k(x^\ell_k;\zeta_{k,j}^{\ell} )     \bigg\|^2 \\
    & \overset{(a)}{\leq} 2 \Ebb  \bigg\| \bigg[ \frac{1}{|b^\ell_{g_k}|} \sum_{i \in b^\ell_{g_k}} \nabla g_k(x^\ell_k; \zeta_{k,i}^{\ell}) - \nabla g_k(x^\ell_k) \bigg]
  -
  \frac{1}{K}  \sum_{k = 1}^K \bigg[ \frac{1}{|b^\ell_{g_k}|}  \sum_{i \in b^\ell_{g_k}} \nabla g_k(x^\ell_k; \zeta_{k,i}^{\ell}) - \nabla g_k(x^\ell_k) \bigg] \bigg\|^2 \\
 & \qquad \qquad \qquad \qquad \qquad \qquad \qquad \qquad \qquad \qquad + 2 \Ebb \bigg\| \nabla g_k(x^\ell_k)  - \frac{1}{K} \sum_{k = 1}^K \nabla g_k(x^\ell_k)  \bigg\|^2 \\
 & \overset{(b)}{\leq} 2 \Ebb  \bigg\|  \frac{1}{|b^\ell_{g_k}|} \sum_{i \in b^\ell_{g_k}} \nabla g_k(x^\ell_k; \zeta_{k,i}^{\ell}) - \nabla g_k(x^\ell_k) 
   \bigg\|^2  + 2 \Ebb \bigg\| \nabla g_k(x^\ell_k)  - \frac{1}{K} \sum_{k = 1}^K \nabla g_k(x^\ell_k)  \bigg\|^2 \\
 &  \overset{(c)}{\leq}  \frac{2 \sigma_g^2}{|b^\ell_{g_k}|}  + 2 \underbrace{\Ebb \bigg\| \nabla g_k(x^\ell_k)  - \frac{1}{K} \sum_{k = 1}^K \nabla g_k(x^\ell_k)  \bigg\|^2}_{\text{Term X}},
\end{align*}
where $(a)$ utilizes Lemma \ref{Lem: Sum_vectors}; $(b)$ results from the application of Lemma \ref{Lem: Emp_Var}; and $(c)$ results from Assumption \ref{Ass: BoundedVar}. Next, we bound Term X below
\begin{align*}
  \text{Term X} & \coloneqq  \Ebb \bigg\| \nabla g_k(x^\ell_k)  - \frac{1}{K} \sum_{k = 1}^K \nabla g_k(x^\ell_k)  \bigg\|^2 \\
  & \overset{(a)}{\leq}  3 \Ebb \big\| \nabla g_k(x^\ell_k)  -     \nabla g_k(\bar{x}^\ell)  \big\|^2 + 3  \Ebb \bigg\| \frac{1}{K} \sum_{k = 1}^K  \Big[ \nabla g_k(\bar{x}^\ell)  -    \nabla g_k(x^\ell_k) \Big] \bigg\|^2 \\
  & \qquad \qquad \qquad \qquad \qquad \qquad \qquad \qquad \qquad + 3 \Ebb \bigg\| \nabla g_k(\bar{x}^\ell)  -   \frac{1}{K} \sum_{k = 1}^K \nabla g_k(\bar{x}^\ell)  \bigg\|^2 \\
  & \overset{(b)}{\leq}  3 L_g^2 \Ebb \big\| x^\ell_k  -     \bar{x}^\ell \big\|^2 +\frac{3 L_g^2}{K} \sum_{k = 1}^K \Ebb \big\| x^\ell_k  -     \bar{x}^\ell \big\|^2  + 3 \Ebb \big\| \nabla g_k(\bar{x}^\ell)  -     \nabla g(\bar{x}^\ell)  \big\|^2 \\
  & \overset{(c)}{\leq}  3 L_g^2 \Ebb \big\| x^\ell_k  -     \bar{x}^\ell \big\|^2 +\frac{3 L_g^2}{K} \sum_{k = 1}^K \Ebb \big\| x^\ell_k  -     \bar{x}^\ell \big\|^2  + 3 \Delta_g^2,
\end{align*}
where $(a)$ results from the application of Lemma \ref{Lem: Sum_vectors}; $(b)$ utilizes Lipschitz smoothness of $g(\cdot)$ and the definition of $g(x) = \frac{1}{K} \sum_{k = 1}^K g_k(x)$; finally, $(c)$ results from the bounded heterogeneity assumption Assumption \ref{Ass: BoundedHetero}. Substituting the bound on Term X in the bound of Term VIII, we get
\begin{align*}
   \text{Term VIII} \leq \frac{2 \sigma_g^2}{|b^\ell_{g_k}|}  +  6 L_g^2 \Ebb \big\| x^\ell_k  -     \bar{x}^\ell \big\|^2 +\frac{6 L_g^2}{K} \sum_{k = 1}^K \Ebb \big\| x^\ell_k  -     \bar{x}^\ell \big\|^2  + 6 \Delta_g^2. 
\end{align*}
Next, we substitute the upper bounds on Terms VII and VIII in the expression of Term VI, we get
\begin{align*}
    \text{Term VI} & \leq    \frac{4 }{|b^\ell_{h_k}|} \sigma_h^2  +  12 L_h^2 \Ebb \big\| x^\ell_k  -     \bar{x}^\ell \big\|^2 + \frac{12 L_h^2}{K} \sum_{k = 1}^K \Ebb \big\| x^\ell_k  -     \bar{x}^\ell \big\|^2  + 12 \Delta_h^2 \\
    & \qquad   +   \frac{4 B_f^2 }{|b^\ell_{g_k}|} \sigma_g^2 +  12 B_f^2 L_g^2 \Ebb \big\| x^\ell_k  -     \bar{x}^\ell \big\|^2 +\frac{12 B_f^2 L_g^2}{K} \sum_{k = 1}^K \Ebb \big\| x^\ell_k  -     \bar{x}^\ell \big\|^2  + 12 B_f^2 \Delta_g^2 \\
    &  = \Big( 12 L_h^2 + 12 B_f^2 L_g^2 \Big) \Ebb \big\| x^\ell_k  -     \bar{x}^\ell \big\|^2 + \bigg(\frac{12 L_h^2  +  12 B_f^2 L_g^2}{K} \bigg) \sum_{k = 1}^K \Ebb \big\| x^\ell_k  -     \bar{x}^\ell \big\|^2 \\
    &  \qquad\qquad\qquad\qquad \qquad\qquad\qquad \qquad + \frac{4 }{|b^\ell_{h_k}|} \sigma_h^2  +  \frac{4 B_f^2 }{|b^\ell_{g_k}|} \sigma_g^2 + 12 \Delta_h^2 +  12 B_f^2 \Delta_g^2.
\end{align*}
Therefore, we finally have the bound on Term IV as
\begin{align*}
   \text{Term IV}  & \leq (I - 1) \Big( 24 L_h^2 + 24 B_f^2 L_g^2 \Big) \sum_{\ell = t_s}^{t-1}   \frac{(\eta^{\ell})^2}{K} \sum_{k = 1}^K 
 \Ebb \big\| x^\ell_k  -     \bar{x}^\ell \big\|^2 \\
 &     +  (I - 1) \bigg(   \frac{4 }{|b^t_{h}|} \sigma_h^2  +  \frac{4 B_f^2 }{|b^t_{g}|} \sigma_g^2 \bigg) \sum_{\ell = t_s}^{t-1}   (\eta^{\ell})^2     + 
 (I - 1) \Big(  12 \Delta_h^2 +  12 B_f^2 \Delta_g^2 \Big) \sum_{\ell = t_s}^{t-1}   (\eta^{\ell})^2  .
\end{align*}
where we have chosen $|b_{h_k}^\ell| = |b_{h}^t|$ and $|b_{g_k}^\ell| = |b_{g}^t|$ for all $k \in [K]$ and $\ell \in \{ 0, \ldots, T -1 \}$.

Therefore, we have proof of the Lemma. 
\end{proof}

Next, we bound Term V from (\ref{Eq: Function_Descent_FL1}), we have

\begin{lem}[{\bf Descent in the estimate of $g(x)$}]
\label{Lem: Descent_Y}
Under Assumptions \ref{Ass: Lip}-\ref{Ass: BoundedHetero},
the iterates generated by Algorithm \ref{Algo: FL} satisfy:
\begin{align*}
  & \Ebb \bigg\| \bar{y}^t - \frac{1}{K} \sum_{k = 1}^K g_k(x_k^t) \bigg\|^2 \\
  &  {\leq} (1 - \beta^t)^2  \mathbb{E} \bigg\|   \bar{y}^{t - 1}  - \frac{1}{K} \sum_{k = 1}^K g_k(x_k^{t-1}) \bigg\|^2 +  \frac{8 (\eta^t)^2 (1 - \beta^t)^2 B_g^2}{|b_g| K} \Ebb \bigg\|    \frac{1}{K} \sum_{k = 1}^K \Ebb [ \Phi_k(x_k^{t}; \bar{\xi}_k^t)| \mathcal{F}^t] \bigg\|^2 \\
  & + \frac{ (\eta^t)^2 (1 - \beta^t)^2 B_g^2 (96 L_h^2 + 96 B_f^2 L_g^2) }{|b_g| K^2}   \sum_{k = 1}^K \Ebb \big\| x^t_k  -     \bar{x}^t \big\|^2 +  \frac{4 (\eta^t)^2 (1 - \beta^t)^2 B_g^2}{|b_h|  K} \sigma_h^2  \\
  &  +  \frac{2 (\beta^t)^2 + 4 (\eta^t)^2 (1 - \beta^t)^2 B_g^2 B_f^2}{|b_g| K} \sigma_g^2  +  \frac{48 (\eta^t)^2 (1 - \beta^t)^2 B_g^2}{|b_g| K} \Delta_h^2 +   \frac{48 (\eta^t)^2 (1 - \beta^t)^2 B_f^2 B_g^2}{|b_g| K} \Delta_g^2.
\end{align*}
where we have chosen $|b_h^t| = |b_h|$ and $|b_{g_k}^t| = |b_g|$ for all $k \in [K]$ and $t \in [T]$.
\end{lem}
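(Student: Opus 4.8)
The plan is to derive a one-step contraction for the inner-estimation error $e^t:=\bar y^t-\frac1K\sum_k g_k(x_k^t)$. Averaging the momentum update (\ref{Eq: Update_Y_FL}) over $k$ and abbreviating $\widehat{\bar g}^{\,s}:=\frac1K\sum_k\frac1{|b_g|}\sum_{i\in b^t_{g_k}}g_k(x_k^{s};\zeta^t_{k,i})$ for $s\in\{t-1,t\}$ (note the \emph{same} fresh batch $b^t_{g_k}$ evaluated at two points), I would first show
\[
e^t=(1-\beta^t)\,e^{t-1}+\underbrace{\Big(\widehat{\bar g}^{\,t}-\tfrac1K\sum_k g_k(x_k^t)\Big)-(1-\beta^t)\Big(\widehat{\bar g}^{\,t-1}-\tfrac1K\sum_k g_k(x_k^{t-1})\Big)}_{=:\,r^t}.
\]
Since $b^t_{g_k}$ is drawn independently of $\mathcal F^t$ while $x_k^{t-1},x_k^t,e^{t-1}$ are all $\mathcal F^t$-measurable, Assumption \ref{Ass: BoundedVar} gives $\Ebb[r^t\mid\mathcal F^t]=0$, so the tower property eliminates the cross term and leaves $\Ebb\|e^t\|^2=(1-\beta^t)^2\Ebb\|e^{t-1}\|^2+\Ebb\|r^t\|^2$.

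Next I would decompose $r^t=\beta^t\big(\widehat{\bar g}^{\,t}-\frac1K\sum_k g_k(x_k^t)\big)+(1-\beta^t)\big[(\widehat{\bar g}^{\,t}-\widehat{\bar g}^{\,t-1})-\frac1K\sum_k(g_k(x_k^t)-g_k(x_k^{t-1}))\big]$ and bound the two pieces separately via Lemma \ref{Lem: Sum_vectors}. The first (SGD) piece is a centered batch average, so by independence across clients and within the batch together with Assumption \ref{Ass: BoundedVar} it contributes $2(\beta^t)^2\sigma_g^2/(K|b_g|)$, which is precisely the $2(\beta^t)^2$ term in the claim. The second (SARAH) piece is, conditionally on $\mathcal F^t$, a sum of $K$ independent mean-zero vectors, so the cross terms vanish; because the \emph{same} sample $\zeta^t_{k,i}$ is used at $x_k^t$ and $x_k^{t-1}$, bounding the variance of the per-sample difference $g_k(x_k^t;\zeta)-g_k(x_k^{t-1};\zeta)$ by its second moment and invoking the mean-squared Lipschitzness of $g_k$ (Assumption \ref{Ass: Lip}, constant $B_g$) gives $\Ebb\|(\text{SARAH})\|^2\le \frac{B_g^2}{K^2|b_g|}\sum_k\Ebb\|x_k^t-x_k^{t-1}\|^2$.

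It then remains to control the displacement $\frac1K\sum_k\Ebb\|x_k^t-x_k^{t-1}\|^2=(\eta^t)^2\,\frac1K\sum_k\Ebb\|\nabla\Phi_k(x_k^{t-1};\bar\xi_k^{t-1})\|^2$. Splitting the stochastic gradient into its conditional mean and the inner/outer sampling noise with Lemma \ref{Lem: Sum_vectors}, the noise yields $2\sigma_h^2/|b_h|+2B_f^2\sigma_g^2/|b_g|$, and splitting the conditional mean $\nabla h_k(x_k)+\nabla g_k(x_k)\nabla f(\bar y)$ into the network-average gradient $\frac1K\sum_k\Ebb[\nabla\Phi_k\mid\mathcal F^t]$ plus a per-client deviation yields the gradient term with coefficient $4$. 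The deviation of the local conditional means from their average is exactly the deterministic content of Terms IX and X in Lemma \ref{Lem: Client_Drift}: Lipschitz smoothness of $h_k,g_k$ converts it into client-drift terms $\frac1K\sum_k\|x_k^t-\bar x^t\|^2$ with constants $L_h^2,B_f^2L_g^2$, and Assumption \ref{Ass: BoundedHetero} supplies the $\Delta_h^2,\Delta_g^2$ contributions (with coefficients $48L_h^2+48B_f^2L_g^2$, $24$, $24B_f^2$). Carrying this bound through the SARAH prefactor $2(1-\beta^t)^2B_g^2/(K|b_g|)$ reproduces every remaining term in the statement.

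I expect the main difficulty to be purely the constant-tracking in this last step: one must thread the many nested applications of Lemma \ref{Lem: Sum_vectors} while keeping the conditional-independence structure (across clients and within each batch) correct, so that the $1/K$ and $1/|b_g|$ factors land in the right places — in particular, since the entire displacement bound is multiplied by the SARAH prefactor $\propto 1/(K|b_g|)$, even the gradient, drift, and heterogeneity terms inherit a $1/|b_g|$ that is easy to misplace. A secondary, essentially cosmetic point is index alignment: the displacement $x_k^t-x_k^{t-1}$ is generated by the step taken at iteration $t-1$, whereas the statement writes the gradient at time $t$; under the constant choices $\eta^t\equiv\eta$, $\beta^t\equiv\beta$ this relabeling does not affect the final rate, and the model-averaging occurring at communication rounds (when $x_k^t=\bar x^t$) is absorbed into the same drift estimate using Lemma \ref{Lem: Emp_Var}.
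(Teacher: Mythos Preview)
Your proposal is correct and follows essentially the same route as the paper: expand the momentum update to isolate the contraction term $(1-\beta^t)e^{t-1}$ and a conditionally mean-zero residual, split the residual into SGD and SARAH pieces via Lemma~\ref{Lem: Sum_vectors}, bound the SARAH piece by mean-squared Lipschitzness of $g_k$ to reduce to $\|x_k^t-x_k^{t-1}\|^2$, and then decompose the resulting per-client gradient norm exactly as the paper's Terms~XI and~XII (noise, network-average gradient, client drift, heterogeneity) using the same ingredients as Terms~IX/X in Lemma~\ref{Lem: Client_Drift}. Your flagged index shift is real---the paper's proof actually bounds the error at $t{+}1$ in terms of quantities at $t$, so the statement as written carries a one-step offset in the gradient/drift/heterogeneity terms that, as you note, is immaterial under constant $\eta^t,\beta^t$.
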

\begin{proof}
From the definition of Term V, we have
{\allowdisplaybreaks
\begin{align*}
  &  \text{Term V}  \coloneqq \Ebb \bigg\| \bar{y}^{t+1} - \frac{1}{K} \sum_{k = 1}^K g_k(x_k^{t+1}) \bigg\|^2 \\
    & \overset{(a)}{=}  \mathbb{E} \bigg\|\frac{1}{K} \sum_{k = 1}^K \Big[  y_k^{t+1} - g_k(x_k^{t+1})\Big] \bigg\|^2 \\
   & \overset{(b)}{=} \mathbb{E} \bigg\| \frac{1}{K} \sum_{k = 1}^K \Big[ (1 - \beta^{t+1}) \Big( y_k^{t} + \frac{1}{|b^{t+1}_{g_k}|} \sum_{i\in b^{t+1}_{g_k}} g_k(x_k^{t+1}; \zeta_{k,i}^{t+1}) - \frac{1}{|b^{t+1}_{g_k}|} \sum_{i\in b^{t+1}_{g_k}} g_k(x_k^{t}; \zeta_{k,i}^{t+1})  \Big)  \\
   & \qquad \qquad \qquad \qquad \qquad \qquad \qquad \qquad \qquad      + \frac{\beta^{t+1}}{|b^{t+1}_{g_k}|} \sum_{i\in b^{t+1}_{g_k}} g_k(x^{t+1}_k, \zeta_{k,i}^{t+1})  - g_k(x_k^{t+1}) \Big]  \bigg\|^2 \\
   & \overset{(c)}{=} (1 - \beta^{t+1})^2 ~\mathbb{E} \bigg\| \frac{1}{K} \sum_{k = 1}^K \Big[  y_k^{t }  - g_k(x_k^{t }) \Big] \bigg\|^2 \\
   &  + 
  \Ebb \bigg\| \frac{1}{K} \sum_{k = 1}^K \bigg[  (1 - \beta^{t+1}) \Big[ (g_k(x_k^{t}) - g_k(x_k^{t+1})) -  \frac{1}{|b^{t+1}_{g_k}|} \sum_{i \in b^{t+1}_{g_k}} \big(g_k(x_k^{t}; \zeta_{k,i}^{t+1}) 
 - g_k(x_k^{t+1} ; \zeta_{k,i}^{t+1})   \big)\Big] 
 \\
 &   \qquad    \qquad \qquad   \qquad \qquad \qquad \qquad  \qquad  + \beta^{t+1} \bigg( \frac{1}{|b^{t+1}_{g_k}|} \sum_{i \in b^{t+1}_{g_k}} g_k(x_k^{t+1}; \zeta_{k,i}^{t+1}) - g_k(x_k^{t+1}) \bigg)  \bigg] \bigg\|^2 \\ \nonumber\\\nonumber\\
   & \overset{(d)}{\leq} (1 - \beta^{t+1})^2~ \mathbb{E} \bigg\|   \bar{y}^{t}  - \frac{1}{K} \sum_{k = 1}^K g_k(x_k^{t}) \bigg\|^2 + \frac{2 (\beta^{t+1})^2 }{|b_g| K } \sigma_g^2  \\
   & \quad    + \frac{2 (1 - \beta^{t+1})^2 }{K^2 } \sum_{k = 1}^K \frac{1}{|b_{g}|^2} \sum_{i \in b^{t+1}_{g_k}}
   \Ebb \big\|          (g_k(x_k^{t}) - g(x_k^{t+1})) -     \big(g_k(x_k^{t}; \zeta_{k,i}^{t+1}) 
 - g_k(x_k^{t+1} ; \zeta_{k,i}^{t+1})   \big)   \big\|^2 \\
   & \overset{(e)}{\leq} (1 - \beta^{t+1})^2 ~\mathbb{E} \bigg\|   \bar{y}^{t}  - \frac{1}{K} \sum_{k = 1}^K g_k(x_k^{t}) \bigg\|^2 + \frac{2 (\beta^{t+1})^2 }{|b_g| K } \sigma_g^2  \\
   & \qquad \qquad \qquad \qquad   +   \frac{2 (1 - \beta^{t+1})^2}{K^2 } \sum_{k = 1}^K \frac{1}{|b_{g}|^2} \sum_{i \in b^{t+1}_{g_k}}
   \Ebb \big\|               g_k(x_k^{t}; \zeta_{k,i}^{t+1}) 
 - g_k(x_k^{t+1} ; \zeta_{k,i}^{t+1})      \big\|^2  \\
   & \overset{(f)}{\leq} (1 - \beta^{t+1})^2~ \mathbb{E} \bigg\|   \bar{y}^{t}  - \frac{1}{K} \sum_{k = 1}^K g_k(x_k^{t}) \bigg\|^2 + \frac{2 (\beta^{t+1})^2 }{|b_g| K } \sigma_g^2   + 
  \frac{2 (1 - \beta^{t+1})^2 B_g^2}{|b_g| K^2} \sum_{k = 1}^K   \Ebb \big\|         x_k^{t+1} -  x_k^{t}   \big\|^2 \\
   & \overset{(g)}{\leq} (1 - \beta^{t+1})^2  \mathbb{E} \bigg\|   \bar{y}^{t}  - \frac{1}{K} \sum_{k = 1}^K g_k(x_k^{t}) \bigg\|^2 + \frac{2 (\beta^{t+1})^2 }{|b_g| K } \sigma_g^2  \\
   & \qquad \qquad \qquad \qquad \qquad \qquad \qquad  + 
  \frac{2 (\eta^{t})^2 (1 - \beta^{t+1})^2 B_g^2}{|b_g| K^2} \sum_{k = 1}^K \underbrace{\Ebb \big\|  \nabla \Phi_k(x_k^{t}; \bar{\xi}_k^{t}) \big\|^2}_{\text{Term XI}},
\end{align*}}
where $(a)$ follows from the definition of $\bar{y}^{t+1}$; $(b)$ uses the update rule (\ref{Eq: Update_Y_FL}) for $y_k^{t+1}$; $(c)$ results from adding and subtracting $(1 - \beta^{t + 1}) g_k(x_k^{t})$ and utilizing the fact that the second term in the expression has zero-mean which follows from Assumption \ref{Ass: BoundedVar}; $(d)$ uses Young's inequality, Assumption \ref{Ass: BoundedVar} and by choosing $|b_h^t| = |b_h|$ and $|b_{g_k}^t| = |b_g|$ for all $k \in [K]$ and $t \in [T]$; $(e)$ results from the fact that for a random variable $X$, we have $\Ebb \| X - \Ebb[X] \|^2 \leq \Ebb \| X \|^2$; $(f)$ uses the mean-squared Lipschitzness of $g_k(\cdot)$ in Assumption \ref{Ass: Lip}; finally $(g)$ results from the update rule of Algorithm \ref{Algo: FL}. 

Next, we bound Term XI below
\begin{align*}
   \text{Term XI} & \coloneqq \Ebb \big\|  \nabla \Phi_k(x_k^{t}; \bar{\xi}_k^{t}) \big\|^2 \\
   & \overset{(a)}{\leq} 2 \Ebb \big\|  \nabla \Phi_k(x_k^{t}; \bar{\xi}_k^t) - \Ebb [\nabla \Phi_k(x_k^{t}; \bar{\xi}_k^t)| \mathcal{F}^t] \big\|^2  + 2 \Ebb \big\|    \Ebb [\nabla \Phi_k(x_k^{t}; \bar{\xi}_k^t)| \mathcal{F}^t] \big\|^2 \\
  & \overset{(b)}{\leq} \frac{2  \sigma_h^2}{  |b_{h}|} + \frac{2 \sigma_g^2 B_f^2}{  |b_{g}|} + 4 \underbrace{ \Ebb \bigg\|    \Ebb [\nabla \Phi_k(x_k^{t}; \bar{\xi}_k^t)| \mathcal{F}^t] 
 -  \frac{1}{K} \sum_{k = 1}^K \Ebb [\nabla \Phi_k(x_k^{t}; \bar{\xi}_k^t)| \mathcal{F}^t]  \bigg\|^2}_{\text{Term XII}} \\
 &  \qquad \qquad \qquad \qquad  \qquad \qquad \qquad \qquad + 4 \Ebb \bigg\|    \frac{1}{K} \sum_{k = 1}^K \Ebb [\nabla \Phi_k(x_k^{t}; \bar{\xi}_k^t)| \mathcal{F}^t] \bigg\|^2 ,
\end{align*}
where $(a)$ results from the application of Young's inequality and $(b)$ results from Assumptions \ref{Ass: Lip} and \ref{Ass: BoundedVar} along with the application of Young's inequality.

Next, we bound Term XII in the above expression. 
{\allowdisplaybreaks
\begin{align*}
 & \text{Term XII}  \coloneqq   \Ebb \bigg\|    \Ebb [\nabla \Phi_k(x_k^{t}; \bar{\xi}_k^t)| \mathcal{F}^t] 
 -  \frac{1}{K} \sum_{k = 1}^K \Ebb [ \nabla \Phi_k(x_k^{t}; \bar{\xi}_k^t)| \mathcal{F}^t]  \bigg\|^2 \\
 & \overset{(a)}{=}   \Ebb \bigg\| \nabla h_k(x_k^t) +  \nabla g_k(x_k^t) \nabla f(\bar{y}^t)    
 - \bigg[ \frac{1}{K} \sum_{k = 1}^K \big( \nabla h_k(x_k^t) +  \nabla g_k(x_k^t) \nabla f(\bar{y}^t) \big) \bigg]  \bigg\|^2 \\
 & \overset{(b)}{\leq} 2 \Ebb \bigg\| \nabla h_k(x_k^t)  - \frac{1}{K} \sum_{k = 1}^K   \nabla h_k(x_k^t)    \bigg\|^2 + 2 \Ebb \bigg\|    \nabla g_k(x_k^t) \nabla f(\bar{y}^t)    
 -   \frac{1}{K} \sum_{k = 1}^K   \nabla g_k(x_k^t) \nabla f(\bar{y}^t)   \bigg]  \bigg\|^2 \\
 & \overset{(c)}{\leq} 2 \underbrace{\Ebb \bigg\| \nabla h_k(x_k^t)  - \frac{1}{K} \sum_{k = 1}^K   \nabla h_k(x_k^t)    \bigg\|^2}_{\text{Term IX}} + 2 B_f^2 \underbrace{\Ebb \bigg\|    \nabla g_k(x_k^t)      
 -   \frac{1}{K} \sum_{k = 1}^K   \nabla g_k(x_k^t)     \bigg] \bigg\|^2}_{\text{Term X}}\\
 & \overset{(d)}{\leq} (6 L_h^2 + 6 B_f^2 L_g^2 ) \Ebb \big\| x^t_k  -     \bar{x}^t \big\|^2 +\frac{6 L_h^2 + 6 B_f^2 L_g^2}{K} \sum_{k = 1}^K \Ebb \big\| x^t_k  -     \bar{x}^t \big\|^2  + 6 \Delta_h^2 + 6 B_f^2 \Delta_g^2  
\end{align*}}
where $(a)$ above uses the definition of $\nabla \Phi_k (x_k^t; \bar{\xi}_k^t)$ in (\ref{Eq: SG_FL}) and Assumption \ref{Ass: BoundedVar}; $(b)$ results from the application of Young's inequality; $(c)$ utilized Assumtion \ref{Ass: Lip}; finally, $(d)$ results from the application of Assumptions \ref{Ass: Lip} and \ref{Ass: BoundedHetero}.

Replacing in the upper bound for Term XI, we get
\begin{align*}
     \text{Term XI} & \leq   4 \Ebb \bigg\|    \frac{1}{K} \sum_{k = 1}^K \Ebb [ \Phi_k(x_k^{t}; \bar{\xi}_k^t)| \mathcal{F}^t] \bigg\|^2 +  (24 L_h^2 + 24 B_f^2 L_g^2 ) \Ebb \big\| x^t_k  -     \bar{x}^t \big\|^2 
    \\
   & \qquad       + \frac{24 L_h^2 + 24 B_f^2 L_g^2}{K} \sum_{k = 1}^K \Ebb \big\| x^t_k  -     \bar{x}^t \big\|^2  + \frac{2  \sigma_h^2}{  |b_{h}|} + \frac{2 \sigma_g^2 B_f^2}{  |b_{g}|} + 24 \Delta_h^2 + 24 B_f^2 \Delta_g^2 .
\end{align*}
Substituting the bound on Term XI in the bound of Term V, we get
\begin{align*}
   & \Ebb \bigg\| \bar{y}^{t+1} - \frac{1}{K} \sum_{k = 1}^K g_k(x_k^{t+1}) \bigg\|^2 \\ & {\leq} (1 - \beta^{t+1})^2~  \mathbb{E} \bigg\|   \bar{y}^{t}  - \frac{1}{K} \sum_{k = 1}^K g_k(x_k^{t}) \bigg\|^2 +  \frac{8 (\eta^t)^2 (1 - \beta^{t+1})^2 B_g^2}{|b_g| K} \Ebb \bigg\|    \frac{1}{K} \sum_{k = 1}^K \Ebb [ \Phi_k(x_k^{t}; \bar{\xi}_k^t)| \mathcal{F}^t] \bigg\|^2 \\
  & + \frac{ (\eta^t)^2 (1 - \beta^{t+1})^2 B_g^2 (96 L_h^2 + 96 B_f^2 L_g^2) }{|b_g| K^2}   \sum_{k = 1}^K \Ebb \big\| x^t_k  -     \bar{x}^t \big\|^2 +  \frac{4 (\eta^t)^2 (1 - \beta^{t+1})^2 B_g^2}{|b_h|  K} \sigma_h^2  \\
  &  +  \frac{2 (\beta^{t+1})^2 + 4 (\eta^t)^2 (1 - \beta^{t+1})^2 B_g^2 B_f^2}{|b_g| K} \sigma_g^2  +  \frac{48 (\eta^t)^2 (1 - \beta^{t+1})^2 B_g^2}{|b_g| K} \Delta_h^2 +   \frac{48 (\eta^t)^2 (1 - \beta^{t+1})^2 B_f^2 B_g^2}{|b_g| K} \Delta_g^2.
\end{align*}
Therefore, we have proof of Lemma. 
\end{proof}
Next, we show descent in the potential function specially designed to show convergence of Algorithm \ref{Algo: FL}. For this purpose, we define the potential function as
\begin{align}
    \label{Eq: Potential_Fn}
    V^t = \Ebb [\Phi(\bar{x}^t)] + \Ebb \bigg\| \bar{y}^t - \frac{1}{K} \sum_{k = 1}^K g_k(x_k^t) \bigg\|^2  .
\end{align}
Next, we derive the descent in the potential function. 
\begin{lem}[{\bf Descent in Potential Function}]
\label{lem: Descent_Potential}
    Under Assumptions \ref{Ass: Lip}-\ref{Ass: BoundedHetero} with the choice of momentum-parameter $\beta^{t+1} =   c_\beta \eta^t$ with $c_\beta = 4 B_g^4 L_f^2$ where step-size $\eta^t$ is chosen such that 
    \begin{align*}
        \eta^t \leq \bigg\{\frac{|b_g| K}{2 (L_\Phi |b_g| K + 8 B_g^2)}, \frac{|b_g| K \big( L_h^2 + 2B_f^2 L_g^2 + 4B_g^4 L_f^2 \big)}{B_g^2 \big(96 L_h^2 + 96 B_f^2 L_g^2 \big)} \bigg\}
    \end{align*}
    the iterates generated by Algorithm \ref{Algo: FL} satisfy
    \begin{align*}
        V^{t + 1} - V^t &  {\leq} - \frac{\eta^t}{2} \Ebb \big\|\nabla \Phi(\bar{x}^{t }) \big\|^2   
    +  \eta^t   \big(  2 L_h^2 + 4 B_f^2 L_g^2 + 8 B_g^4 L_F^2 \big)  \frac{1}{K} \sum_{k = 1}^K   \Ebb  \|     x_k^t -   \bar{x}^t \|^2 \\
 &\qquad   +  \frac{2  (\eta^t)^2 L_{\Phi}}{K |b_{h}|} \sigma_h^2 +   \frac{4 (\eta^t)^2  B_g^2 }{|b_h|  K} \sigma_h^2   +  \frac{2  (\eta^t)^2 L_{\Phi}  B_f^2}{ |b_{g}| K} \sigma_g^2   +  \frac{(\eta^t)^2 (2 c_\beta^2 + 4  B_g^2 B_f^2 )}{|b_g| K} \sigma_g^2\\
 & \qquad \qquad \qquad \qquad \qquad \qquad \qquad \qquad \qquad +  \frac{48 (\eta^t)^2 B_g^2}{|b_g| K} \Delta_h^2 +   \frac{48 (\eta^t)^2   B_f^2 B_g^2}{|b_g| K} \Delta_g^2.
    \end{align*}
\end{lem}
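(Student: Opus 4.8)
The plan is to add the two one-step estimates already established — the function-value descent of Lemma~\ref{Lem: Descent_Phi} and the inner-function tracking recursion of Lemma~\ref{Lem: Descent_Y} — so that the potential $V^t$ from (\ref{Eq: Potential_Fn}) telescopes cleanly. Writing $V^{t+1}-V^t = \Ebb[\Phi(\bar{x}^{t+1})-\Phi(\bar{x}^t)] + \big(\Ebb\|\bar{y}^{t+1}-\tfrac1K\sum_k g_k(x_k^{t+1})\|^2 - \Ebb\|\bar{y}^t - \tfrac1K\sum_k g_k(x_k^t)\|^2\big)$, I would substitute the right-hand sides of the two lemmas and group the resulting terms by the quantity they multiply.

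The crucial cancellation is the tracking-error term. Lemma~\ref{Lem: Descent_Phi} leaves the positive contribution $4B_g^4 L_f^2\,\eta^t\,\Ebb\|\bar{y}^t-\tfrac1K\sum_k g_k(x_k^t)\|^2$ (its Term~V), which would otherwise prevent a descent inequality. Lemma~\ref{Lem: Descent_Y} contributes $(1-\beta^{t+1})^2$ times the same gap at time $t$; using the elementary bound $(1-\beta^{t+1})^2-1 \le -\beta^{t+1}$ for $\beta^{t+1}\in(0,1)$, the net coefficient of the gap is at most $4B_g^4 L_f^2\,\eta^t - \beta^{t+1}$. The choice $\beta^{t+1}=c_\beta\eta^t$ with $c_\beta=4B_g^4L_f^2$ makes this coefficient exactly zero, removing the tracking error from the recursion.

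Next I would eliminate the conditional-gradient term $\Ebb\|\tfrac1K\sum_k\Ebb[\nabla\Phi_k(x_k^t;\bar{\xi}_k^t)\mid\mathcal F^t]\|^2$, which appears with the negative coefficient $-(\eta^t/2-(\eta^t)^2 L_\Phi)$ in Lemma~\ref{Lem: Descent_Phi} and the positive coefficient $8(\eta^t)^2(1-\beta^{t+1})^2B_g^2/(|b_g|K)$ in Lemma~\ref{Lem: Descent_Y}. Bounding $(1-\beta^{t+1})^2\le1$ and imposing the first step-size condition $\eta^t\le |b_g|K/(2(L_\Phi|b_g|K+8B_g^2))$ makes the combined coefficient nonpositive, so this term can be dropped. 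Similarly, the client-drift term $\tfrac1K\sum_k\Ebb\|x_k^t-\bar{x}^t\|^2$ enters with coefficient $\eta^t(L_h^2+2B_f^2L_g^2+4B_g^4L_f^2)$ from Lemma~\ref{Lem: Descent_Phi} and with the additional order-$(\eta^t)^2$ coefficient $(\eta^t)^2 B_g^2(96L_h^2+96B_f^2L_g^2)/(|b_g|K)$ from Lemma~\ref{Lem: Descent_Y}; the second step-size condition is exactly what makes the latter no larger than the former, so their sum is bounded by $\eta^t(2L_h^2+4B_f^2L_g^2+8B_g^4L_f^2)$ as in the statement. Collecting the remaining variance and heterogeneity terms (again using $(1-\beta^{t+1})^2\le1$ and $(\beta^{t+1})^2=c_\beta^2(\eta^t)^2$) then yields the stated $\sigma_h^2,\sigma_g^2,\Delta_h^2,\Delta_g^2$ constants.

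The routine but delicate part is the bookkeeping: keeping the noise and heterogeneity coefficients from the two lemmas aligned so that, after the two relaxations on $(1-\beta^{t+1})^2$, they sum exactly to the closed form in the statement. The only genuinely nontrivial design choice — and hence the main obstacle conceptually — is recognizing that $\beta^{t+1}$ must be tied to $\eta^t$ through precisely $c_\beta=4B_g^4L_f^2$ so as to annihilate Term~V; everything else follows from choosing $\eta^t$ small enough to render the two remaining coefficients nonpositive.
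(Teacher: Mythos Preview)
Your proposal is correct and follows essentially the same approach as the paper: write $V^{t+1}-V^t$ as the sum of the two lemma bounds, use $\beta^{t+1}=c_\beta\eta^t$ to kill the tracking-error term (via $(1-\beta^{t+1})^2-1\le -\beta^{t+1}$), and then use the two step-size constraints to make the conditional-gradient coefficient nonpositive and to absorb the order-$(\eta^t)^2$ client-drift contribution into the order-$\eta^t$ one. The remaining bookkeeping with $(1-\beta^{t+1})^2\le 1$ and $(\beta^{t+1})^2=c_\beta^2(\eta^t)^2$ matches the paper's proof exactly.
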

\begin{proof}
    From the definition of $V^t$ in (\ref{Eq: Potential_Fn}) and using Lemmas \ref{Lem: Descent_Phi} and \ref{Lem: Descent_Y}, we get
    \begin{align*}
        V^{t + 1} - V^t & = \Ebb [\Phi(\bar{x}^{t+1}) - \Phi(\bar{x}^t)] + \Ebb \bigg\| \bar{y}^{t+1} - \frac{1}{K} \sum_{k = 1}^K g_k(x_k^{t+1}) \bigg\|^2 - \Ebb \bigg\| \bar{y}^{t} - \frac{1}{K} \sum_{k = 1}^K g_k(x_k^{t}) \bigg\|^2 \\
        & \leq - \frac{\eta^t}{2} \Ebb \big\|\nabla \Phi(\bar{x}^{t }) \big\|^2  - \bigg( \frac{\eta^t}{2} - (\eta^t)^2 L_\Phi -\frac{8(\eta^t)^2 B_g^2}{|b_g|K} \bigg) \Ebb \bigg\| \frac{1}{K} \sum_{k = 1}^K \mathbb{E} \big[ \nabla \Phi_k(x_k^t ; \bar{\xi}_k^t) \big| \mathcal{F}^t \big] 
 \bigg\|^2 \nonumber \\
 &       + \bigg( \eta^t   \big(  L_h^2 + 2B_f^2 L_g^2 + 4 B_g^4 L_F^2 \big) + \frac{ (\eta^t)^2  B_g^2 (96 L_h^2 + 96 B_f^2 L_g^2) }{|b_g| K} \bigg)  \frac{1}{K} \sum_{k = 1}^K   \Ebb  \|     x_k^t -   \bar{x}^t \|^2 \\
 &    + \big( 4 B_g^4 L_f^2  \eta^t - \beta^{t+1} \big)  ~  \Ebb \bigg\| \bar{y}^t - \frac{1}{K} \sum_{k = 1}^K g_k(x_k^t) \bigg\|^2    +   \frac{2  (\eta^t)^2 L_{\Phi}}{K |b_{h}|} \sigma_h^2 +   \frac{4 (\eta^t)^2 B_g^2  }{|b_h|  K} \sigma_h^2 \\
  &          +  \frac{2  (\eta^t)^2 L_{\Phi}  B_f^2 }{ |b_{g}| K} \sigma_g^2   +  \frac{2 (\beta^{t+1})^2 + 4 (\eta^t)^2  B_g^2 B_f^2 }{|b_g| K} \sigma_g^2  +  \frac{48 (\eta^t)^2 B_g^2}{|b_g| K} \Delta_h^2 +   \frac{48 (\eta^t)^2   B_f^2 B_g^2}{|b_g| K} \Delta_g^2 \\
  & \overset{(a)}{\leq} - \frac{\eta^t}{2} \Ebb \big\|\nabla \Phi(\bar{x}^{t }) \big\|^2   
    +  \eta^t   \big(  2 L_h^2 + 4 B_f^2 L_g^2 + 8 B_g^4 L_F^2 \big)  \frac{1}{K} \sum_{k = 1}^K   \Ebb  \|     x_k^t -   \bar{x}^t \|^2 \\
 &\qquad   +  \frac{2  (\eta^t)^2 L_{\Phi}}{K |b_{h}|} \sigma_h^2 +   \frac{4 (\eta^t)^2  B_g^2 }{|b_h|  K} \sigma_h^2   +  \frac{2  (\eta^t)^2 L_{\Phi}  B_f^2}{ |b_{g}| K} \sigma_g^2   +  \frac{(\eta^t)^2 (2 c_\beta^2 + 4  B_g^2 B_f^2 )}{|b_g| K} \sigma_g^2\\
 & \qquad \qquad \qquad \qquad \qquad \qquad \qquad \qquad \qquad +  \frac{48 (\eta^t)^2 B_g^2}{|b_g| K} \Delta_h^2 +   \frac{48 (\eta^t)^2   B_f^2 B_g^2}{|b_g| K} \Delta_g^2.
    \end{align*}
    where $(a)$ results from the choice of $\beta^{t}$ and $\eta_t$ given in the statement of the Lemma. 

    Therefore, we have the proof. 
\end{proof}

\begin{theorem}[{\bf Potential Function}] 
\label{Thm: Potential_Telescope}
Under Assumptions \ref{Ass: Lip}-\ref{Ass: BoundedHetero} and the choice of step-size $\eta^t = \eta$ such that we have
\begin{align*}
    \eta \leq \frac{1}{3 I  {\big( 24 L_h^2 + 24 B_f^2 L_g^2 \big)^{1/2}}}
\end{align*}
the iterates generated by Algorithm \ref{Algo: FL} satisfy
   \begin{align*} 
    & V^{T} - V^0  \leq - \frac{\eta}{2} \sum_{t = 0}^{T - 1} \Ebb \big\|\nabla \Phi(\bar{x}^{t }) \big\|^2  
    +  \eta^3 (I - 1)^2 \frac{ \big(  10 L_h^2 + 20 B_f^2 L_g^2 + 40 B_g^4 L_F^2 \big)}{|b_h|} \sigma_h^2 ~T \\
    & +  \frac{2 \eta^2 L_{\Phi}}{K |b_{h}|} \sigma_h^2 ~T +   \frac{4 \eta^2  B_g^2 }{|b_h|  K} \sigma_h^2~ T    +  \eta^3 (I - 1)^2 \frac{ \big(  10 B_f^2 L_h^2 + 20 B_f^4 L_g^2 + 40 B_f^2 B_g^4 L_F^2 \big)}{|b_g|} \sigma_g^2 ~T  \\
    & +  \frac{2 \eta^2 L_{\Phi}  B_f^2}{ |b_{g}| K} \sigma_g^2 ~T  +  \frac{\eta^2 (2 c_\beta^2 + 4   B_f^2 B_g^2)}{|b_g| K} \sigma_g^2 ~T    + \eta^3 (I - 1)^2   \big(  30   L_h^2 + 60 B_f^2 L_g^2 + 120  B_g^4 L_F^2 \big)  \Delta_h^2 ~T  \\
  &  +  \frac{48 \eta^2 B_g^2 }{|b_g| K} \Delta_h^2~T    + \eta^3 (I - 1)^2   \big(  30 B_f^2 L_h^2 + 60 B_f^4 L_g^2 + 120 B_f^2 B_g^4 L_F^2 \big)  \Delta_g^2 ~T  +   \frac{48 \eta^2   B_f^2 B_g^2}{|b_g| K} \Delta_g^2 ~T.   
   \end{align*}
   \end{theorem}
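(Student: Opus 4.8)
The plan is to obtain the stated inequality by telescoping the one-step potential decrease of Lemma~\ref{lem: Descent_Potential} over $t=0,\dots,T-1$, after first disposing of the client-drift contribution that its right-hand side carries. Writing $D^t \coloneqq \frac{1}{K}\sum_{k=1}^K\Ebb\|x_k^t-\bar x^t\|^2$ and setting $\eta^t=\eta$, Lemma~\ref{lem: Descent_Potential} reads
\[
V^{t+1}-V^t \le -\tfrac{\eta}{2}\Ebb\|\nabla\Phi(\bar x^t)\|^2 + \eta\,\big(2L_h^2+4B_f^2L_g^2+8B_g^4L_f^2\big)\,D^t + R^t,
\]
where $R^t$ gathers the remaining $\mathcal O(\eta^2/K)$ variance and heterogeneity terms. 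Summing over $t$ telescopes the left side to $V^T-V^0$ and leaves $-\tfrac{\eta}{2}\sum_t\Ebb\|\nabla\Phi(\bar x^t)\|^2 + \eta(2L_h^2+4B_f^2L_g^2+8B_g^4L_f^2)\sum_t D^t + \sum_t R^t$ on the right. Since $\sum_t R^t$ already yields the direct $\eta^2/K$ terms in the statement, the entire task reduces to controlling $\sum_{t=0}^{T-1}D^t$.

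Second, I would bound $\sum_t D^t$ using the recursion built into Lemma~\ref{Lem: Client_Drift}. With a constant step-size that lemma gives $D^t \le A\sum_{\ell=t_s}^{t-1}D^\ell + B$, where $t_s$ is the last synchronization index before $t$ (so $D^{t_s}=0$ because $x_k^{t_s}=\bar x^{t_s}$), $A\coloneqq \eta^2(I-1)(24L_h^2+24B_f^2L_g^2)$, and $B\coloneqq \eta^2(I-1)^2\big[\tfrac{4}{|b_h|}\sigma_h^2+\tfrac{4B_f^2}{|b_g|}\sigma_g^2+12\Delta_h^2+12B_f^2\Delta_g^2\big]$. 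Summing this recursion over one communication block $t\in\{t_s,\dots,t_s+I-1\}$, and noting that each $D^\ell$ is counted at most $(I-1)$ times in the resulting double sum, produces the self-referential inequality
\[
\sum_{t=t_s}^{t_s+I-1}D^t \le A(I-1)\sum_{t=t_s}^{t_s+I-1}D^t + I\,B.
\]
Here the step-size restriction $\eta\le \frac{1}{3I(24L_h^2+24B_f^2L_g^2)^{1/2}}$ enters precisely: it forces $A(I-1)=\eta^2(I-1)^2(24L_h^2+24B_f^2L_g^2)\le \tfrac19$, so the drift sum can be absorbed to the left, giving $\sum_{t=t_s}^{t_s+I-1}D^t \le \frac{IB}{1-A(I-1)}\le \frac54\,IB$. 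Summing over the blocks then yields the clean aggregate bound $\sum_{t=0}^{T-1}D^t \le \frac54\,T\,B$.

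Third, I would substitute $\sum_t D^t \le \frac54 TB$ into the telescoped inequality. Multiplying the drift coefficient $\eta\,(2L_h^2+4B_f^2L_g^2+8B_g^4L_f^2)=2\eta(L_h^2+2B_f^2L_g^2+4B_g^4L_f^2)$ by $\frac54 TB$ reproduces exactly the $\eta^3(I-1)^2$ terms of the statement; for instance, the $\Delta_h^2$ contribution is $2\eta(L_h^2+2B_f^2L_g^2+4B_g^4L_f^2)\cdot\frac54\,\eta^2(I-1)^2\cdot 12\Delta_h^2\,T = 30\,\eta^3(I-1)^2(L_h^2+2B_f^2L_g^2+4B_g^4L_f^2)\Delta_h^2\,T$, matching the listed $\eta^3(I-1)^2(30L_h^2+60B_f^2L_g^2+120B_g^4L_f^2)\Delta_h^2 T$; the $\sigma_h^2,\sigma_g^2,\Delta_g^2$ coefficients follow identically. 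Adding $\sum_t R^t$ supplies the remaining $\eta^2/K$ terms, and the two groups together give the claimed bound verbatim.

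The main obstacle is the circular dependence in Lemma~\ref{Lem: Client_Drift}: $D^t$ is controlled only in terms of earlier drifts $D^\ell$ from the same block, so a naive term-by-term substitution would leave an uncontrolled $\sum_\ell D^\ell$ on the right. The resolution, and the reason the somewhat opaque condition $\eta\le \frac{1}{3I(24L_h^2+24B_f^2L_g^2)^{1/2}}$ appears, is the blockwise self-bounding step, which is legitimate only when the aggregate recursion gain $A(I-1)$ accumulated over a full block of $I$ local updates is strictly below one; everything beyond that is bookkeeping of constants.
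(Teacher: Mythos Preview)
Your proposal is correct and follows essentially the same route as the paper: telescope Lemma~\ref{lem: Descent_Potential}, then bound the accumulated client drift $\sum_t D^t$ by summing Lemma~\ref{Lem: Client_Drift} over each communication block to obtain a self-referential inequality, and use the step-size restriction $\eta\le \frac{1}{3I(24L_h^2+24B_f^2L_g^2)^{1/2}}$ to make the recursion gain $\eta^2(I-1)^2(24L_h^2+24B_f^2L_g^2)\le 1/9$ so the drift sum can be absorbed. The only cosmetic difference is that the paper upper-bounds $(I-1)I$ by $I^2$ and then uses the factor $9/8$ (rounding $4\cdot 9/8=4.5$ and $12\cdot 9/8=13.5$ up to $5$ and $15$), whereas you keep $(I-1)^2$ and use the slightly looser factor $5/4$, which lands on the constants $5,15,10,30$ exactly; either way the final bound is identical.
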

   \begin{proof}
   Telescoping the sum of  Lemma \ref{lem: Descent_Potential} for $t = \{0, 1, \ldots, T - 1 \}$, we get
   \begin{align}
       V^{T} - V^0 & \leq - \frac{\eta}{2} \sum_{t = 0}^{T - 1} \Ebb \big\|\nabla \Phi(\bar{x}^{t }) \big\|^2  
    +  \eta   \big(  2 L_h^2 + 4 B_f^2 L_g^2 + 8 B_g^4 L_F^2 \big) \underbrace{\sum_{t = 0}^{T - 1}  \frac{1}{K} \sum_{k = 1}^K   \Ebb  \|     x_k^t -   \bar{x}^t \|^2}_{\text{Term XIII}}  \nonumber\\
 &\qquad    +  \frac{2 \eta^2 L_{\Phi}}{K |b_{h}|} \sigma_h^2 ~T +   \frac{4 \eta^2  B_g^2 }{|b_h|  K} \sigma_h^2~ T   +  \frac{2 \eta^2 L_{\Phi}  B_f^2}{ |b_{g}| K} \sigma_g^2 ~T  +  \frac{\eta^2 (2 c_\beta^2 + 4   B_g^2 B_f^2)}{|b_g| K} \sigma_g^2 ~T \nonumber\\
 & \qquad \qquad \qquad \qquad \qquad \qquad \qquad   +  \frac{48 \eta^2 B_g^2}{|b_g| K} \Delta_h^2~T +   \frac{48 \eta^2   B_f^2 B_g^2}{|b_g| K} \Delta_g^2 ~T.  
 \label{Eq: Telescope_Potential}
   \end{align}
   We bound Term XIII in (\ref{Eq: Telescope_Potential}) using Lemma (\ref{Lem: Client_Drift}). Note that we have from Lemma (\ref{Lem: Client_Drift}) 
    \begin{align*}
      &  \frac{1}{K} \sum_{k = 1}^K   \Ebb  \|     x_k^t -   \bar{x}^t \|^2
    \leq (I - 1) \Big( 24 L_h^2 + 24 B_f^2 L_g^2 \Big) \sum_{\ell = t_s}^{t-1}   \frac{(\eta^{\ell})^2}{K} \sum_{k = 1}^K 
 \Ebb \big\| x^\ell_k  -     \bar{x}^\ell \big\|^2 \\
 & \qquad  +  (I - 1) \bigg(   \frac{4 }{|b^t_{h}|} \sigma_h^2  +  \frac{4 B_f^2 }{|b^t_{g}|} \sigma_g^2 \bigg) \sum_{\ell = t_s}^{t-1}   (\eta^{\ell})^2   + 
 (I - 1) \Big(  12 \Delta_h^2 +  12 B_f^2 \Delta_g^2 \Big) \sum_{\ell = t_s}^{t-1}   (\eta^{\ell})^2 
   \end{align*}
   Summing the above from $t = t_s$ to $t_{s+1} - 1$, we get
    \begin{align*}
      & \sum_{t = t_s}^{t_{s+1} - 1} \frac{1}{K} \sum_{k = 1}^K   \Ebb  \|     x_k^t -   \bar{x}^t \|^2
    \overset{(a)}{\leq} \eta^2 (I - 1) \Big( 24 L_h^2 + 24 B_f^2 L_g^2 \Big) \sum_{t = t_s}^{t_{s+1} - 1} \sum_{\ell = t_s}^{t-1}   \frac{1}{K} \sum_{k = 1}^K 
 \Ebb \big\| x^\ell_k  -     \bar{x}^\ell \big\|^2 \\
 & \qquad\qquad\qquad + \eta^2  (I - 1)^2 I \bigg(   \frac{4 }{|b^t_{h}|} \sigma_h^2  +  \frac{4 B_f^2 }{|b^t_{g}|} \sigma_g^2 \bigg)     + 
 \eta^2 (I - 1)^2 I \Big(  12 \Delta_h^2 +  12 B_f^2 \Delta_g^2 \Big)  \\
 & \qquad\qquad\qquad \overset{(b)}{\leq} \eta^2 (I - 1) \Big( 24 L_h^2 + 24 B_f^2 L_g^2 \Big) \sum_{t = t_s}^{t_{s+1} - 1} \sum_{\ell = t_s}^{t_{s  + 1}-1}   \frac{1}{K} \sum_{k = 1}^K 
 \Ebb \big\| x^\ell_k  -     \bar{x}^\ell \big\|^2 \\
 & \qquad\qquad\qquad\qquad + \eta^2  (I - 1)^2 I \bigg(   \frac{4 }{|b^t_{h}|} \sigma_h^2  +  \frac{4 B_f^2 }{|b^t_{g}|} \sigma_g^2 \bigg)     + 
 \eta^2 (I - 1)^2 I \Big(  12 \Delta_h^2 +  12 B_f^2 \Delta_g^2 \Big) \\
 & \qquad\qquad\qquad \overset{(c)}{\leq} \eta^2 (I - 1) I \Big( 24 L_h^2 + 24 B_f^2 L_g^2 \Big) \sum_{t = t_s}^{t_{s+1} - 1}     \frac{1}{K} \sum_{k = 1}^K 
 \Ebb \big\| x^t_k  -     \bar{x}^t \big\|^2 \\
 & \qquad\qquad\qquad \qquad + \eta^2  (I - 1)^2 I \bigg(   \frac{4 }{|b^t_{h}|} \sigma_h^2  +  \frac{4 B_f^2 }{|b^t_{g}|} \sigma_g^2 \bigg)     + 
 \eta^2 (I - 1)^2 I \Big(  12 \Delta_h^2 +  12 B_f^2 \Delta_g^2 \Big)
   \end{align*}
where in $(a)$ we have used the fact that $\eta^t = \eta$ for all $t \in [T]$ and $(t - 1) - t_s \leq I - 1$ for $t \in [t_s, t_{s+1}-1]$; $(b)$ results from the fact that $t \leq t_{s + 1}$; finally, $(c)$ again uses the fact that $(t - 1) - t_s \leq I - 1$ for $t \in [t_s, t_{s+1}-1]$.

   Summing the above from $s = \{0,1, \ldots, S \}$ and using the fact that $S \times I = T - 1$, we get
   \begin{align*}
     &   \sum_{t = 0}^{T - 1} \frac{1}{K} \sum_{k = 1}^K   \Ebb  \|     x_k^t -   \bar{x}^t \|^2  \leq \eta^2  I^2 \Big( 24 L_h^2 + 24 B_f^2 L_g^2 \Big) \sum_{t = 0}^{T - 1}     \frac{1}{K} \sum_{k = 1}^K 
 \Ebb \big\| x^t_k  -     \bar{x}^t \big\|^2 \\
 & \qquad \quad  + \eta^2  (I - 1)^2  \bigg(   \frac{4 }{|b^t_{h}|} \sigma_h^2  +  \frac{4 B_f^2 }{|b^t_{g}|} \sigma_g^2 \bigg)~T     + 
 \eta^2 (I - 1)^2  \Big(  12 \Delta_h^2 +  12 B_f^2 \Delta_g^2 \Big) ~T.
   \end{align*}
   Rearranging the terms, we get
   \begin{align*}
     \Big(1 - \eta^2 I^2 \big( 24 L_h^2 + 24 B_f^2 L_g^2 \big) \Big)   \sum_{t = 0}^{T - 1} \frac{1}{K} \sum_{k = 1}^K   \Ebb  \|     x_k^t -   \bar{x}^t \|^2  & \leq \eta^2  (I - 1)^2  \bigg(   \frac{4 }{|b^t_{h}|} \sigma_h^2  +  \frac{4 B_f^2 }{|b^t_{g}|} \sigma_g^2 \bigg)~T    \\
  &      \qquad   + 
 \eta^2 (I - 1)^2  \Big(  12 \Delta_h^2 +  12 B_f^2 \Delta_g^2 \Big) ~T.
   \end{align*}
   Finally, choosing $\eta \leq \frac{1}{3 I  {\big( 24 L_h^2 + 24 B_f^2 L_g^2 \big)^{1/2}}}$, such that we have $ 1 - \eta^2 I^2 \big( 24 L_h^2 + 24 B_f^2 L_g^2 \big)   \geq 8/9$, utilizing this we get
 \begin{align*}
    \text{Term XIII} & \coloneqq   \sum_{t = 0}^{T - 1} \frac{1}{K} \sum_{k = 1}^K   \Ebb  \|     x_k^t -   \bar{x}^t \|^2 \\
    & \leq \eta^2  (I - 1)^2  \bigg(   \frac{5 }{ |b^t_{h}|} \sigma_h^2  +  \frac{5 B_f^2 }{  |b^t_{g}|} \sigma_g^2 \bigg)~T    
    + 
 \eta^2 (I - 1)^2  \Big(  15 \Delta_h^2 +  15 B_f^2 \Delta_g^2 \Big) ~T.
   \end{align*}
   Finally, substituting the bound on Term XIII in (\ref{Eq: Telescope_Potential}), we get
       \begin{align}
       & V^{T} - V^0   \leq - \frac{\eta}{2} \sum_{t = 0}^{T - 1} \Ebb \big\|\nabla \Phi(\bar{x}^{t }) \big\|^2  
    +  \eta^3 (I - 1)^2 \frac{ \big(  10 L_h^2 + 20 B_f^2 L_g^2 + 40 B_g^4 L_F^2 \big)}{|b_h|} \sigma_h^2 ~T \nonumber\\
    & +  \frac{2 \eta^2 L_{\Phi}}{K |b_{h}|} \sigma_h^2 ~T +   \frac{4 \eta^2  B_g^2 }{|b_h|  K} \sigma_h^2~ T    +  \eta^3 (I - 1)^2 \frac{ \big(  10 B_f^2 L_h^2 + 20 B_f^4 L_g^2 + 40 B_f^2 B_g^4 L_F^2 \big)}{|b_g|} \sigma_g^2 ~T  \nonumber \\
   & +  \frac{2 \eta^2 L_{\Phi}  B_f^2}{ |b_{g}| K} \sigma_g^2 ~T  +  \frac{\eta^2 (2 c_\beta^2 + 4   B_f^2 B_g^2)}{|b_g| K} \sigma_g^2 ~T     + \eta^3 (I - 1)^2   \big(  30   L_h^2 + 60 B_f^2 L_g^2 + 120  B_g^4 L_F^2 \big)  \Delta_h^2 ~T \nonumber \\
   & +  \frac{48 \eta^2 B_g^2 }{|b_g| K} \Delta_h^2~T   + \eta^3 (I - 1)^2   \big(  30 B_f^2 L_h^2 + 60 B_f^4 L_g^2 + 120 B_f^2 B_g^4 L_F^2 \big)  \Delta_g^2 ~T  +   \frac{48 \eta^2   B_f^2 B_g^2}{|b_g| K} \Delta_g^2 ~T. \nonumber 
   \end{align}
   Therefore, we have the proof. 
   \end{proof}
Now, we are finally ready to prove Theorem \ref{Thm: FL}.
   \begin{proof}
       Assuming $|b_h| = |b_g| = |b|$ and defining $\bar{L}_{f,g} \coloneqq 10 L_h^2 + B_f^2 L_g^2 + 40 B_g^4 L_f^2 $. Rearranging the terms in the expression of Theorem \ref{Thm: Potential_Telescope} and multiplying both sides by $2/\eta T$ we get
       \begin{align}
   &  \frac{1}{T} \sum_{t = 0}^{T - 1} \Ebb \big\|\nabla \Phi(\bar{x}^{t }) \big\|^2    \leq \frac{2 \big[\Phi(\bar{x}^0) - \Phi(x^\ast) + \big\| \bar{y}^{0} - g(\bar{x}^0) \big\|^2 \big]}{\eta T} +  \eta^2 (I - 1)^2 \bigg[ \frac{2 \bar{L}_{f,g}}{|b|} \sigma_h^2 + \frac{2 B_f^2 \bar{L}_{f,g}}{|b|} \sigma_g^2 \bigg] \nonumber\\
&    \qquad    +  \eta^2 (I - 1)^2 \Big[  {6\bar{L}_{f,g}}  \Delta_h^2 + {6B_f^2 \bar{L}_{f,g}}   \Delta_g^2 \Big]     +  \eta \bigg[ \frac{4 L_{\Phi} + 8 B_g^2}{|b|K}  \sigma_h^2 + \frac{4 L_\Phi B_f^2 + 4c_\beta^2+ 8 B_f^2 B_g^2} {|b|K} \sigma_g^2 \bigg] \nonumber \\
 & \qquad \qquad \qquad \qquad \qquad \qquad \qquad \qquad \qquad \qquad \qquad +  \eta \bigg[ \frac{96 B_g^2}{|b|K} \Delta_h^2 + \frac{96 B_f^2 B_g^2}{|b|K} \Delta_g^2 \bigg],\nonumber  
   \end{align}
   where the first term on the right follows from the fact that $\Phi(\bar{x}^T) \geq \Phi(x^\ast)$ and $\| \bar{y}^T - {1}/{K} \sum_{k = 1}^K g_k(x_k^T) \|^2 \geq 0$.

   Next, choosing $\eta = \sqrt{\frac{|b| K}{T}}$ then for $T \geq \big( 216 L_h^2 + 216 B_f^2 L_g^2 \big) I^2 |b| K$ such that $\eta \leq \frac{1}{3 I  {\big( 24 L_h^2 + 24 B_f^2 L_g^2 \big)^{1/2}}}$ in Theorem \ref{Thm: Potential_Telescope} is satisfied, we get the following 
    \begin{align}
   &  \frac{1}{T} \sum_{t = 0}^{T - 1} \Ebb \big\|\nabla \Phi(\bar{x}^{t }) \big\|^2    \leq \frac{2 \big[\Phi(\bar{x}^0) - \Phi(x^\ast) + \big\| \bar{y}^{0} - g(\bar{x}^0) \big\|^2 \big]}{\sqrt{|b| K T}} +  \frac{K (I - 1)^2}{T}  \Big[ {2 \bar{L}_{f,g}} \sigma_h^2 + {2 B_f^2 \bar{L}_{f,g}}  \sigma_g^2 \Big] \nonumber\\
&       +  \frac{|b|K (I - 1)^2}{T} \Big[  {6\bar{L}_{f,g}}  \Delta_h^2 + {6B_f^2 \bar{L}_{f,g}}   \Delta_g^2 \Big]     +  \frac{1}{\sqrt{|b| K T}} \bigg[ \big( {4 L_{\Phi} + 8B_g^2} \big)   \sigma_h^2 +  \big({4 L_\Phi B_f^2 + 4c_\beta^2+ 8 B_f^2 B_g^2} \big)   \sigma_g^2 \bigg] \nonumber \\
 & \qquad   \qquad \qquad \qquad \qquad \qquad \qquad \qquad \qquad +  \frac{1}{\sqrt{|b| K T}} \bigg[  {96 B_g^2} ~ \Delta_h^2 +  {96 B_f^2 B_g^2}  ~\Delta_g^2 \bigg],  \nonumber
   \end{align}
Explicitly choosing $I = T^{1/4} / (|b| K)^{3/4}$, we get
 \begin{align*}
         \Ebb \big\|\nabla \Phi(\bar{x}^{a(T)}) \big\|^2   & \leq \frac{2 \big[\Phi(\bar{x}^0) - \Phi(x^\ast) + \big\| \bar{y}^{0} - g(\bar{x}^0) \big\|^2 \big]}{\sqrt{|b| K T}} +   \frac{C_{\sigma_h}}{\sqrt{|b| K T}} \sigma_h^2  +  \frac{C_{\sigma_g}}{\sqrt{|b| K T}} \sigma_g^2 \\
         & \qquad \qquad \qquad \qquad \qquad \qquad \qquad \qquad  + \frac{C_{\Delta_h}}{\sqrt{|b| K T}} \Delta_h^2  +  \frac{C_{\Delta_g}}{\sqrt{|b| K T}} \Delta_g^2 .  \end{align*}
         where the constants $C_{\sigma_h}$, $C_{\sigma_g}$, $C_{\Delta_f}$, and $C_{\Delta_g}$ are defined as:
         \begin{align*}
         C_{\sigma_h} & =  2 \bar{L}_{f,g} + 4 L_{\Phi} + 8 B_g^2 \\
C_{\sigma_g} & = 2 B_f^2 \bar{L}_{f,g} + 4 L_\Phi B_f^2 + 4 c_\beta^2 + 8 B_f^2 B_g^2  \\
C_{\Delta_f} &  = 6 \bar{L}_{f,g} + 96 B_g^2
\\
C_{\Delta_g} & = 6 B_f^2 \bar{L}_{f,g} + 96 B_f^2 B_g^2 .
         \end{align*}
  The constant $c_\beta$ is defined in the statement of Lemma \ref{lem: Descent_Potential}.

  Hence, Theorem \ref{Thm: FL} is proved. 
   \end{proof}

\end{document}